\documentclass{article}

\usepackage{arxiv}
\usepackage{natbib}
\usepackage[utf8]{inputenc} 
\usepackage[utf8]{inputenc} 
\usepackage[T1]{fontenc}    
\usepackage{hyperref}       
\usepackage{url}            
\usepackage{booktabs}       
\usepackage{amsfonts}       
\usepackage{nicefrac}       
\usepackage{microtype}      
\usepackage{xcolor}         
\usepackage{microtype}      
\usepackage{graphicx}
\usepackage{subcaption}
\usepackage{bbm}
\usepackage{amsmath}
\numberwithin{equation}{section}
\usepackage{amssymb}
\usepackage{mathtools}
\usepackage{amsthm}
\usepackage{enumitem}
\usepackage{blindtext}
\usepackage{algorithm}
\usepackage{algpseudocode}

\newcommand{\mc}[1]{\mathcal{#1}}
\newcommand{\mbb}[1]{\mathbb{#1}}
\newcommand{\floor}[1]{\lfloor #1 \rfloor}
\newcommand{\ceil}[1]{\lceil #1 \rceil}
\newcommand{\norm}[1]{\lVert #1 \rVert}
\newcommand{\bignorm}[1]{\left\lVert #1 \right\rVert}

\newcommand{\abs}[1]{\left| #1 \right|}
\definecolor{leoncolour}{rgb}{0.4, 0.0, 0.8}
\definecolor{moritzcolor}{rgb}{0.2,0.6,0.6}

\title{Beyond the Independence Assumption: Finite-Sample Guarantees for Deep Q-Learning under \(\tau\)-Mixing}

\author{%
  Leon Halgryn\\
  Department of Applied Mathematics\\
  University of Twente\\
  \texttt{leon.halgryn@utwente.nl}\\
  \And
  Sophie Langer\\
  Faculty of Mathematics\\
  Ruhr-Universit\"{a}t Bochum\\
  \texttt{s.langer@rub.de}\\
  \AND
  Janusz M. Meylahn\\
  Department of Applied Mathematics\\
  University of Twente\\
  \texttt{j.m.meylahn@utwente.nl}\\
  \And
  E. Moritz Hahn\\
  Department of Applied Mathematics\\
  University of Twente\\
  \texttt{e.m.hahn@utwente.nl}\\
}

\theoremstyle{plain}
\newtheorem{theorem}{Theorem}[section]
\newtheorem{proposition}[theorem]{Proposition}
\newtheorem{lemma}[theorem]{Lemma}
\newtheorem{corollary}[theorem]{Corollary}
\theoremstyle{definition}
\newtheorem{definition}[theorem]{Definition}
\newtheorem{assumption}[theorem]{Assumption}
\theoremstyle{remark}
\newtheorem{remark}[theorem]{Remark}

\begin{document}
\maketitle

\begin{abstract}
Finite-sample analyses of deep Q-learning typically treat replayed data as independent, even though it is sampled from temporally dependent state-action trajectories. We study the Deep Q-networks (DQN) algorithm under explicit dependence by modelling the minibatches used for updating the network as \(\tau\)-mixing. We show that this assumption holds under certain dependence conditions on the underlying trajectories and the mechanism used to sample minibatches. Building on this observation, we extend statistical analyses of DQN with fully connected ReLU architectures to dependent data. We formulate each update as a nonparametric regression problem with $\tau$-mixing observations and derive finite-sample risk bounds under this dependence structure. Our results show that temporal dependence leads to a degradation in the statistical rate by inducing an additional dimensionality penalty in the rate exponent, reflecting the reduced effective sample size of $\tau$-mixing data. Moreover, we derive the sample complexity of DQN under \(\tau\)-mixing from these risk bounds. Finally, we empirically demonstrate on standard Gymnasium environments that the independence assumption is systematically violated and that replay sampling yields approximately exponentially decaying correlations, supporting our theoretical framework.
\end{abstract}

\section{Introduction}
\label{sec:intro}

Reinforcement learning (RL) addresses sequential decision-making problems in which data are generated through interaction with an environment. In value-based methods such as deep Q-learning, this amounts to learning the action-value (Q-)function from data using function approximation. Modern approaches rely on deep neural networks, enabling applications in high-dimensional settings. This paradigm has achieved remarkable empirical success, from Atari games with Deep Q-Networks (DQN) \citep{dqn_og} to large-scale game-playing systems such as AlphaZero \citep{silver_alphazero}, AlphaStar \citep{alphastar}, and MuZero \citep{muzero}. 

From the theoretical perspective, RL is well understood in the tabular setting, where sharp finite-sample guarantees are available (\citep{azar2017minimax, jin2018qlearning, tsitsiklis_q_learning, jaakkola_q_learning}), and extends to structured function approximation such as linear models (\citep{jin2020linear}). However, analyzing nonlinear function approximation, especially deep neural networks, remains challenging. A common approach in recent analyses of deep Q-learning (e.g., \citet{fan_dqn}) is to view each update as a supervised learning problem, where the Q-function is estimated from sample transitions. This reduction enables the use of classical empirical process techniques, but typically relies on the assumption that the training data are independent and identically distributed (i.i.d.). 

In practice, however, data are generated sequentially through agent-environment interaction and stored in a replay buffer, inducing temporal dependence. As a result, the i.i.d. assumption abstracts away the sequential nature of data collection in RL and creates a mismatch between theoretical guarantees and practical implications. This raises the question: 

\begin{center}
\textbf{
Do the same theoretical guarantees hold under temporal dependence?}
\end{center}
To address this, we model the state-action trajectory in the replay buffer as weakly dependent (either \(\beta\)- or \(\tau\)-mixing) and show that, under appropriate sampling schemes, the induced minibatches are \(\tau\)-mixing.
We establish statistical convergence rates for fitted Q-iteration with deep neural networks under temporally dependent data, showing that $\tau$-mixing dependence degrades the rate through a reduced effective sample size. Building on \citet{fan_dqn}, we extend their supervised learning framework to this dependent setting and quantify the resulting loss in statistical efficiency.
We focus on fully connected feedforward neural networks, which include sparse architectures as a special case. We consider Bellman operators for which, for the relevant class of Q-functions, the backup $\mathcal{T}Q$ admits a compositional or low-intrinsic-dimensional representation, or a combination of both. The low-dimensional structure may occur either at the input level, through dependence on a small subset of state-action coordinates, or internally, through low-dimensional intermediate representations in the composition. This is motivated by the structure of many RL problems: rewards often depend on a small number of task-relevant variables, transition dynamics frequently factor into local or modular mechanisms, and the reachable state-action space may concentrate near a low-dimensional latent manifold. Since the Bellman backup combines the reward, transition kernel, and continuation value, such environmental structure can be inherited by \(\mathcal{T}Q\). 
Empirically, we observe across several standard Gymnasium environments \citep{gymnasium} that RL trajectories and replay buffer data violate the independence assumption, while sampled minibatches exhibit approximately exponential decay of correlations, supporting our theoretical framework.
To summarize, our contributions are as follows:

\begin{itemize}
\item We extend finite-sample guarantees for DQN to dependent data and fully connected ReLU networks under structured Bellman operators that are compositional, low-dimensional, or both. We show that temporal dependence degrades statistical rates via an additional dimensionality penalty in the rate exponent, and derive the corresponding sample complexity under \(\tau\)-mixing.
\item We show that if the underlying state-action process is exponentially \(\beta\)- or \(\tau\)-mixing, then the sampled minibatches are exponentially \(\tau\)-mixing provided an order-preserving, duplicate-free sampler is used. 
Moreover, we show why the standard DQN sampling scheme (uniform with replacement) is problematic and introduce a contiguous block sampler that provides some control over the rate of decay of correlations in the minibatches.
\item We design an estimator for analyzing weak dependence and use this in experiments to (i) show that the i.i.d. assumption is violated for standard Gymnasium environments \citep{gymnasium}, and (ii) empirically assess our assumption on the minibatch dependence.
\end{itemize}

\paragraph{Related Work.}
A growing line of work establishes finite-sample guarantees for deep value-based RL.
\citet{fan_dqn} analyze neural fitted $Q$-iteration as an idealized form of DQN and obtain convergence rates to the optimal \(Q\)-function for sparse ReLU networks. 
Subsequent work studies convergence and sample complexity of neural temporal-difference (TD) and $Q$-learning, including overparameterized regimes and mean-field limits \citep{cai_neural_td,sirignano_spiliopoulos_rl_nn,zhang_mean_field_qlearning,xu_dqn,tian_neural_td,ke_improved_neural_td}. 
More directly related to DQN, \citet{zhang_dqn_convergence} provide convergence and sample-complexity guarantees under $\epsilon$-greedy exploration, while \citet{liu_dqn} establish regret bounds under smoothness assumptions. 

Most existing analyses assume i.i.d.\ replay samples or independence across updates.  Some works account for dependence at the trajectory level using Markovian or mixing assumptions \citep{xu_dqn,ke_improved_neural_td,antos_learning_2008}. Relatedly, \citet{shashua_buffer_analysis} analyze replay buffers as stochastic processes and show that specific replay sampling schemes can decorrelate trajectories. Instead, our work studies how the remaining temporal dependence affects the statistical convergence rates of replay-based DQN.

Our analysis builds on viewing DQN updates as supervised regression problems under dependent data. 
While neural network estimators are well understood in the i.i.d.\ setting \citep{schmidt_hieber_bounds,kohler2021rate,jiao_relu_approximation,nagler2026optimalneuralnetworkapproximation}, only limited results exist under dependence, including $\alpha$-mixing \citep{MaSafikani}, $\psi$-mixing \citep{KENGNE2024106163} and $\tau$-mixing \citep{liu_generalisation_tau_mixing_2025}. 
The latter is most closely related to our work, but focuses on sparse networks, more restrictive function classes, and is not embedded in RL. 
Our work combines these learning-theoretic tools with mixing-process techniques \citep{yu_rates,dedecker_new_2005, dedecker_coupling_2004,merlevede_bernstein_mixing} 
to analyze the dependent regression problems arising in DQN.

\paragraph{Notation.}
For \(n \in \mbb{N}\) and \(a,b \in \mbb{R}\), write \([n] := \{1,\dots,n\}\), \(a \lor b := \max\{a,b\}\), and \(a \land b := \min\{a,b\}\). For \(x \in \mbb{R}\), let \(\floor{x}\) and \(\ceil{x}\) denote the floor and ceiling of \(x\), respectively. For positive sequences \(\{f(n),g(n)\}_{n\ge 1}\), we write \(f(n)\lesssim g(n)\) if there exist \(C>0\) and \(n_0\in\mbb{N}\) such that \(f(n)\le C g(n)\) for all \(n\ge n_0\), and \(f(n)\asymp g(n)\) if both \(f(n)\lesssim g(n)\) and \(g(n)\lesssim f(n)\). For a measurable space \(\mc{X}\) we denote by \(\Delta(\mc{X})\) the probability simplex over \(\mc{X}\). Unless necessary, we do not distinguish between scalar- and vector-valued quantities.

\section{Theoretical framework}
\label{sec:setup}

\subsection{Markov decision processes, Bellman updates, and DQN}
\label{subsec:setup_mdp_dqn}

We consider a discounted Markov decision process (MDP) \(\mathfrak{M} = \langle \mc{S}, \mc{A}, P, R, \gamma \rangle,\)
where \(\mc{S}\) and \(\mc{A}\) are the state and action spaces, \(P : \mc{S}\times \mc{A} \to \Delta(\mc{S})\) is the transition kernel, \(R : \mc{S}\times \mc{A} \to \Delta(\mbb{R})\) is the reward distribution, and \(\gamma \in (0,1)\) is the discount factor. At each time step \(t\), the agent observes a reward \(R_t \sim R(\cdot \mid S_t,A_t)\) and transitions to a next state \(S_{t+1} \sim P(\cdot \mid S_t,A_t)\). We assume rewards are uniformly bounded, i.e., \(|R_t| \le R_{\max}\) almost surely for some \(R_{\max} > 0\).

A policy \(\pi : \mc{S} \to \Delta(\mc{A})\) assigns to each state a distribution over actions and induces the action-value function
\begin{align*}
    Q^\pi(s,a)
    &= \mbb{E}\Big[\sum_{t=0}^{\infty} \gamma^t R_t \,\Big|\, S_0=s, A_0=a, A_t \sim \pi(\cdot \mid S_t)\Big] = r(s,a) + \gamma \, \mbb{E}_{S' \sim P(\cdot \mid s,a)}\big[V^\pi(S')\big],
\end{align*}
where 
\begin{align*}
    r(s,a) := \mbb{E}_{R \sim R(\cdot \mid s,a)}[R], \qquad \text{and} \qquad V^\pi(s) = \mbb{E}_{A \sim \pi(\cdot \mid s)}\big[Q^\pi(s,A)\big].
\end{align*}
Since rewards are bounded by \(R_\mathrm{max}\), the \(Q\)-functions satisfy \(| Q^\pi(s, a)| \le \nicefrac{R_\mathrm{max}}{(1 - \gamma)} =: Q_\mathrm{max}\), for any policy \(\pi\).

Our goal is to approximate the optimal \(Q\)-function \(Q^*\), which is the unique fixed point of the Bellman operator \(\mc{T}\) defined as
\begin{align}
\label{eqn:bellman_operator_setup}
    (\mc{T}Q)(s,a)
    := r(s,a) + \gamma \, \mbb{E}_{S' \sim P(\cdot \mid s,a)}
    \Big[\max_{a' \in \mc{A}} Q(S',a')\Big],
\end{align}
i.e., \(Q^* = \mc{T}Q^*\).
Computing \(\mc{T}Q\) exactly is infeasible in large or continuous action spaces, or when the underlying transition dynamics are unknown as in model-free RL. This motivates the use of parametric function classes to approximate action-value functions. In this work, we approximate \(Q^*\) using fully connected feedforward neural networks defined as follows.

\begin{definition}
\label{networks}
Let $M \in \mathbb{N}$ denote the number of hidden layers and let $\mathbf{d}=(d_0,\dots,d_{M+1})$ be the layer widths, where $d_0$ and $d_{M+1}$ are the input and output dimensions, respectively. A fully connected neural network $f:\mathbb{R}^{d_0}\to\mathbb{R}^{d_{M+1}}$ is defined as
\begin{align}
\label{network}
    f(x)
    = W_{M+1}\,\sigma\Big(W_M\,\sigma\big(\cdots \sigma(W_1 x + b_1)\cdots + b_M\big)\Big),
\end{align}
where $W_\ell \in \mathbb{R}^{d_\ell \times d_{\ell-1}}$ and $b_\ell \in \mathbb{R}^{d_\ell}$ are the weights and biases of layer $\ell$, and $\sigma(u)=\max\{u,0\}$ is the ReLU activation applied componentwise. We denote the maximal width of the network by \(d_{\max} = \max_{\ell \in [M]}d_\ell\). 
For $B>0$, define
\begin{equation}
\label{eqn:network_class_setup}
        \begin{aligned}
            \mc{F}(M,d_{\max},B)
            := \Big\{
            f \text{ of the form \eqref{network}} : 
            \max_{\ell \in [L+1]} \|W_\ell\|_\infty \le B,\;
            \max_{\ell \in [L]} \|b_\ell\|_\infty \le B
            \Big\}.
        \end{aligned}
\end{equation}

\end{definition}
We now describe how such function approximators are trained in the DQN algorithm summarized in Algorithm~\ref{algo:dqn}. At environment time \(t\), the agent observes a transition \((S_t,A_t,R_t,S_t')\), which is stored in a replay buffer \(\mc{M} = \{(S_m,A_m,R_m,S_m') : m=1,\dots, |\mc{M}|\}.\)
Here, \(S_m' \sim P(\cdot \mid S_m,A_m)\) denotes the next state and \(R_m \sim R(\cdot \mid S_m,A_m)\) the corresponding reward.

At training time, DQN samples a minibatch $\{(S_i,A_i,R_i,S_i')\}_{i=1}^n$ of size $n$ from the replay buffer to update the network parameters \(\theta\). The regression targets are defined as
\begin{align}
\label{reg_target}
    Y_i
    &:= R_i + \gamma \max_{a \in \mc{A}}
    Q_{\theta^*}(S_i',a).
\end{align}
Here $Q_{\theta^*}$ denotes a target network whose parameters are periodically synchronized with those of the main network $Q_\theta$. The target network mitigates instability due to the moving-target nature of TD learning (see, e.g., \cite{fan_dqn}). The main network is updated by minimizing the squared loss
\begin{equation}
\label{eqn:squared_error_and_targets_setup}
\begin{aligned}
    L(\theta)
    &:= \frac{1}{n}\sum_{i=1}^n
    \Big[Y_i - Q_\theta(S_i,A_i)\Big]^2.
    \qquad
\end{aligned}
\end{equation}
To analyze this update, one can view it as a supervised learning problem. Let $X_i := (S_i,A_i)$ and fix a function $Q:\mc{X}\to\mbb{R}$, which above corresponds to the target network $Q_{\theta^*}$. With the regression targets in~\eqref{reg_target}
we obtain, for $X_i = x$,
\begin{align*}
    \mbb{E}[Y_i \mid X_i = x] = (\mc{T}Q)(x),
\end{align*}
so that the DQN update induces the regression model
\begin{align}
\label{eqn:one_stage_regression_setup}
    Y_i = (\mc{T}Q)(X_i) + \xi_i,
\end{align}
where $\xi_i := Y_i - (\mc{T}Q)(X_i)$ satisfies $\mbb{E}[\xi_i \mid X_i] = 0$. Since $Q$-values are bounded by $Q_{\max}$, the targets satisfy $|Y_i| \le Q_{\max}$. Consequently, \(|\xi_i| \le 2Q_{\max},\)
and in particular $\xi_i$ is sub-Gaussian with parameter proportional to $Q_{\max}$ (e.g., by Hoeffding's lemma \citep[Lemma~2.6]{Massart2007}).

This perspective underlies existing statistical analyses of deep Q-learning such as in \cite{fan_dqn}, which study the generalization error of such regression problems. A common assumption in these analyses is that the samples \(\{(X_i,Y_i)\}_{i=1}^n\) are i.i.d., enabling the use of classical empirical process techniques. In practice, however, the data are generated sequentially through agent-environment interaction and stored in a replay buffer, inducing temporal dependence. This raises the central question of this work: how does such dependence affect the statistical guarantees of deep Q-learning?

\subsection{Weak dependence}
\label{subsec:weak_dependence}

We model the replay-buffer data as a strictly stationary weakly dependent
state-action trajectory \((Z_t)_{t\in\mathbb Z}\) generated under a fixed
behavioral policy. This relaxes the i.i.d. sampling assumption used in prior
work such as \citet{fan_dqn}, while accounting for temporal dependence due to
sequential data collection. Let
\(\mathcal A_t:=\sigma(Z_s:s\le t)\). We say that \((Z_t)\) is exponentially
\(\beta\)-mixing if, for some \(C_\beta,c_\beta>0\),
\begin{align}
\label{eqn:beta_mixing_coefficients}
    \beta_Z(k)
    &:=
    \sup_{t\in\mathbb Z}
    \mathbb E\left[
    \sup_{A\in\sigma(Z_s:s\ge t+k)}
    \big|
        \mathbb P(A\mid \mathcal A_t)-\mathbb P(A)
    \big|
    \right]
    \le
    C_\beta e^{-c_\beta k},
    \qquad k\ge 1 .
\end{align}
We also use the \(\tau\)-mixing coefficient of \citet{dedecker_new_2005}.
For a \(\mathcal Z\)-valued random variable \(V\) on a metric space
\((\mathcal Z,d_{\mathcal Z})\) and a sub-\(\sigma\)-algebra
\(\mathcal A\), define
\begin{align}
\label{eqn:tau_general_setup}
    \tau(\mathcal A,V)
    :=
    \Bigg\|
    \sup_{h\in\Lambda_1(\mathcal Z)}
    \Big|
        \int h\,d\mu_{V\mid\mathcal A}
        -
        \int h\,d\mu_V
    \Big|
    \Bigg\|_1 ,
\end{align}
where \(\Lambda_1(\mathcal Z)\) is the class of real-valued \(1\)-Lipschitz
functions, \(\mu_{V\mid\mathcal A}\) is a conditional law of \(V\) given
\(\mathcal A\), and \(\mu_V\) is its marginal law. This is equivalent to the
coupling formulation of \citet{merlevede_bernstein_mixing}.
For a stationary process, set
\begin{align}
\label{eqn:tau_lag_setup}
    \tau_Z(k)
    :=
    \sup_{t\in\mathbb Z}
    \tau(\mathcal A_t,Z_{t+k}),
    \qquad k\ge 1 .
\end{align}
The process is exponentially \(\tau\)-mixing if
\(\tau_Z(k)\le C_\tau e^{-c_\tau k}\) for some \(C_\tau,c_\tau>0\). The
\(\tau\)-mixing condition is weaker than several classical mixing notions; in particular \(\psi\)-mixing \(\Rightarrow\) \(\phi\)-mixing \(\Rightarrow\) \(\beta\)-mixing \(\Rightarrow\) \(\alpha\)-mixing \(\Rightarrow\) \(\tau\)-mixing
(\citet[p. 112]{bradley_2005_mixing} and
\citet[Remark~1]{liu_generalisation_tau_mixing_2025}). Moreover, on bounded
metric spaces, exponential \(\beta\)-mixing implies exponential
\(\tau\)-mixing with the same exponential rate and a modified prefactor (see
Lemma~\ref{lemma:tau_mixing_from_beta_mixing}).
For a finite sequence \(Y_{1:n}\), define
\[
    \mathcal F_r^Y:=\sigma(Y_1,\dots,Y_r),
    \quad \text{and} \quad
    \tau_Y^n(\ell)
    :=
    \sup_{1\le r\le n-\ell}
    \tau(\mathcal F_r^Y,Y_{r+\ell}),
    \quad \text{for}\quad 1\le \ell\le n-1 .
\]
\begin{proposition}[Replay sampling preserves exponential weak dependence]
\label{prop:mixing_underlying_process_induces_mixing_minibatches}

Let $(X_t)_{t\in\mathbb Z}$ be strictly stationary with values in
$(\mathcal Z,d_{\mathcal Z})$, and denote the replay buffer by
\[
    \mathcal M=(Z_1,\dots,Z_M), \quad \text{with}\quad Z_m := X_{t_0+m-1}.
\]
Let $Y_{1:n}$ be the minibatch obtained from $\mathcal M$ by any order-preserving,
duplicate-free sampler, independent of $(X_t)$.

Then for every $1\le \ell\le n-1$, \(\tau_Y^n(\ell) \le \tau_X(\ell)\).
Consequently:

\begin{enumerate}
    \item[(i)] If $(X_t)$ is exponentially $\tau$-mixing,
\[
    \tau_X(k)\le C_\tau e^{-c_\tau k} \implies \tau_Y^n(\ell)\le C_\tau e^{-c_\tau \ell}.
\]
\item[(ii)] If $(\mathcal Z,d_{\mathcal Z})$ has finite diameter
$D_{\mathcal Z}<\infty$ and $(X_t)$ is exponentially $\beta$-mixing,
\[
    \beta_X(k)\le C_\beta e^{-c_\beta k} \implies \tau_Y^n(\ell)
    \le C_{\beta\tau} D_{\mathcal Z} C_\beta e^{-c_\beta \ell}.
\]
\end{enumerate}
\end{proposition}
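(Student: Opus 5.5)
Throughout, I condition on the sampler and then average. Write the sampled indices as $1\le I_1<I_2<\dots<I_n\le M$. This is possible because the sampler is order-preserving and duplicate-free, so the sequence is strictly increasing. Set $Y_j = Z_{I_j} = X_{t_0+I_j-1}$. Because the index vector $I=(I_1,\dots,I_n)$ is independent of $(X_t)$, I will first fix a deterministic realization $i=(i_1<\dots<i_n)$ and bound $\tau(\sigma(X_{t_0+i_1-1},\dots,X_{t_0+i_r-1}),X_{t_0+i_{r+\ell}-1})$. Strict monotonicity gives $i_{r+\ell}-i_r\ge \ell$. The conditioning $\sigma$-algebra is contained in $\mathcal A_{t}$ with $t=t_0+i_r-1$, and the target variable is $X_{t+k}$ with $k=i_{r+\ell}-i_r\ge\ell$.

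Two monotonicity facts are needed.

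First, $\tau(\mathcal B,V)\le\tau(\mathcal A,V)$ whenever $\mathcal B\subseteq\mathcal A$. To see this, note that $\int h\,d\mu_{V\mid\mathcal B}=\mathbb E[\int h\,d\mu_{V\mid\mathcal A}\mid\mathcal B]$. Hence the supremum over $h$ of the $\mathcal B$-version is at most the conditional expectation of the supremum of the $\mathcal A$-version. Taking $L^1$ norms gives the claim. One technical point: the supremum over the nonseparable class $\Lambda_1$ should be handled via a countable dense subclass, or via the Wasserstein/coupling characterization of \citet{dedecker_new_2005}.

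Second, $\tau_X(k)\le\tau_X(\ell)$ for $k\ge\ell$, i.e.\ the $\tau$-coefficient is nonincreasing in the lag. I expect this step to be the main obstacle. For a general stationary process, $\sup_t\tau(\mathcal A_t,X_{t+k})$ is not obviously monotone in $k$. I would try to derive it from the first fact: $\tau(\mathcal A_t,X_{t+k})=\tau(\mathcal A_t,X_{(t+k-\ell)+\ell})$, and $\mathcal A_t\subseteq\mathcal A_{t+k-\ell}$, so this is at most $\tau(\mathcal A_{t+k-\ell},X_{t+k-\ell+\ell})\le\tau_X(\ell)$. This works because enlarging the conditioning $\sigma$-algebra can only increase $\tau$, which is exactly the first fact. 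Combining both facts gives, for fixed $i$, a bound of $\tau_X(\ell)$ uniformly in $r$.

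Next I pass from fixed $i$ to random $I$. Since $I$ is independent of $X$, the conditional law of $Y_{r+\ell}$ given $\mathcal F_r^Y$ can be analysed on the enlarged $\sigma$-algebra $\sigma(I)\vee\mathcal F^Y_r$. On the event $\{I=i\}$, the conditional law of $Y_{r+\ell}$ given $\sigma(I)\vee\mathcal F^Y_r$ coincides with the conditional law of $X_{t_0+i_{r+\ell}-1}$ given the fixed-$i$ past, and the marginal law is $\mu_{X_0}$ by stationarity. Stationarity matters here: the marginal of $Y_{r+\ell}$ is $\mu_{X_0}$ regardless of $I$, so the centering term is the same across realizations. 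Therefore
\[
\tau(\sigma(I)\vee\mathcal F_r^Y,Y_{r+\ell})=\sum_i\mathbb P(I=i)\,\tau\big(\sigma(X_{t_0+i_1-1},\dots,X_{t_0+i_r-1}),X_{t_0+i_{r+\ell}-1}\big)\le\tau_X(\ell).
\]
Applying the first fact once more, with $\mathcal F_r^Y\subseteq\sigma(I)\vee\mathcal F_r^Y$, yields $\tau(\mathcal F_r^Y,Y_{r+\ell})\le\tau_X(\ell)$. Taking the supremum over $r$ gives $\tau_Y^n(\ell)\le\tau_X(\ell)$.

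The consequences then follow directly. Part (i) is immediate from the main inequality. Part (ii) follows by combining it with Lemma~\ref{lemma:tau_mixing_from_beta_mixing}, which gives $\tau_X(k)\le C_{\beta\tau}D_{\mathcal Z}\beta_X(k)\le C_{\beta\tau}D_{\mathcal Z}C_\beta e^{-c_\beta k}$ on a space of finite diameter.
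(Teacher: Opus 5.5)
Your proposal is correct and follows the same route as the paper. Like the paper, you condition on the realized strictly increasing indices, embed the minibatch past in the trajectory past, combine the index gap $i_{r+\ell}-i_r\ge\ell$ with monotonicity of $\tau$ in the $\sigma$-algebra, average over the independent sampler, and finish (ii) with Lemma~\ref{lemma:tau_mixing_from_beta_mixing}. Your two additions improve the rigor: the paper moves from lag $i_{(r+\ell)}-i_{(r)}$ to lag $\ell$ only through the decreasing bound $c_0e^{-c_1k}$, so your lag-monotonicity argument is what actually justifies the general inequality $\tau_Y^n(\ell)\le\tau_X(\ell)$, and your averaging via $\sigma(I)\vee\mathcal F_r^Y$, stationarity of the marginal, and a final monotonicity step makes explicit what the paper's conditioning-on-$G$ computation leaves implicit.
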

\begin{proof}
We defer the proof to Appendix~\ref{app:proofs}.
\end{proof}

Proposition~\ref{prop:mixing_underlying_process_induces_mixing_minibatches} shows that replay sampling preserves the temporal weak-dependence structure of the underlying trajectory: because order-preserving, duplicate-free sampling cannot decrease temporal separation between samples, minibatch dependencies decay at least as fast as those of the original process. Consequently, exponentially mixing trajectories induce exponentially mixing replay minibatches with the same decay rate.

\begin{corollary}[Common replay samplers]
\label{cor:common_replay_samplers}
The conclusions of Proposition~\ref{prop:mixing_underlying_process_induces_mixing_minibatches}
hold for:
(i) uniform sampling without replacement from the replay buffer, provided sampled indices are sorted after sampling, and
(ii) contiguous block sampling. 
\end{corollary}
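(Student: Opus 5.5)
The plan is to reduce the corollary to Proposition~\ref{prop:mixing_underlying_process_induces_mixing_minibatches}. That means checking, for each sampler, the two hypotheses the proposition needs: the sampler is independent of $(X_t)$, and it is order-preserving and duplicate-free. Formally, a sampler is a random index vector $I=(I_1,\dots,I_n)$ with values in $[M]^n$ and minibatch $Y_i = Z_{I_i}$. Order-preserving and duplicate-free means $I_1<I_2<\dots<I_n$ almost surely. Independence means the law of $I$ does not depend on $(X_t)$, i.e.\ $I$ is generated from auxiliary randomness independent of the trajectory. Once both properties hold, the bound $\tau_Y^n(\ell)\le\tau_X(\ell)$ and consequences (i) and (ii) follow directly from the proposition.

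\emph{Case (i).} In uniform sampling without replacement, an $n$-subset $\{J_1,\dots,J_n\}\subset[M]$ is drawn uniformly using an external random number generator, so it is independent of $(X_t)$. Without replacement means the $J_i$ are distinct, so the minibatch has no duplicates. Sorting gives $I_i:=J_{(i)}$, the order statistics, with $I_1<\dots<I_n$. Sorting is a deterministic function of $J$, so $I$ is still independent of $(X_t)$. One subtlety needs a remark: the unsorted minibatch is a permutation of $Y_{1:n}$. Since the regression loss in \eqref{eqn:squared_error_and_targets_setup} is permutation invariant, sorting does not change the estimator. It only fixes the indexing under which $\tau_Y^n$ is measured.

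\emph{Case (ii).} In contiguous block sampling, a starting index $U$ is drawn independently of $(X_t)$, for instance uniformly on $\{1,\dots,M-n+1\}$, and $I_i = U+i-1$. This is strictly increasing, hence order-preserving and duplicate-free, and independent of the data. If block sampling is understood as concatenating several disjoint blocks, I would order the blocks by starting index. Disjointness then gives strict monotonicity of the concatenated index vector. Here I would also note that the gaps $I_{r+\ell}-I_r$ are at least $\ell$, which is the mechanism behind the proposition.

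The main obstacle is making sure the notion of ``independent of $(X_t)$'' matches what the proposition uses. In practice, DQN samplers act on a buffer whose size $M$ and content window $t_0$ could depend on training history. I would handle this by conditioning on the buffer window, with $t_0$ and $M$ treated as fixed or independent of the trajectory, and by requiring that the index randomness be exogenous given these. The proposition's statement already fixes $\mathcal M$, so this conditioning is harmless. The rest of the argument is verifying monotonicity, which is immediate in both cases.
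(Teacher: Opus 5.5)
Your proposal is correct and follows the same route as the paper. Both arguments reduce to Proposition~\ref{prop:mixing_underlying_process_induces_mixing_minibatches} by checking that each sampler produces strictly increasing, duplicate-free indices. For the block sampler, both use the fact that disjoint blocks concatenated in temporal order give $u_{j+1}-u_j\ge 1$. Your explicit check that the sampler is independent of the trajectory, and your remark that sorting does not change the permutation-invariant loss, are small additions that the paper leaves implicit.
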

\begin{proof}
We show in Appendix~\ref{app:proofs} that both sampler fit into the framework of Proposition~\ref{prop:mixing_underlying_process_induces_mixing_minibatches}. The claim then follows immediately.
\end{proof}

\section{Theoretical results}
\label{sec:results}
In the following, we obtain an upper bound for $\|Q^{\pi_K} - Q^*\|_{1,\mu}$, where $Q^{\pi_K}$ is the action-value function corresponding to policy $\pi_K$ and $\mu \in \Delta(\mathcal{S} \times \mathcal{A})$ is the reference distribution. To derive non-trivial results, we need to impose further assumptions on the Bellman operator $\mathcal{T}$ describing our regression function in \eqref{eqn:one_stage_regression_setup}. In particular, we consider the class of regression functions, which are H\"older smooth according to the following definition.

\begin{definition}[H\"older smooth functions]
\label{defn:holder_smooth_functions_setup}
Let \(\mc{D}\subseteq\mbb{R}^d\), where \(d\in\mbb{N}\). A ball of radius \(H\) of \(\beta\)-H\"older smooth functions on \(\mc{D}\) is defined by
\begin{equation*}
        \begin{aligned}
            \mc{C}_d(\mc{D},\beta,H)\!=\!\Bigl\{f:\mc{D}\to\mbb{R}\,\Big|\!&\sum_{\mathbf{\alpha}:\|\mathbf{\alpha}\|_1 < \lfloor \beta \rfloor}
            \|\partial^{\mathbf{\alpha}}f\|_\infty +
            \hspace{-0.25cm}\sum_{\mathbf{\alpha}:\|\mathbf{\alpha}\|_1 = \lfloor \beta \rfloor}
            \sup_{\substack{x,y\in\mc{D} \\ x\neq y}}
            \frac{
                |\partial^{\mathbf{\alpha}}f(x)-\partial^{\mathbf{\alpha}}f(y)|
            }{
                \|x-y\|_\infty^{\beta-\lfloor\beta\rfloor}
            }
            \le H
            \Bigr\}.
        \end{aligned}
\end{equation*}
\end{definition}

We impose an additional structural assumption on the Bellman operator that extends the hierarchical compositional model considered in \cite{fan_dqn}. In particular, the resulting function class also captures compositional functions whose intermediate representations exhibit lower intrinsic dimensionality, quantified through the Minkowski dimension $\dim_M$ by measuring the scaling behavior of their covering numbers. A formal definition of $\dim_M$ is given in the Appendix, see Definition \ref{minkowksi}.

\begin{definition}[see Definition 4.1 in \cite{nagler2026optimalneuralnetworkapproximation}]
    \label{def1}
    A function \( f \colon \mathbb{R}^D \supset \mathcal{M} \to \mathbb{R} \) is called a \emph{compositional model of level}
    \(\ell \in \mathbb{N}\) with dimension vector  $\mathbf{D}=(D_0, \dots, D_\ell)$, where $D_0=D$ and $D_\ell=1$ and \emph{smoothness and intrinsic dimension parameter set}
    \[
        \mathcal{P} = \{(d_{ij}, s_{ij}) \in \mathbb{N} \times \mathbb{R}_{\geq 1} \colon i = 0, \dots, \ell,\ j = 1, \dots, D_{i}\},
    \]
    if there exist functions
        $g_i \colon [-C, C]^{D_i} \to [-C, C]^{D_{i+1}}$, $i = 0, \dots, \ell$,   for some $C>0$,
    such that $g_0 = \text{id}$ (the identity function) and
    \begin{align}
        \label{comp_main}
        f = g_{\ell} \circ g_{\ell-1} \circ \dots \circ g_0,
    \end{align}
    where \( g_i \) is a vector-valued function with components
        $g_i = (g_{i1}, \dots, g_{iD_{i+1}})^{\top}$
    satisfying the following conditions.
    \begin{itemize}
        \item[\textnormal{(C)}] (\emph{Coordinate sparsity}) There exists a subset $S_{ij} \subseteq \{1, \dots, D_i\}$ such that $g_{ij}(x)=g_{ij} \circ \pi_{S_{ij}}(x)$, where $\pi_{S_{ij}}(x) =(x_k)_{k \in S_{ij}}$ denotes the coordinate projection. 
        \item[\textnormal{(S)}] (\emph{Smoothness}) It holds that
        \(g_{ij} \in
        \mc{C}_{|S_{ij}|}([-C, C]^{|S_{ij}|},s_{ij},C).
        \)

        \item[\textnormal{(M)}] (\emph{Low Minkowski dimension}) The set $\mathcal{M}_{ij} := \pi_{S_{ij}} \circ g_{i - 1}(\mathcal{M})$  satisfies $\dim_M({\mathcal{M}_{ij}}) \leq d_{ij}$.
    \end{itemize}
    The corresponding class of compositional models is defined as 
    \begin{align}
    \label{comp_class_main}
        \mathcal{G}(\ell, \mathcal{P},  \mathbf{D}):=\{f \text{ of the form }\eqref{comp_main}\}.
    \end{align}
\end{definition}

Note that this structural assumption strictly generalizes the Bellman-completeness condition of \cite{fan_dqn}. In particular, their framework corresponds to the special case where all intermediate representations are supported on full-dimensional Euclidean domains, although effective dimensionality reduction may still arise through component functions depending only on subsets of their inputs. Our formulation further allows the intermediate representations themselves to be supported on sets of low Minkowski dimension, thereby capturing additional geometric low-dimensional structure.

For our main result we consider the following two function classes: Let
\begin{align}
\label{eqn:functions_classes_0}
    \mc{F}_0 := \mc{F}(M,d_{\max},Q_{\max}), \qquad \text{and }\qquad
    \mc{G}_0 :=
    \mathcal{G}(\ell, \mathcal{P}, \mathbf{D}).
\end{align}
(see Definition \ref{networks} and \ref{comp_main}). Then we set
\begin{align}
    \mc{F}
    &=
    \bigl\{
    f:\mc{S}\times\mc{A}\to\mbb{R}
    \;\big|\;
    f(\cdot,a)\in\mc{F}_0
    \text{ for every } a\in\mc{A}
    \bigr\}, \text{ and}
    \label{eqn:F_class}
    \\
    \mc{G}
    &=
    \bigl\{
    g:\mc{S}\times\mc{A}\to\mbb{R}
    \;\big|\;
    g(\cdot,a)\in\mc{G}_0
    \text{ for every } a\in\mc{A}
    \bigr\}.
    \label{eqn:G_class}
\end{align}

We now state the assumptions required in our analysis.
\begin{assumption}
\label{assumption:main}
We impose the following conditions:
\begin{enumerate}
    \item[(A1)]
    \textbf{Compactness:} the state space is $\mc{S}=[0,1]^d$ and the action space $\mc{A}$ is finite, so that $\mc{X}=\mc{S}\times\mc{A}$ is compact.
    \item[(A2)] \textbf{Exponential mixing:} the sampled minibatch $\{X_i\}_{i=1}^n$ forms a strictly stationary, exponentially $\tau$-mixing process, i.e., there exists constants \(C_\tau, c_1 >0\) such that
    \[
    \tau_X(k) \le C_{\tau} e^{-c_1 k}, \qquad k \ge 1,
    \]
    \item[(A3)] \textbf{Noise independence:} the regression noise in \eqref{eqn:one_stage_regression_setup} satisfies $\mbb{E}[\xi \mid X]=0$ and $\xi \perp X$.
    \item[(A4)] \textbf{Bellman completeness:} the target function $f_0 := \mc{T}Q$ belongs to the compositional H\"older class $\mc{G}$ in ~\eqref{eqn:G_class}, and the function class $\mc{F}$ in ~\eqref{eqn:F_class}  is closed under the Bellman operator, i.e., $\mc{T}f \in \mc{G}$ for all $f \in \mc{F}$.
    \item[(A5)] \textbf{Policy concentrability:} there exists a finite constant $\phi_{\mu,\sigma}<\infty$ controlling the mismatch between the reference distribution \(\mu\) and the state-action distributions induced by the policies encountered during learning.
\end{enumerate}
\end{assumption}

The compositional H\"older structure in (A4) reflects the hierarchical nature of value functions in high-dimensional problems and is widely used to model functions that can be efficiently approximated by deep neural networks (see \cite{schmidt_hieber_bounds, kohler2021rate}). The lower dimension intrinsic dimensionality is further motivated as, in many RL problems, the effective state dynamics and rewards are governed by a small number of underlying degrees of freedom, so that the Bellman target typically varies along a low-dimensional manifold embedded in the high-dimensional state space. The closure condition in (A4) ensures that the Bellman operator preserves this structure and is standard in analyses of deep Q-learning (see\ \citet{fan_dqn}); it also ensures that the regression function \(f_0\) remains in \(\mc G\) over each DQN update.
The concentrability condition in (A5) is common in the offline and batch RL literature; see e.g, \citet{fan_dqn, antos_learning_2008, chen_concentrability, munos_finite_time_bounds}. This assumption can be shown to hold when the underlying MDP satisfies some regularity conditions, see e.g., \citet{chen_concentrability, fan_dqn}.

\begin{theorem}
\label{thm:main_result}
Under Assumption~\ref{assumption:main}, let $\mathcal{G}$ be defined as in \eqref{eqn:G_class} with $\mathcal{G}_0=\mathcal{G}(\ell, \mathcal{P}, \mathbf{D})$ as in \eqref{comp_class_main}. Let \(Q^{\pi_K}\) be the \(Q\)-function after \(K\) iterations of DQN Algorithm~\ref{algo:dqn}, initialized arbitrarily, and let \(Q^*\) satisfy the Bellman optimality equation. Let \(\phi_{\mu,\sigma}<\infty\) be the concentrability constant in Assumption~\ref{assumption:main}(A5), independent of \(n,\mc F,\mc G\), and define
\begin{align}
    (d^*,s^*)\in  \arg\max_{(d,s)\in\mathcal P}\frac{d}{s}, \quad \alpha^*:= \frac{s^*}{s^*+d^*}. 
\end{align} Choose
\begin{align*}
    NL
    \asymp
    \left(\frac{n}{\log^3 n}\right)^{
        \frac{d^*}{4(d^*+s^*)}}
\end{align*}
 and let $\mathcal{F}$ be the class of neural networks defined in \eqref{eqn:F_class} with 
 \begin{align*}   \mathcal{F}_0=\mathcal{F}\!\left(c \big(L + L \sqrt{\log L / \log N}\big), c N, N\right)
\end{align*} for a sufficiently large constant $c=c(\mathbf{D}, \ell)>0$ depending on the dimension vector $\mathbf{D}$ and level $\ell$ of hierarchy in the composition. Then there exists a constant \(C>0\) such that
\begin{align}
\label{eqn:main_dqn_performance_bound}
    \norm{Q^*-Q^{\pi_K}}_{1,\mu}
    \le C\frac{\phi_{\mu,\sigma}}{(1-\gamma)^2} |\mc{A}|^{1/2} \cdot 
    n^{-\alpha^*/2} (\log^3(n))^{\alpha^*/2}+ \frac{4\gamma^{K+1}}{(1-\gamma)^2}R_{\max}.
\end{align}
\end{theorem}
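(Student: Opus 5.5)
We follow the two-stage strategy of \citet{fan_dqn}: an error-propagation step that turns the DQN performance gap into a weighted sum of one-step regression errors, and a one-step statistical step that controls each regression error under $\tau$-mixing.

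\emph{Error propagation.} Write $\widetilde Q_k$ for the network after the $k$-th update and $\varepsilon_k := \mc{T}\widetilde Q_{k-1} - \widetilde Q_k$. The standard argument of \citet{munos_finite_time_bounds} and \citet[Thm.~6.1]{fan_dqn} gives
\[
\norm{Q^*-Q^{\pi_K}}_{1,\mu}\le \frac{2\gamma\phi_{\mu,\sigma}}{(1-\gamma)^2}\max_k\norm{\varepsilon_k}_{\sigma}+\frac{4\gamma^{K+1}}{(1-\gamma)^2}R_{\max}.
\]
It uses only the Bellman operator and (A5), so it carries over verbatim. This reduces the claim to a bound on $\norm{\mc{T}\widetilde Q_{k-1}-\widetilde Q_k}_\sigma^2$ that holds uniformly over $k$. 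Uniformity is available because, by (A4), every target $f_0=\mc T\widetilde Q_{k-1}$ lies in $\mc G$, so the constants depend only on $\mc G$.

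\emph{One-step regression under $\tau$-mixing.} Fix $k$ and treat the update as the least-squares estimator $\hat f$ over $\mc F$ in the model \eqref{eqn:one_stage_regression_setup}. By (A2), which is justified by Proposition~\ref{prop:mixing_underlying_process_induces_mixing_minibatches}, the minibatch is exponentially $\tau$-mixing, and by (A3) the noise is bounded and independent of $X$. Summing over the $|\mc A|$ per-action networks produces the factor $|\mc A|^{1/2}$ once a square root is taken. We then bound $\mbb E\norm{\hat f-f_0}_\sigma^2$ by an oracle inequality of the form
\[
\mbb E\norm{\hat f-f_0}_\sigma^2\lesssim \inf_{f\in\mc F}\norm{f-f_0}_\infty^2+\frac{\log N(\delta,\mc F,\norm{\cdot}_\infty)\,\log^2 n}{\sqrt n}+\delta,
\]
with $\sqrt n$ rather than $n$ in the estimation term. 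The $\sqrt n$ comes from the Bernstein inequality for $\tau$-mixing sequences \citep{merlevede_bernstein_mixing}, applied via Lipschitz coupling \citep{dedecker_coupling_2004}. Because the ReLU networks and the squared loss on bounded data are Lipschitz, with Lipschitz constant polynomial in $N,L$, we can combine this with a blocking argument with block length $\asymp\log n$. We then union-bound over a $\delta$-net of $\mc F$, and the coupling error enters through $n\,\tau(\log n)$, which is negligible. The noise cross term $\frac1n\sum\xi_i(f-f_0)(X_i)$ is handled the same way, since $(X_i,\xi_i)$ inherits $\tau$-mixing from (A3).

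\emph{Approximation and balancing.} The approximation result of \citet[Thm.~4.x]{nagler2026optimalneuralnetworkapproximation} for $\mc G(\ell,\mc P,\mathbf D)$ with width $cN$ and depth $c(L+L\sqrt{\log L/\log N})$ gives sup-error $\lesssim (NL)^{-2s^*/d^*}$ at the worst pair $(d^*,s^*)$. The covering entropy of $\mc F$ is $\lesssim (NL)^2\log(NL)\log n$, following Bartlett et al.'s pseudodimension bound plus bounded weights. The risk is therefore
\[
(NL)^{-4s^*/d^*}+\frac{(NL)^2\log^3 n}{\sqrt n}.
\]
With the prescribed choice $NL\asymp(n/\log^3n)^{d^*/(4(d^*+s^*))}$, both terms are of order $(n/\log^3 n)^{-\alpha^*}$, and taking square roots gives $n^{-\alpha^*/2}(\log^3 n)^{\alpha^*/2}$. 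Inserting this into the propagation bound yields \eqref{eqn:main_dqn_performance_bound}.

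The main obstacle is the dependent oracle inequality. One must show that exponential $\tau$-mixing costs exactly a $\sqrt n$ effective sample size, uniformly over a class whose Lipschitz constants grow with $N,L$. The plan is to use Lipschitz truncation so that the coupling terms contribute only polylogarithmic factors. The first thing to try is to adapt the proof of \citet{liu_generalisation_tau_mixing_2025} from sparse to fully connected networks, replacing their covering bound.
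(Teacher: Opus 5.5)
Your overall structure matches the paper's: error propagation via \citet[Theorem~6.1]{fan_dqn}, a one-step regression bound that is uniform over $k$ thanks to (A4), approximation plus entropy, then balancing. The gap is in the balancing step, and it arises because your oracle inequality is too weak to give the stated rate. Take your estimation term $(NL)^2\log^3 n/\sqrt n$ and the prescribed $NL\asymp(n/\log^3 n)^{d^*/(4(d^*+s^*))}$. The power of $n$ is $\frac{d^*}{2(d^*+s^*)}-\frac12=-\frac{\alpha^*}{2}$, so this term is of order $n^{-\alpha^*/2}$ up to logarithms, not $n^{-\alpha^*}$. It therefore dominates the approximation term, and after taking the square root you only get $n^{-\alpha^*/4}$. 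Re-tuning $NL$ does not help: balancing $(NL)^{-4s^*/d^*}$ against $(NL)^2/\sqrt n$ gives squared risk $n^{-s^*/(2s^*+d^*)}$ up to logarithms, which is strictly slower than $n^{-\alpha^*}$. The prescribed architecture is exactly the balance point of $(NL)^{-4s^*/d^*}$ against $\frac{\log n}{n}\log^2\mathcal N_\delta\lesssim (NL)^4\log^3 n/n$. That is, up to logarithms, the square of your estimation term, and it is the bound in Corollary~\ref{cor:dominant_term_simplification_clean}.

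The missing idea is localization via a self-bounding inequality. The paper never bounds the stochastic terms uniformly; it bounds them in proportion to the unknown error. The noise cross term is controlled by adapting \citet[Proposition~2]{liu_generalisation_tau_mixing_2025}. It is at most a $\kappa\delta$ term plus $\frac{\log\mathcal N_\delta}{\sqrt n}\bigl(C\sqrt{\log n/n}\,\log\mathcal N_\delta+\sqrt\delta+C'\sqrt{\mathbb E\|\hat f-f_0\|_n^2}\bigr)$. The empirical-to-population gap is written with an independent $\tau$-mixing ghost sample. It is bounded via \citet[Lemma~1]{liu_generalisation_tau_mixing_2025}, a union bound over the $\delta$-cover, and a variance proxy $\Upsilon^2\ge 8B^2\max_j\|f_j-f_0\|_\mu^2$, and is at most $CB\sqrt{\log n/n}\,\log\mathcal N_\delta\bigl(\delta+\sqrt{\mathbb E\|\hat f-f_0\|_\mu^2}\bigr)$. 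Each step is then an inequality $a^2\le 2ab+c$ with $b\lesssim\sqrt{\log n/n}\,\log\mathcal N_\delta$, so $a^2\lesssim b^2+c$, which produces the $\frac{\log n}{n}\log^2\mathcal N_\delta$ term.

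Relatedly, the price of dependence is not a ``$\sqrt n$ effective sample size''. Exponential $\tau$-mixing gives $n_{\mathrm{eff}}\asymp n/\log n$ (Lemma~\ref{lemma:effective_sample_size_exp_decay_clean}). The doubling of $d^*$ arises because $\log\mathcal N_\delta$ enters the deviation linearly rather than as $\sqrt{\log\mathcal N_\delta}$, and so becomes squared after self-bounding. A blocking/Bernstein argument like yours could reproduce this only if you keep the variance term and localize.

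Separately, (A3) only makes $\xi_i$ independent of $X_i$ for each fixed $i$. It does not make the pair process $(X_i,\xi_i)$ $\tau$-mixing, so that step of your cross-term argument needs an additional assumption.
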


\begin{proof}
The full proof is deferred to Appendix~\ref{app:proofs}.
\end{proof}

The bound in Theorem~\ref{thm:main_result} consists of statistical and algorithmic errors. The latter decays geometrically in \(K\), while the former decays polynomially in \(n\). Thus, for \(K\gtrsim \log n\), the algorithmic term is negligible and the DQN performance is governed by the statistical term.
Up to polylogarithmic factors and treating \(\gamma,\phi_{\mu,\sigma}, |\mc{A}|\) as constants, our rate and the i.i.d.\ rate of \citet{fan_dqn}, adapted to our notation, are
\[
    n^{-s^*/(2s^*+2d^*)}
    \qquad\text{and}\qquad
    n^{-s^*/(2s^*+d^*)},
\]
respectively. Thus temporal dependence deteriorates the exponent from \(\nicefrac{s^*}{2s^*+d^*}\) to \(\nicefrac{s^*}{2s^*+2d^*}\), effectively replacing \(d^*\) by \(2d^*\) in the denominator. The resulting statistical rate is polynomially slower than in the i.i.d.\ setting, especially for larger intrinsic dimension \(d^*\). Intuitively, nearby replay samples contain overlapping information, reducing the effective sample size and slowing concentration relative to the i.i.d.\ case.

\begin{proof}[Proof sketch]
The main difficulty is extending the one-step regression analysis of
\citet{fan_dqn} from i.i.d.\ samples to exponentially \(\tau\)-mixing
trajectories. We first control
\[
    \max_{k\in[K]}
    \|\widehat Q_k-\mathcal T\widehat Q_{k-1}\|_\sigma^2
\]
and then apply the error-propagation result of
\citet[Theorem~6.1]{fan_dqn}.

We denote the ERM by \(\hat{f} = \widehat{Q}_k\) and the regression function by \(f_0 = \mc{T}\widehat{Q}_{k-1}\). The key step is a generic excess-risk decomposition for empirical risk minimization (ERM) under dependent
covariates. For a \(\delta\)-cover \(\mathcal C_\delta\) of \(\mathcal F\),
let the random index \(f_{k^*}\in\mathcal C_\delta\) satisfy
\(\|\hat f-f_{k^*}\|_\infty\le\delta\), and let
\(\{\tilde X_i\}_{i=1}^n\) be an independent exponentially \(\tau\)-mixing ghost sample. Then Proposition~\ref{prop:abstract_excess_risk_decomp_clean} yields
\[
\begin{aligned}
    \mathbb E\|\hat f-f_0\|_\sigma^2
    \le
    \omega(\mc F,\mc G)
    +
    \mathfrak C_n(\mc F,\delta)
    +
    \mathfrak D_n(\mc F,\delta)
    +
    8B\delta ,
\end{aligned}
\]
where \(\omega(\mc F, \mc{G})\) is the approximation error,
\begin{equation*}
        \begin{aligned}
            \mathfrak C_n(\mathcal F,\delta)
            :=
            \sup_{f\in\mathcal F}
            \left|
            \mathbb E\left[
                \frac2n\sum_{i=1}^n
                \xi_i(\hat f(X_i)-f(X_i))
            \right]
            \right|, \;\mathfrak D_n(\mathcal F,\delta)
            :=
            \mathbb E\left[
            \sup_{j\in[\mathcal N_\delta]}
            \left|
                \frac1n\sum_{i=1}^n
                h_j(X_i,\tilde X_i)
            \right|
            \right],
        \end{aligned}
\end{equation*}
with 
\[
h_j(x,y)=(f_j(y)-f_0(y))^2-(f_j(x)-f_0(x))^2.
\]

The covariance term \(\mathfrak{C}_n(\mc{F}, \delta)\) is bounded by adapting the proof in
\citet[Proposition~2]{liu_generalisation_tau_mixing_2025} for sub-Gaussian noise. Since exponential
\(\tau\)-mixing implies \(n_{\mathrm{eff}}\asymp \frac{n}{\log n}\) (see Lemma~\ref{lemma:effective_sample_size_exp_decay_clean})
we obtain, for sufficiently large \(n\) (see Lemma~\ref{lemma:covariance_bound_clean}),
\[
\begin{aligned}
\mathfrak C_n(\mathcal F,\delta)
&\le
C_1\kappa\delta
+
C_2\frac{\log\mathcal N_\delta}{\sqrt n}
\left[
    C_3\sqrt{\frac{\log n}{n}}\log\mathcal N_\delta
    +
    \sqrt\delta
    +
    C_4\sqrt{\mathbb E\|\hat f-f_0\|_n^2}
\right].
\end{aligned}
\]
For the ghost-sample term \(\mathfrak{D}_n(\mc{F}, \delta)\), the paired process
\(\{(X_i,\tilde X_i)\}_{i=1}^n\) is again exponentially \(\tau\)-mixing, and
the same is true for
\(\{h_j(X_i,\tilde X_i)\}_{i=1}^n\) 
(see Lemmas~\ref{lemma:tau_mixing_pairs_clean} and~\ref{lemma:tau_mixing_lipschitz_clean}, respectively, together with Lemma~\ref{lemma:lipschitz_holder_combo_lipschitz_clean}). Applying
\citet[Lemma~1]{liu_generalisation_tau_mixing_2025} with a union bound and
integrating the resulting tail inequality gives (see Lemma~\ref{lemma:squared_differences_bound_clean})
\[
    \mathfrak D_n(\mathcal F,\delta)
    \le
    C B\sqrt{\frac{\log n}{n}}\,
    \log\mathcal N_\delta
    \left(
        \delta+
        \sqrt{\mathbb E\|\hat f-f_0\|_\mu^2}
    \right).
\]
Combining these terms with the previous decomposition, we obtain for any \(\delta \in (0, 1]\) (see Corollary~\ref{cor:dominant_term_simplification_clean})
\[
    \mathbb E\|\hat f-f_0\|_\sigma^2
    \lesssim
    \max\left\{
        \frac{\log n}{n}\log^2\mathcal N_\delta,\,
        \delta
    \right\}
    +
    \omega(\mathcal F,\mathcal G).
\]
It remains to choose the network architecture. 
By Proposition~\ref{prop_approx_reformulated} and~\ref{prop:optimality-bounded-reformulated}, respectively, the approximation error and metric entropy terms are bounded by
\begin{align*}
    \omega(\mathcal F,\mathcal G)
    \lesssim
    (NL)^{-4s^*/d^*}, \qquad \text{and} \qquad \log\mathcal N_\delta
    \lesssim
    (NL)^2\log(NL),
\end{align*}
where we choose \(\delta=(NL)^{-4s^*/d^*}\).
Choosing the architecture to scale as
\[
    NL
    \asymp
    \left(\frac{n}{\log^3 n}\right)^{
        \frac{d^*}{4(d^*+s^*)}
    }.
\]
balances estimation and approximation error terms. For this choice, we obtain (see Theorem~\ref{Bel_reg})
\[
    \max_{k\in[K]}
    \mathbb E
    \|\widehat Q_k-\mathcal T\widehat Q_{k-1}\|_\sigma^2
    \lesssim |\mc{A}|\cdot 
    \left(\frac{\log^3 n}{n}\right)^{\alpha^*},
    \qquad
    \alpha^*=\frac{s^*}{s^*+d^*}.
\]
Applying the error-propagation result of \citet[Theorem~6.1]{fan_dqn} completes the proof.
\end{proof}
\begin{corollary}[Sample complexity of DQN under \(\tau\)-mixing]
\label{cor:dqn_sample_complexity}
Consider the same setting as in Theorem~\ref{thm:main_result}. Assume that the number of iterations satisfy \(K \gtrsim \log n\) so that the statistical error in~\eqref{eqn:main_dqn_performance_bound} dominates. Then, one can achieve any desired accuracy \(\varepsilon > 0\) by taking \( n \gtrsim\varepsilon^{- \nicefrac{2}{\alpha^*}}\).
\end{corollary}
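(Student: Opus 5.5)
The plan is to read the sample complexity directly off the bound \eqref{eqn:main_dqn_performance_bound} in Theorem~\ref{thm:main_result}. First I dispose of the algorithmic term. Since $K\gtrsim \log n$, I take $K \ge c_0\log n$ with $c_0$ large enough, say $c_0 \ge \alpha^*/(2\log(1/\gamma))$. Then
\[
\frac{4\gamma^{K+1}}{(1-\gamma)^2}R_{\max} \le \frac{4R_{\max}}{(1-\gamma)^2}\, n^{-\alpha^*/2},
\]
so this term is at most a constant multiple of the statistical term. The whole right-hand side is therefore bounded by
\[
C'\,\frac{\phi_{\mu,\sigma}}{(1-\gamma)^2}|\mc A|^{1/2}\, n^{-\alpha^*/2}(\log^3 n)^{\alpha^*/2},
\]
where I treat $\gamma$, $\phi_{\mu,\sigma}$, $|\mc A|$ and $R_{\max}$ as constants.

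Next I make this quantity smaller than $\varepsilon$. Requiring $n^{-\alpha^*/2}\le \varepsilon/C''$ gives $n \ge (C''/\varepsilon)^{2/\alpha^*}$, which is of order $\varepsilon^{-2/\alpha^*}$.

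The only real obstacle is the polylogarithmic factor $(\log^3 n)^{\alpha^*/2}$. There are two ways to handle it.

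\begin{itemize}
\item Absorb it in the $\gtrsim$ notation of the corollary, consistent with the paper's discussion "up to polylogarithmic factors". Then $n\gtrsim\varepsilon^{-2/\alpha^*}$ means up to log factors in $1/\varepsilon$.
\item State it explicitly. Take $n = c\,\varepsilon^{-2/\alpha^*}\log^{3}(1/\varepsilon)$ with $c$ large. Then $\log n \asymp \log(1/\varepsilon)$, and
\[
n^{-\alpha^*/2}(\log^3 n)^{\alpha^*/2} \asymp c^{-\alpha^*/2}\,\varepsilon\,\Bigl(\tfrac{\log^3 n}{\log^3(1/\varepsilon)}\Bigr)^{\alpha^*/2} \lesssim c^{-\alpha^*/2}\varepsilon,
\]
which is at most $\varepsilon$ for $c$ large.
\end{itemize}

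Monotonicity of $n\mapsto n^{-\alpha^*/2}\log^{3\alpha^*/2}n$ for large $n$ then extends the bound to every larger $n$. One further requirement is that $n$ exceeds the threshold $n_0$ implicit in the theorem's "sufficiently large $n$" conditions, which holds once $\varepsilon$ is small enough. This completes the argument.
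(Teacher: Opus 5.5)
Your argument is correct and follows the paper's proof: you read off the bound of Theorem~\ref{thm:main_result}, absorb the algorithmic term and the constants, and solve $C' n^{-\alpha^*/2}\le\varepsilon$ for $n$; you also justify the absorption of the algorithmic term explicitly via $K\ge c_0\log n$, which gives $\gamma^{K+1}\le n^{-\alpha^*/2}$. Your explicit choice $n\asymp\varepsilon^{-2/\alpha^*}\log^3(1/\varepsilon)$ is more careful than the paper, which simply absorbs the growing factor $(\log^3 n)^{\alpha^*/2}$ into the constant $C'$, so the stated $n\gtrsim\varepsilon^{-2/\alpha^*}$ holds only up to such logarithmic factors.
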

\begin{proof}
We defer the proof to Appendix~\ref{app:proofs}.
\end{proof}

\section{Experiments}
\label{sec:experiments}

We empirically examine whether the dependence assumptions used in our analysis are reflected in data generated by DQN. We study two sources of dependence: state-action trajectories generated by fixed trained policies, and the minibatches supplied to the DQN regression step during training. Experiments are conducted on \texttt{CartPole-v1}, \texttt{FrozenLake-v1}, and \texttt{LunarLander-v3}; implementation details, hyperparameters, and estimator construction are deferred to Appendix~\ref{app:experimental_setting}.

For both trajectories and minibatches, we estimate the finite-history Wasserstein proxy
for \(\tau\)-dependence described in Appendix~\ref{subsec:tau-proxy}. This statistic
compares a local empirical law
of future observations, conditioned on a short observed history, to the corresponding empirical marginal law.
It should therefore be interpreted as a computable finite-sample proxy for temporal dependence, rather than as a direct estimate of the population \(\tau\)-mixing coefficient.
In the trajectory experiments, the lag \(k\)
is ordinary environment-time lag under a fixed greedy policy. In the minibatch experiments, the lag \(k\) is the order in which samples are returned by the replay sampler within a single minibatch.

Figure~\ref{fig:experiments} summarizes the results. The trajectory estimates in the top row show that temporal dependence decays in all three environments, but remains non-negligible at short lags. The decay is relatively fast for \texttt{FrozenLake-v1} and \texttt{LunarLander-v3}, and appears broadly consistent with an exponential \(\tau\)-mixing model. In contrast, \texttt{CartPole-v1} exhibits substantially slower decay, making an exponential trajectory-level mixing model less evident in this environment.

The minibatch estimates in the bottom row show a different but complementary phenomenon. Replay sampling reduces dependence in the data used for DQN updates, but does not eliminate it. Moreover, the estimated dependence decays approximately exponentially as a function of minibatch order. This supports our modeling assumption that the samples entering each DQN regression step retains weak, decaying dependence. The contrast is particularly notable for \texttt{CartPole-v1}: although the underlying trajectory dependence decays slowly, the minibatches produced by the contiguous block sampler exhibit a much faster, approximately exponential decay. This suggests that the sampler can substantially reshape the dependence structure seen by the supervised learning subproblem.

We note that the abrupt cutoff at lag \(k=107\) in the minibatch curves is not a property of the underlying process. It is a finite-sample artifact of the estimator, described in Appendix~\ref{subsec:perm-centering}, with the corresponding cutoff threshold given in~\eqref{eqn:tau_obs_cutoff_loo}. We note a similar effect in the CartPole trajectory at lag \(k = 479\).

\begin{figure}[h]
    \centering
    \begin{subfigure}[b]{0.3\textwidth}
        \centering
        \includegraphics[width=\textwidth]{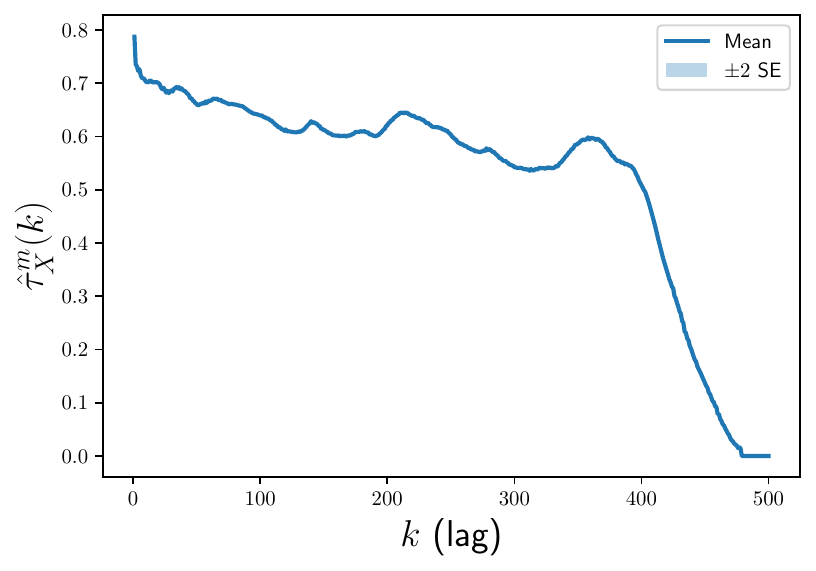}
        \caption{CartPole trajectory}
        \label{fig:sub1}
    \end{subfigure}
    \hfill
    \begin{subfigure}[b]{0.3\textwidth}
        \centering
        \includegraphics[width=\textwidth]{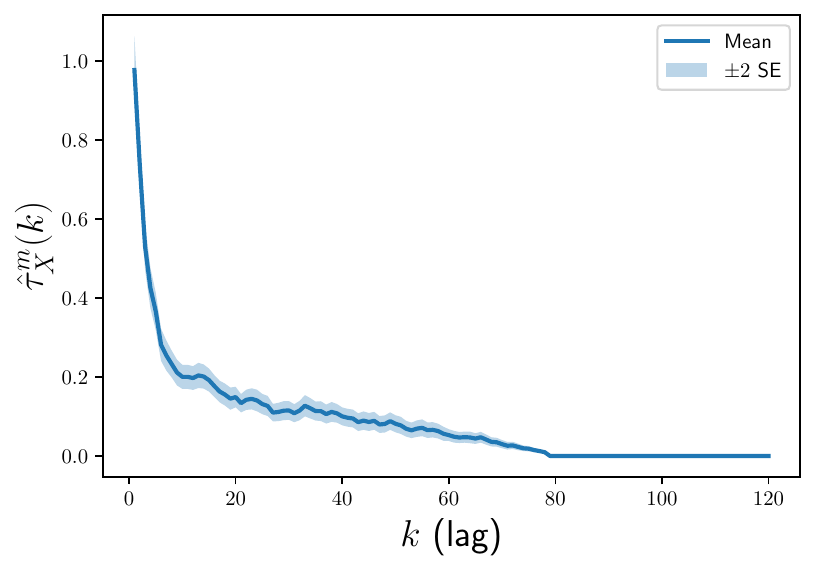}
        \caption{FrozenLake trajectory}
        \label{fig:sub2}
    \end{subfigure}
    \hfill
    \begin{subfigure}[b]{0.3\textwidth}
        \centering
        \includegraphics[width=\textwidth]{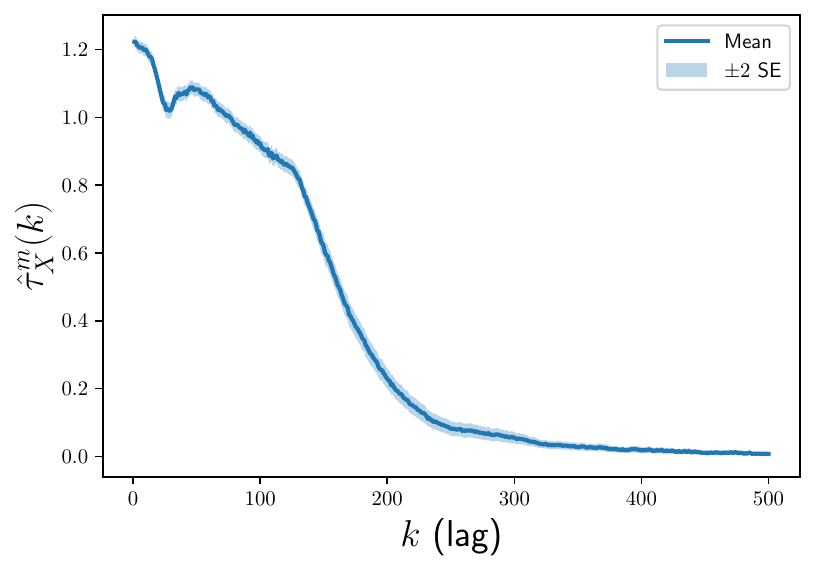}
        \caption{LunarLander trajectory}
        \label{fig:sub3}
    \end{subfigure}

    \vspace{0.5em}

    \begin{subfigure}[b]{0.3\textwidth}
        \centering
        \includegraphics[width=\textwidth]{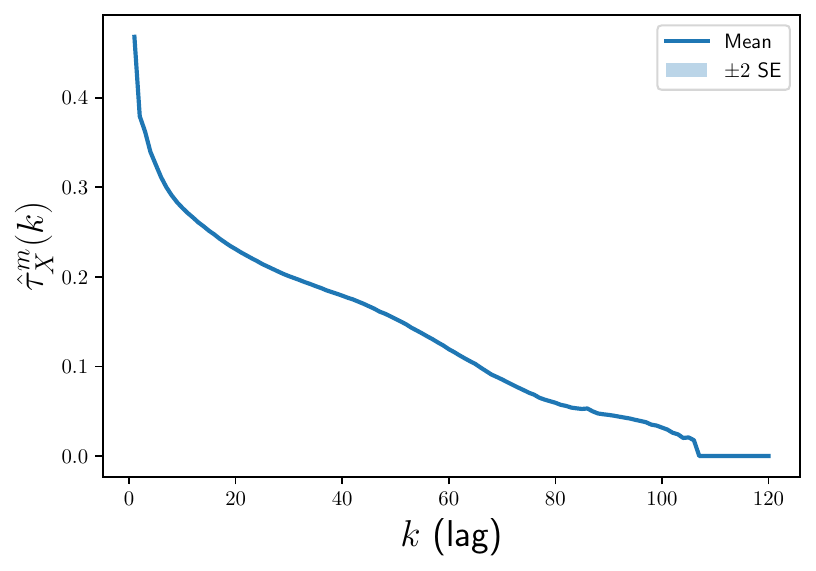}
        \caption{CartPole minibatches}
        \label{fig:sub21}
    \end{subfigure}
    \hfill
    \begin{subfigure}[b]{0.3\textwidth}
        \centering
        \includegraphics[width=\textwidth]{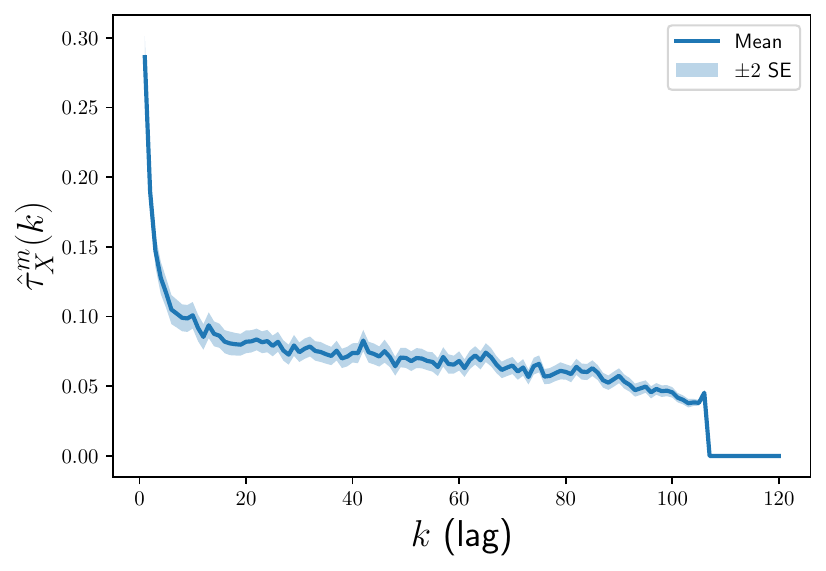}
        \caption{FrozenLake minibatches}
        \label{fig:sub22}
    \end{subfigure}
    \hfill
    \begin{subfigure}[b]{0.3\textwidth}
        \centering
        \includegraphics[width=\textwidth]{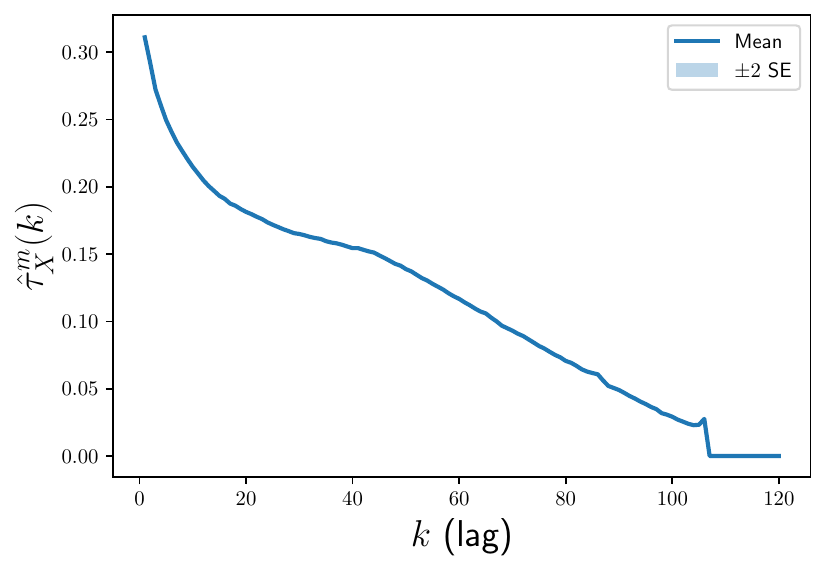}
        \caption{LunarLander minibatches}
        \label{fig:sub23}
    \end{subfigure}

    \caption{Estimated finite-history \(\tau\)-dependence proxies for state--action trajectories generated by fixed trained policies (top row) and for replay minibatches sampled using our contiguous block sampler (see Section~\ref{subsec:contiguous_block_sampling}) used in DQN updates (bottom row). Trajectory estimates show decaying temporal dependence in the underlying RL process, while minibatch estimates show that replay sampling attenuates but does not eliminate dependence. The approximately exponential decay in the minibatch curves is consistent with the dependence model used in our DQN analysis.}
    \label{fig:experiments}
\end{figure}
\vspace{-3mm}

\section{Conclusion \& limitations}
\label{sec:conclusion}

This work extends the analysis of \cite{fan_dqn} to DQN with replay buffer sampling under temporally dependent trajectories. Rather than assuming i.i.d. state-action samples, we model the replay trajectory as a strictly stationary $\beta$-mixing (more generally, $\tau$-mixing) process, thereby capturing the temporal dependence induced by sequential environment interactions. We show that this dependence persists in the replay minibatches and degrades the statistical rate polynomially, effectively doubling the intrinsic dimensionality penalty relative to the i.i.d. setting.

While this model more closely reflects practical DQN training, it still relies on stationarity through the `all-policy' concentrability in Assumption~\ref{assumption:main}(A5). In practice, replay trajectories are generated under evolving behavior policies, inducing distributional drifts in addition to temporal dependence. Extending the analysis to such non-stationary setting would therefore require controlling both dependence and policy-induced distribution shifts. In this direction, \citet{zhan_single_policy_concentrability} show in offline RL that the standard `all-policy' concentrability can be replaced by a weaker `single-policy' notion, suggesting that non-stationarity may be controlled through the local discrepancy induced by individual DQN updates rather than the cumulative drift over the full training trajectory. Extending our analysis to this setting is an interesting direction for future work. Finally, extending this analysis to the multi-agent setting such as the minimax DQN algorithm \citep{fan_dqn} is another interesting direction for future work.

\section*{Acknowledgements}

Sophie Langer was partially supported by the DFG Research Unit 5381 Mathematical Statistics in the Information Age, Projektnumber 460867398.

\bibliography{bibliography}
\bibliographystyle{plainnat}
\newpage

\appendix

\renewcommand{\thealgorithm}{1}

\section{DQN algorithm}
\label{app:dqn_algo}

\begin{algorithm}
\caption{Deep Q-Network (DQN)}
\begin{algorithmic}[1]
\State Initialize replay memory $\mathcal{D}$ with capacity $M$
\State Initialize action-value network $Q_\theta(s,a)$ with random weights $\theta$
\State Initialize target network ${Q}_{\theta^*}(s,a)$ with weights $\theta^* \gets \theta$

\For{episode $= 1, \dots , M$}
    \State Initialize state $s_0$
    
    \For{$t = 0, \dots, T$}
        \State With probability $\epsilon$, select a random action $a_t$
        \State Otherwise select
        \[
        a_t = \arg\max_a Q_\theta(s_t, a)
        \]
        
        \State Execute action $a_t$ in the environment
        \State Observe reward $r_t$ and next state $s_{t}'$
        \State Store transition $(s_t, a_t, r_t, s_{t}')$ in replay memory $\mathcal{D}$
        
        \State Sample a random minibatch of transitions $(s_j, a_j, r_j, s_{j}')$ from $\mathcal{D}$
        
        \For{each sampled transition}
            \If{$S_{j}'$ is terminal}
                \State Set target
                \[
                y_j = r_j
                \]
            \Else
                \State Set target
                \[
                y_j = r_j + \gamma \max_{a'} {Q}_{\theta^*}(s_{j}', a')
                \]
            \EndIf
        \EndFor
        
        \State Perform a gradient descent step on
        \[
        L(\theta) =
        \frac{1}{|\mathcal{B}|}
        \sum_{j \in \mathcal{B}}
        \left(Y_j - Q_\theta(s_j, a_j)\right)^2
        \]
        
        \State Every $T_\mathrm{target}$ steps, update the target network:
        \[
        \theta^* \gets \theta
        \]
    \EndFor
\EndFor
\end{algorithmic}
\label{algo:dqn}
\end{algorithm}

\section{Proofs}
\label{app:proofs}
Throughout the appendix, we use the population and empirical norms
\begin{align*}
    \|f\|_\mu^2 := \int f^2\,d\mu,
    \qquad
    \|f\|_n^2 := \frac1n\sum_{i=1}^n f(X_i)^2.
\end{align*}

In Definition \ref{comp_class_main} we introduce our class of compositional models, which relies on the following formal definition of the Minkowski dimension: 
\begin{definition}
[Minkowski dimension]
\label{minkowksi}
    The Minkowski dimension of a bounded set $\mathcal{M} \subset \mathbb{R}^D$ is defined as
    \begin{align*}
        \dim_M(\mathcal{M}) := \limsup_{\epsilon \to 0} \frac{\log \mathcal{N}(\epsilon, \mathcal{M}, \| \cdot \|_\infty)}{\log(1/\epsilon)}.
    \end{align*}
\end{definition}

\subsection{Proof of Proposition~\ref{prop:mixing_underlying_process_induces_mixing_minibatches}}

\begin{proof}[Proof of Proposition~\ref{prop:mixing_underlying_process_induces_mixing_minibatches}]
    
We first consider assumption \emph{(i)}. Since the replay buffer
\(\mathcal M\) is a contiguous subsequence of the underlying trajectory,
Lemma~\ref{lemma:exp_tau_mixing_buffer} shows that \(\mathcal M\) inherits
the same exponential truncated \(\tau\)-mixing bound:
\[
    \tau_{\mathcal M}^M(k)
    \le
    C_\tau e^{-c_\tau k},
    \qquad 1\le k\le M-1 .
\]
Fix \(1 \le r \le n-\ell\) and condition on the sampled indices \(I_1 = i_1, \dots, I_n = i_n\). Since the sampling is \textit{order-preserving} and \textit{duplicate-free}, we have that \(I_1 < I_2 < \dots < I_n\). Moreover,
\begin{align*}
    i_{(r+\ell)} - i_{(r)} \ge \ell.
\end{align*}
Furthermore,
\begin{align*}
    \mc Y_r
    =
    \sigma(Z_{i_{(1)}},\dots,Z_{i_{(r)}})
    \subseteq
    \sigma(Z_1,\dots,Z_{i_{(r)}}).
\end{align*}
Therefore, by Lemma~\ref{lemma:zeta_monotone_sigma_algebra},
\begin{align*}
    \tau(\mc Y_r, Y_{r+\ell})
    &=
    \tau(\mc Y_r, Z_{i_{(r+\ell)}})
    \\
    &\le
    \tau\bigl(\sigma(Z_1,\dots,Z_{i_{(r)}}), Z_{i_{(r+\ell)}}\bigr)
    \\
    &\le
    \tau_{\mc M}^M(i_{(r+\ell)} - i_{(r)})
    \\
    &\le
    c_0 e^{-c_1 (i_{(r+\ell)} - i_{(r)})}
    \\
    &\le
    c_0 e^{-c_1 \ell}.
\end{align*}
Taking the supremum over \(1 \le r \le n-\ell\) yields
\begin{align*}
    \tau_{\mc Y}^n(\ell) \le c_0 e^{-c_1 \ell},
    \qquad 1 \le \ell \le n-1.
\end{align*}
Since the above argument holds for every realized ordered index vector
\[
    G := (I_{(1)},\dots,I_{(n)}) = (i_{(1)},\dots,i_{(n)}),
\]
the bound holds conditionally on \(G\), uniformly over all admissible
realizations of \(G\). More explicitly, for every \(1\)-Lipschitz function
\(h\),
\[
\begin{aligned}
&\mathbb E\left[
    \sup_{h\in\Lambda_1}
    \big|
        \mathbb E[h(Y_{r+\ell})\mid \mathcal Y_r,G]
        -
        \mathbb E[h(Y_{r+\ell})\mid G]
    \big|
\right]  \\
& =
\mathbb E_G\left[
    \mathbb E\left[
        \sup_{h\in\Lambda_1}
        \left|
            \mathbb E[h(Y_{r+\ell})\mid \mathcal Y_r,G]
            -
            \mathbb E[h(Y_{r+\ell})\mid G]
        \right|
        \,\middle|\, G
    \right]
\right]  \\
&\qquad \le
\mathbb E_G\left[c_0 e^{-c_1\ell}\right]
=
c_0 e^{-c_1\ell}.
\end{aligned}
\]
Thus the same exponential bound holds after averaging over the random
sampling indices. 

Under assumption \emph{(ii)}, Lemma~\ref{lemma:tau_mixing_from_beta_mixing}
implies that the exponentially \(\beta\)-mixing trajectory is exponentially
\(\tau\)-mixing with constants
\[
    C_\tau = C_{\beta\tau}D_{\mathcal Z}C_\beta,
    \qquad
    c_\tau = c_\beta .
\]
Applying the preceding argument with these constants gives
\[
    \tau_Y^n(\ell)
    \le
    C_{\beta\tau}D_{\mathcal Z}C_\beta e^{-c_\beta \ell},
    \qquad 1\le \ell\le n-1 .
\]
\end{proof}

\begin{proof}[Proof of Corollary~\ref{cor:common_replay_samplers}]
We first consider assumption \emph{(i)}. 
For the case of uniform sampling without replacement, the sampled indices satisfy \(I_1 < I_2 < \dots < I_n\) and hence the claim follows directly from Proposition~\ref{prop:mixing_underlying_process_induces_mixing_minibatches}. For the contiguous block sampler, we show in Corollary~\ref{cor:contiguous_block_sampling_buffer} that the sampled indices satisfy the condition of Proposition~\ref{prop:mixing_underlying_process_induces_mixing_minibatches}, and hence the claim follows immediately.

Under assumption \emph{(ii)}, Lemma~\ref{lemma:tau_mixing_from_beta_mixing}
implies that the exponentially \(\beta\)-mixing trajectory is exponentially
\(\tau\)-mixing with constants
\[
    C_\tau = C_{\beta\tau}D_{\mathcal Z}C_\beta,
    \qquad
    c_\tau = c_\beta .
\]
Applying the preceding argument with these constants proves the claim.
\end{proof}

\subsection{Error propagation in DQN}
To prove our main result we make use of the following lemma that translates the maximum one-step regression error into a performance guarantee for DQN.

\begin{lemma}[Error propagation]
\label{lemma:error_propagation}
    Let \(\{\widehat{Q}_k\}_{k \in [K]}\) be the iterative \(Q\)-functions of the DQN algorithm, and let \(Q^*\) denote the optimal \(Q\)-function satisfying the Bellman optimality equation. Furthermore, let \(\phi_{\mu, \sigma} < \infty\) be a constant independent of \(n, \mc{F}, \) and \(\mc{G}\). Then
    \begin{align*}
         \norm{Q^* - Q^{\pi_K}}_{1, \mu} \le \frac{2\phi_{\mu, \sigma}}{(1- \gamma)^2}  \underset{k \in [K]}{\max} \norm{\widehat{Q}_k - \mc{T}\widehat{Q}_{k-1}}_\sigma + \frac{4 \gamma^{K+1}}{(1-\gamma)^2}  R_\mathrm{max}.
    \end{align*}
\end{lemma}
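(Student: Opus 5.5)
The plan is the classical error-propagation argument of \citet[Theorem~6.1]{fan_dqn}, following \citet{munos_finite_time_bounds}. It is deterministic: it uses neither the mixing structure nor how the $\widehat Q_k$ were produced, only the one-step errors
\[
\varepsilon_k := \widehat Q_k - \mc T\widehat Q_{k-1}, \qquad k \in [K],
\]
with $\pi_K$ greedy with respect to $\widehat Q_K$. For a policy $\pi$ let $P^\pi$ be the state-action transition operator and $\mc T^\pi$ the policy Bellman operator, so that $\mc T Q = \mc T^{\pi_Q} Q$ for the greedy policy $\pi_Q$ of $Q$. The first step is a pointwise sandwich of the iterates. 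Using $\mc T\widehat Q_{k-1} \ge \mc T^{\pi^*}\widehat Q_{k-1}$ and $\mc T\widehat Q_{k-1} = \mc T^{\pi_{k-1}}\widehat Q_{k-1}$, one gets
\[
\gamma P^{\pi^*}(Q^*-\widehat Q_{k-1}) - \varepsilon_k \;\le\; Q^*-\widehat Q_k \;\le\; \gamma P^{\pi_{k-1}}(Q^*-\widehat Q_{k-1}) - \varepsilon_k .
\]
Iterating from $k=K$ down to $0$ yields upper and lower bounds for $Q^*-\widehat Q_K$. Each bound is a sum of products of $P^{\pi}$'s applied to the $\varepsilon_k$, plus a term $\gamma^K(\cdots)(Q^*-\widehat Q_0)$.

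The second step converts this into a bound on $Q^*-Q^{\pi_K}$. Greediness of $\pi_K$ gives
\[
Q^*-Q^{\pi_K} \le \gamma P^{\pi^*}(Q^*-\widehat Q_K) - \gamma P^{\pi_K}(Q^*-\widehat Q_K) + \gamma P^{\pi_K}(Q^*-Q^{\pi_K}).
\]
Hence $Q^*-Q^{\pi_K} \le (I-\gamma P^{\pi_K})^{-1}\bigl[\gamma P^{\pi^*} - \gamma P^{\pi_K}\bigr](Q^*-\widehat Q_K)$, and both signs of $Q^*-\widehat Q_K$ are controlled by the sandwich. Substituting produces a nonnegative combination of terms $|\varepsilon_k|$ under positive linear operators that are mixtures of products of transition kernels. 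The weights sum to order $\sum_k \gamma^{K-k}/(1-\gamma) \le 1/(1-\gamma)^2$ after normalization, and there is a residual term of size $\gamma^{K+1}\cdot 2Q_{\max}\cdot 2/(1-\gamma)$. With $Q_{\max}=R_{\max}/(1-\gamma)$, this residual is $4\gamma^{K+1}R_{\max}/(1-\gamma)^2$.

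The third step integrates against $\mu$ and applies concentrability (A5). For each product of kernels, $\mu P^{\pi_1}\cdots P^{\pi_m}$ is compared with $\sigma$ through the Radon–Nikodym derivative. Using Cauchy–Schwarz,
\[
\int |\varepsilon_k|\, d(\mu P\cdots P) \le \kappa(m;\mu,\sigma)\, \|\varepsilon_k\|_\sigma ,
\]
where $\kappa$ is the $L^2(\sigma)$ norm of the density. The constant $\phi_{\mu,\sigma}$ is defined as the $\gamma$-weighted sum $(1-\gamma)^2\sum_m m\gamma^{m-1}\kappa(m;\mu,\sigma)$, as in \citet{fan_dqn}. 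Bounding every $\|\varepsilon_k\|_\sigma$ by the maximum then gives the factor $2\phi_{\mu,\sigma}/(1-\gamma)^2$.

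The main obstacle is the bookkeeping in the second and third steps. One must normalize the operator mixtures into probability kernels and track the weights so that exactly the constant $\phi_{\mu,\sigma}$, the factor $2/(1-\gamma)^2$ and the residual $4\gamma^{K+1}R_{\max}/(1-\gamma)^2$ appear. I would follow \citet[Theorem~6.1]{fan_dqn} line by line, since (A5) is precisely their concentrability assumption and the statement is identical.
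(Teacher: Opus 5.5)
Your plan is the argument the paper relies on. The paper gives no proof of its own here and simply cites \citet[Theorem~6.1]{fan_dqn}, whose proof is exactly the sandwich / greedy-policy / concentrability chain you describe, with the same $\phi_{\mu,\sigma}$.

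There is one concrete error: in your first display the two policies are on the wrong sides. Write $Q^*-\widehat Q_k=\mc T Q^*-\mc T\widehat Q_{k-1}-\varepsilon_k$.
\begin{itemize}
\item $\mc T Q^*=\mc T^{\pi^*}Q^*$ and $\mc T\widehat Q_{k-1}\ge\mc T^{\pi^*}\widehat Q_{k-1}$ give an \emph{upper} bound with $P^{\pi^*}$.
\item $\mc T Q^*\ge\mc T^{\pi_{k-1}}Q^*$ and $\mc T\widehat Q_{k-1}=\mc T^{\pi_{k-1}}\widehat Q_{k-1}$ give a \emph{lower} bound with $P^{\pi_{k-1}}$.
\end{itemize}
The correct sandwich is therefore
\[
\gamma P^{\pi_{k-1}}(Q^*-\widehat Q_{k-1})-\varepsilon_k\;\le\;Q^*-\widehat Q_k\;\le\;\gamma P^{\pi^*}(Q^*-\widehat Q_{k-1})-\varepsilon_k .
\]
Your version is false in general. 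For example, your upper bound is equivalent to $P^{\pi^*}Q^*\le P^{\pi_{k-1}}Q^*$. This fails as soon as $\pi_{k-1}$ is not greedy for $Q^*$ on states charged by the transition kernel. The same swap makes your lower bound fail too.

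After swapping, the rest of your plan goes through. Iterating gives
\[
Q^*-\widehat Q_K\le\gamma^K(P^{\pi^*})^K(Q^*-\widehat Q_0)-\sum_{k=1}^K\gamma^{K-k}(P^{\pi^*})^{K-k}\varepsilon_k,
\]
and the analogous lower bound with $P^{\pi_{K-1}}\cdots P^{\pi_k}$ in place of $(P^{\pi^*})^{K-k}$. These are exactly the one-sided bounds your second step needs: the upper one goes inside $\gamma P^{\pi^*}(\cdot)$ and the lower one inside $-\gamma P^{\pi_K}(\cdot)$.

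The resulting operators are $(I-\gamma P^{\pi_K})^{-1}=\sum_m\gamma^m(P^{\pi_K})^m$ composed with $(P^{\pi^*})^{K-k+1}$ and with $P^{\pi_K}\cdots P^{\pi_k}$. At most $j$ pairs $(k,m)$ produce a product of $j$ kernels. So Cauchy--Schwarz gives the coefficient $2\gamma\phi_{\mu,\sigma}/(1-\gamma)^2$ directly, which is at most the stated $2\phi_{\mu,\sigma}/(1-\gamma)^2$, with no normalization bookkeeping needed.

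The residual $4\gamma^{K+1}R_{\max}/(1-\gamma)^2$ also comes out as you say. However, it uses $|Q^*-\widehat Q_0|\le 2Q_{\max}$, i.e.\ $\|\widehat Q_0\|_\infty\le Q_{\max}$. \citet{fan_dqn} build this into their function class, and you should state it explicitly.
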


\begin{proof}
    See Theorem~6.1 of \citet{fan_dqn}.
\end{proof}

We begin by controlling the one-step statistical error $\underset{k \in [K]}{\max} \norm{\widehat{Q}_k - \mc{T}\widehat{Q}_{k-1}}_\sigma$ of the fitted
Bellman update. This step reduces to a nonparametric regression problem
with dependent data:

Fix an iteration \(k\), and define the regression targets
\[
Y_i^{(k)} := R_i + \gamma \max_{a'\in\mathcal A} Q_k(S_i',a').
\]
Let \(X_i := (S_i,A_i)\). Then the fitted Q-iteration update satisfies
\[
Q_{k+1} \in \arg\min_{f\in\mathcal F}
\frac{1}{n}\sum_{i=1}^n (f(X_i) - Y_i^{(k)})^2.
\]

We denote by
\[
m_k(x) := \mathbb E[Y^{(k)} \mid X=x]
\]
the regression function, which coincides with the Bellman operator:
\(m_k = \mathcal{T}Q_k\).

In the following, we first establish a general bound for least-squares regression
under exponentially $\tau$-mixing observations. This result will later be
applied to the fitted Bellman updates above.
\subsection{Nonparametric regression under exponentially $\tau$-mixing data}
Let \(\mc X\subseteq \mbb R^d\) be compact, and let \(\mu\) denote the stationary marginal law of the covariates. Assume without loss of generality that \(\mc X=[0,1]^d\). We observe a strictly stationary sample
\begin{align*}
    \mc D_n=\{(X_i,Y_i)\}_{i=1}^n,
\end{align*}
where \(\{X_i\}_{i\ge1}\) is an exponentially \(\tau\)-mixing process with marginal law \(\mu\), and
\begin{align}
\label{eqn:general_regression_model}
    Y_i=f_0(X_i)+\xi_i,\qquad i=1,\dots,n.
\end{align}
Here \(f_0:\mc X\to\mbb R\) is the unknown regression function and \(\xi_i\) is the regression noise. We assume \(f_0\) belongs to a function class \(\mc G\).

Let \(\mc F\) be a candidate function class. The estimator is the empirical risk minimizer
\begin{align}
\label{eqn:general_erm}
\hat f \in \arg\min_{f\in\mc F}\frac1n\sum_{i=1}^n \bigl(Y_i-f(X_i)\bigr)^2.
\end{align}
We will work under the following standing assumptions.
\begin{assumption}[Dependent-design regression model]
\label{ass:general_dep_reg_clean}
The data-generating process satisfies the following conditions.
\begin{enumerate}
    \item[(i)] \textbf{Stationary exponentially \(\tau\)-mixing covariates:}
    the process \(\{X_i\}_{i\ge1}\) is strictly stationary with marginal law \(\mu\), and there exist constants \(c_0,c_1>0\) such that
    \begin{align*}
        \tau_X(k)\le c_0 \exp(-c_1 k),\qquad k\ge1.
    \end{align*}
    \item[(ii)] \textbf{Independence:}
    \(X_i\perp \xi_i\) for each \(i\ge1\).
    \item[(iii)] \textbf{Centered noise:}
    for each \(i\),
    \begin{align*}
        \mbb E[\xi_i]=\mbb E[\xi_i\mid X_i]=0.
    \end{align*}
    \item[(iv)] \textbf{Sub-Gaussian noise:}
    there exists \(\kappa>0\) such that almost surely,
    \begin{align*}
        \mbb E\!\left[\exp(\lambda \xi_i)\right]
        \le \exp\!\left(\frac{\kappa^2\lambda^2}{2}\right),
        \quad \forall \lambda\in\mbb R.
    \end{align*}
    \item[(v)] \textbf{Uniform boundedness:}
    there exists \(B>0\) such that
    \begin{align*}
        \|f_0\|_\infty\le B,
        \qquad
        \sup_{f\in\mc F}\|f\|_\infty\le B.
    \end{align*}
    \item[(vi)] \textbf{Function classes:} \(\mc G = \mc G(\ell, \mathcal{P}, \mathbf{D})\) defined as in \eqref{comp_class_main} and \(\mathcal{F}=\mathcal{F}(M, d_{\max}, B)\) defined as in \eqref{eqn:network_class_setup}.
\end{enumerate}
\end{assumption}
We note that these assumptions coincide with Assumption \ref{assumption:main} of our main result, such that the corresponding results can be directly applied.

For \(\delta>0\), we denote by \(\mathcal N(\delta, \mathcal F, \|\cdot\|_\infty)\)
the covering number of \(\mathcal F\) with respect to the sup-norm,
i.e., the smallest integer \(N\) such that there exist
\(f_1,\dots,f_N\in\mathcal F\) satisfying
\[
\forall f\in\mathcal F,\quad
\min_{1\le i\le N}\|f-f_i\|_\infty \le \delta.
\]
If clear from the context, we use the shorthand notation $\mathcal{N}_{\delta} := \mathcal N(\delta, \mathcal F, \|\cdot\|_\infty)$. Let \( \mathcal C_\delta := \{f_1,\dots,f_{\mathcal N_\delta}\} \subset \mathcal F\)\ be the corresponding minimal
\(\delta\)-cover of \(\mathcal F\) in \(\|\cdot\|_\infty\), and let
\(f_{k^*}\in\mathcal C_\delta\) be a measurable projection of \(\hat f\)
onto this cover, so that
\[
    \|\hat f-f_{k^*}\|_\infty\le \delta .
\]
 Let \(\tilde X_1,\dots,\tilde X_n\) be an independent exponentially \(\tau\)-mixing sample with marginal law \(\mu\) and independent of \(\mathcal A_n:=\sigma(X_1,\dots,X_n)\).
Define
\begin{align}
\mathfrak C_n(\mathcal F,\delta)
&:=
\sup_{f\in\mathcal F} \left|\mathbb E\left[
    \frac2n\sum_{i=1}^n
    \xi_i\bigl(\hat f(X_i)-f(X_i)\bigr)
\right]\right|,
\\
\mathfrak D_n(\mathcal F,\delta)
&:=
\mathbb E\left[\sup_{j \in [N_\delta]}
    \left|
    \frac1n\sum_{i=1}^n
    h_{j}(X_i,\tilde X_i)
    \right|
\right],
\end{align}
where
\begin{align}
\label{hj}
     h_{j}(x,y)
    :=
    (f_{j}(y)-f_0(y))^2
    -
    (f_{j}(x)-f_0(x))^2 
\end{align}

and define the approximation error
\begin{align}
\label{eqn:general_uniform_approx_error}
\omega(\mc F,\mc G):=\sup_{g\in\mc G}\inf_{f\in\mc F}\|f-g\|_\mu^2.
\end{align}

\begin{proposition}[Expected excess-risk decomposition]
\label{prop:abstract_excess_risk_decomp_clean}
Under Assumption~\ref{ass:general_dep_reg_clean}, for any \(\delta>0\),
\begin{align}
    \mathbb{E}\norm{\hat f-f_0}_\mu^2
    \le
    \omega(\mc F,\{f_0\})
    + \mathfrak C_n(\mc F,\delta)
    + \mathfrak D_n(\mc F,\delta)
    + 8B\delta .
\end{align}
More generally, if \(f_0\in\mc G\), then
\begin{align}
    \sup_{f_0\in\mc G} \mathbb{E}
    \norm{\hat f-f_0}_\mu^2
    \le
    \omega(\mc F,\mc G)
    + \mathfrak C_n(\mc F,\delta)
    + \mathfrak D_n(\mc F,\delta)
    + 8B\delta .
\end{align}
\end{proposition}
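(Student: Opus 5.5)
The plan is the standard ERM comparison argument, with a ghost sample replacing the population norm. Fix any $f\in\mc F$. By optimality of $\hat f$ in \eqref{eqn:general_erm}, $\frac1n\sum_i(Y_i-\hat f(X_i))^2\le \frac1n\sum_i (Y_i-f(X_i))^2$. Substituting $Y_i=f_0(X_i)+\xi_i$ from \eqref{eqn:general_regression_model} and expanding gives the basic inequality
\[
\|\hat f-f_0\|_n^2\le \|f-f_0\|_n^2+\frac2n\sum_{i=1}^n \xi_i\bigl(\hat f(X_i)-f(X_i)\bigr).
\]
I take expectations. By stationarity, $\mbb E\|f-f_0\|_n^2=\|f-f_0\|_\mu^2$ for the deterministic $f$. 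The expected noise term is bounded by $\mathfrak C_n(\mc F,\delta)$, since $\mathfrak C_n$ is a supremum over $f$ of the absolute value of exactly this expectation. Hence
\[
\mbb E\|\hat f-f_0\|_n^2\le \|f-f_0\|_\mu^2+\mathfrak C_n(\mc F,\delta)\quad\text{for every } f\in\mc F.
\]

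Next I relate $\mbb E\|\hat f-f_0\|_\mu^2$ to $\mbb E\|\hat f-f_0\|_n^2$. Since $\tilde X$ is independent of $\mc A_n$ and has marginal $\mu$,
\[
\|g-f_0\|_\mu^2=\mbb E\Bigl[\tfrac1n\textstyle\sum_i (g(\tilde X_i)-f_0(\tilde X_i))^2 \,\Big|\, \mc A_n\Bigr]
\]
for any $\mc A_n$-measurable $g$. Therefore
\[
\mbb E\|g-f_0\|_\mu^2-\mbb E\|g-f_0\|_n^2=\mbb E\Bigl[\tfrac1n\textstyle\sum_i \bigl((g(\tilde X_i)-f_0(\tilde X_i))^2-(g(X_i)-f_0(X_i))^2\bigr)\Bigr].
\]
I cannot plug in $g=\hat f$ directly and take a supremum over the whole class, so I pass to the cover. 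Writing $g=f_{k^*}$, the summand equals $h_{k^*}(X_i,\tilde X_i)$ from \eqref{hj}. This is bounded by $\sup_{j\in[\mathcal N_\delta]}|\frac1n\sum_i h_j(X_i,\tilde X_i)|$, whose expectation is $\mathfrak D_n(\mc F,\delta)$.

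The discretization cost comes from $|(a-c)^2-(b-c)^2|=|a-b|\,|a+b-2c|\le \delta\cdot 4B$, using $\|\hat f-f_{k^*}\|_\infty\le\delta$ and uniform boundedness (v). Swapping $\hat f\to f_{k^*}$ in the $\mu$-norm costs at most $4B\delta$, and swapping back $f_{k^*}\to\hat f$ in the empirical norm costs another $4B\delta$, for a total of $8B\delta$. This yields
\[
\mbb E\|\hat f-f_0\|_\mu^2\le \mbb E\|\hat f-f_0\|_n^2+\mathfrak D_n(\mc F,\delta)+8B\delta.
\]

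To finish, I combine the two displays and take the infimum over $f\in\mc F$. This gives $\omega(\mc F,\{f_0\})$ as defined in \eqref{eqn:general_uniform_approx_error}, which proves the first claim. For the second claim, I bound $\omega(\mc F,\{f_0\})\le\omega(\mc F,\mc G)$ when $f_0\in\mc G$, note that the right-hand side is then uniform in $f_0$, and take the supremum over $f_0\in\mc G$.

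The main obstacle is measurability and conditioning in the ghost-sample step. The random index $k^*$ depends on $(X_i,\xi_i)$, so the conditional-expectation identity must be applied to $f_{k^*}$, which is $\sigma(\mc A_n,\xi)$-measurable. I therefore need $\tilde X$ to be independent of the noise as well as of $\mc A_n$, and I would assume this in the construction of the ghost sample. A smaller point is that $\mathfrak C_n$ is defined with $\sup_f|\mbb E[\cdot]|$, so the bound $\mbb E[\text{noise term}]\le \mathfrak C_n$ holds for the particular $f$ used, which is all the argument needs.
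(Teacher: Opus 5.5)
Your proof is correct and uses the same argument as the paper, with the two halves in the opposite order: the ERM basic inequality with its expected noise term bounded by $\mathfrak C_n(\mathcal F,\delta)$, and the ghost-sample representation of $\|\cdot\|_\mu^2$. The passage to the $\delta$-cover costs $2\cdot 4B\delta$, and the remaining term is dominated by $\mathfrak D_n(\mathcal F,\delta)$. Your measurability remark is sharper than the paper, which asserts that $k^*$ is $\mathcal A_n=\sigma(X_1,\dots,X_n)$-measurable even though $\hat f$ depends on the noise $\xi_i$; so the ghost sample must indeed be independent of $(X_i,\xi_i)_{i\le n}$, with conditioning on that larger $\sigma$-algebra.
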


\begin{proof}
Fix \(\delta>0\). Since \(\hat f\in\mathcal F\), there exists an
\(\mathcal A_n\)-measurable random index \(k^*\in\mathcal [N_\delta]\) such
that
\[
    \|\hat f-f_{k^*}\|_\infty\le\delta .
\]

For every \(x\in\mathcal X\),
\[
\begin{aligned}
    |(\hat{f}(x)-f_0(x))^2 - (f_{k^*}(x)-f_0(x))^2|
    &=
    |\hat f(x)-f_{k^*}(x)|
    |\hat f(x)+f_{k^*}(x)-2f_0(x)|  \\
    &\le 4B\delta,
\end{aligned}
\]
where the last inequality follows as \(|\hat f(x)|,|f_{k^*}(x)|,|f_0(x)|\le B\).
Because \(\tilde X_i\sim\mu\) and is independent of \(\mathcal A_n\),
    \[
    \|\hat f -f_0\|_\mu^2
    =
    \mathbb E\left[
        \frac1n\sum_{i=1}^n
        (\hat f(\tilde X_i)-f_0(\tilde X_i))^2
        \,\middle|\,\mathcal A_n
    \right].
\]

Using the preceding \(4B\delta\) bound twice, once for \(\tilde X_i\) and
once for \(X_i\), gives
\[
\begin{aligned}
    \|\hat f-f_0\|_\mu^2
    \le
    \|\hat f-f_0\|_n^2
    +
    \mathbb E\left[
        \frac1n\sum_{i=1}^n h_{k^*}(X_i,\tilde X_i)
        \,\middle|\,\mathcal A_n
    \right]
    +8B\delta .
\end{aligned}
\]
Taking expectations and bounding the random index by the supremum over the
cover yields
\begin{align}
\label{eq:exp_fhat_minus_fzero_mu}
    \mathbb E\|\hat f-f_0\|_\mu^2
    \le
    \mathbb E\|\hat f-f_0\|_n^2
    +
    \mathbb E\left[
        \sup_{j \in [N_\delta]}
        \left|
        \frac1n\sum_{i=1}^n h_j(X_i,\tilde X_i)
        \right|
    \right]
    +8B\delta .
\end{align}

It remains to control the empirical norm term. By the ERM property, for any
\(f\in\mathcal F\),
\[
    \frac1n\sum_{i=1}^n (Y_i-\hat f(X_i))^2
    \le
    \frac1n\sum_{i=1}^n (Y_i-f(X_i))^2 .
\]
Substituting \(Y_i=f_0(X_i)+\xi_i\) and expanding gives
\begin{align}
\label{erm_form}
        \|\hat f-f_0\|_n^2
    \le
    \|f-f_0\|_n^2
    +
    \frac2n\sum_{i=1}^n \xi_i(\hat f(X_i)-f(X_i)).
\end{align}
Taking expectations, then taking the infimum over \(f\in\mathcal F\) in the first term and the supremum in the second, yields
    \begin{align}
    \label{eq:exp_fhat_minus_fzero_n}
        \mathbb E\|\hat f-f_0\|_n^2
        \le
        \inf_{f\in\mathcal F} \mathbb E\|f-f_0\|_n^2
        + \sup_{f\in\mathcal F} \left| \mathbb E\left[
            \frac2n\sum_{i=1}^n
            \xi_i(\hat f(X_i)-f(X_i))
        \right]\right|.
    \end{align}
Since \(X_i\sim\mu\) for every \(i\),
\begin{align}
\label{eq:relation_n_mu}
    \mathbb E\|f-f_0\|_n^2
    =
    \mathbb \|f-f_0\|_\mu^2 .
\end{align}
Using this with the definition of \(\mathfrak{C}_n(\mc F, \delta)\) yields
\begin{align}
    \label{eqn:erm_intermediate}
    \norm{\hat{f} - f_0}_\mu^2 \le \inf_{f\in\mathcal F} \mathbb E\|f-f_0\|_n^2 + \mathfrak{C}_n(\mc F, \delta).
\end{align}
Combining \eqref{eq:exp_fhat_minus_fzero_mu} with \eqref{eq:exp_fhat_minus_fzero_n}, \eqref{eq:relation_n_mu}, and \eqref{eqn:erm_intermediate} proves the first claim. The uniform version
follows by taking the supremum over \(f_0\in\mathcal G\) and using the
definition of \(\omega(\mathcal F,\mathcal G)\).
\end{proof}

\subsubsection{Stochastic error bound}
To control \(\mathfrak C_n(\mc F,\delta)\) and \(\mathfrak D_n(\mc F,\delta)\), we use the following results.

\begin{lemma}[Stochastic covariance bound]
\label{lemma:covariance_bound_clean}
Suppose Assumption~\ref{ass:general_dep_reg_clean} holds. Then there exist
constants \(C_1,C_2,C_3, C_4 >0\) such that, for all sufficiently large \(n\),
\[
\begin{aligned}
\mathfrak C_n(\mathcal F,\delta)
&\le C_1\kappa\delta
+
C_2 \frac{\log \mathcal N_\delta}{\sqrt n}
\left[
C_3\sqrt{\frac{\log n}{n}}\log\mathcal N_\delta
+
\sqrt\delta
+
C_4\sqrt{\mathbb E\|\hat f-f_0\|_n^2}
\right].
\end{aligned}
\]
\end{lemma}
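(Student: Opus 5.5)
We first reduce the supremum over \(f\in\mathcal F\) to a single random function. For fixed \(f\) the term \(\mathbb E[\frac2n\sum_i\xi_i f(X_i)]\) vanishes: by Assumption~\ref{ass:general_dep_reg_clean}(ii)--(iii), \(\xi_i\perp X_i\) and \(\mathbb E\xi_i=0\). Hence
\[
\mathfrak C_n(\mathcal F,\delta)=\Bigl|\mathbb E\Bigl[\tfrac2n\textstyle\sum_{i=1}^n\xi_i\hat f(X_i)\Bigr]\Bigr|
=\Bigl|\mathbb E\Bigl[\tfrac2n\textstyle\sum_{i=1}^n\xi_i(\hat f(X_i)-f_0(X_i))\Bigr]\Bigr|,
\]
where in the second step we subtract \(f_0\) for the same reason. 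We then replace \(\hat f\) by its cover projection \(f_{k^*}\). Since \(\|\hat f-f_{k^*}\|_\infty\le\delta\) and \(\mathbb E|\xi_i|\lesssim\kappa\) by sub-Gaussianity, the replacement costs at most \(C_1\kappa\delta\). It remains to bound \(\mathbb E[\frac2n\sum_i\xi_i g_{k^*}(X_i)]\) with \(g_j:=f_j-f_0\), uniformly over the \(\mathcal N_\delta\) elements of the cover.

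Next we normalise, following \citet[Proposition~2]{liu_generalisation_tau_mixing_2025} as adapted in the paper's sketch. Set \(a_j:=\max\{\|g_j\|_n,\sqrt\delta\}\) and \(Z_j:=\sum_i\xi_i g_j(X_i)/(\sqrt n\,a_j)\). Then
\[
\Bigl|\mathbb E\Bigl[\tfrac2n\textstyle\sum_i\xi_i g_{k^*}(X_i)\Bigr]\Bigr|\le \frac{2}{\sqrt n}\,\mathbb E\bigl[a_{k^*}\max_j|Z_j|\bigr]
\le\frac{2}{\sqrt n}\bigl(\mathbb E a_{k^*}^2\bigr)^{1/2}\bigl(\mathbb E\max_j Z_j^2\bigr)^{1/2}
\]
by Cauchy--Schwarz. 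We control \(\mathbb E a_{k^*}^2\) in two steps:
\[
\|g_{k^*}\|_n\le\|\hat f-f_0\|_n+\delta,
\qquad
\mathbb E a_{k^*}^2\lesssim \mathbb E\|\hat f-f_0\|_n^2+\delta .
\]
This produces the \(\sqrt\delta\) and \(C_4\sqrt{\mathbb E\|\hat f-f_0\|_n^2}\) terms. For \(\max_j Z_j^2\) we condition on \(X_{1:n}\). The paper's sub-Gaussian definition (iv) is stated "almost surely", which we read as conditional on the covariates; we also need the \(\xi_i\) to be independent across \(i\) given \(X\). Each \(Z_j\) is then conditionally sub-Gaussian with variance proxy \(\kappa^2\|g_j\|_n^2/a_j^2\le\kappa^2\). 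The maximal inequality for sub-Gaussian variables then gives \(\mathbb E\max_jZ_j^2\lesssim\kappa^2\log\mathcal N_\delta\), which is at most a constant times \((\log\mathcal N_\delta)^2\) since \(\mathcal N_\delta\ge 3\) for large \(n\). This yields the \(\frac{\log\mathcal N_\delta}{\sqrt n}[\sqrt\delta+C_4\sqrt{\cdots}]\) part.

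The remaining term \(C_3\sqrt{\log n/n}\,\log\mathcal N_\delta\) comes from the dependence among the \(X_i\), and this is the main obstacle. If the noise is only independent of \(X_i\) marginally, and not jointly of the whole design, the conditional sub-Gaussian argument fails. In that case we must instead treat \(\{\xi_i g_j(X_i)\}\) as a \(\tau\)-mixing sequence, obtaining its mixing property from Lemmas~\ref{lemma:tau_mixing_pairs_clean} and~\ref{lemma:tau_mixing_lipschitz_clean}. We would then apply the Bernstein-type inequality \citet[Lemma~1]{liu_generalisation_tau_mixing_2025} with effective sample size \(n_{\mathrm{eff}}\asymp n/\log n\) (Lemma~\ref{lemma:effective_sample_size_exp_decay_clean}), take a union bound over the cover, and integrate the tail. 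The variance part of that inequality reproduces the \(a_j\)-normalised term. The range part, of order \(\log\mathcal N_\delta/n_{\mathrm{eff}}\), gives the extra summand. After Cauchy--Schwarz it takes the form \(\frac{\log\mathcal N_\delta}{\sqrt n}\sqrt{\log n/n}\,\log\mathcal N_\delta\). Here we would need to truncate \(\xi_i\) at level \(\kappa\sqrt{\log n}\), absorbed into constants for large \(n\), so that the summands are bounded and Lipschitz as that lemma requires. Checking that truncation and the Lipschitz constants of \(g_j\) (Lemma~\ref{lemma:lipschitz_holder_combo_lipschitz_clean}) only cost constants is the delicate point. Collecting the three contributions gives the stated bound.
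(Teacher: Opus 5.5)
Your first two steps match the paper's. The fixed-$f$ term vanishes because $\mathbb E[\xi_i\mid X_i]=0$, so $\mathfrak C_n(\mathcal F,\delta)=\bigl|\mathbb E\bigl[\frac2n\sum_i\xi_i(\hat f(X_i)-f_0(X_i))\bigr]\bigr|$, and passing from $\hat f$ to $f_{k^*}$ costs $2\kappa\delta$. After that, the paper imports \citet[Proposition~2]{liu_generalisation_tau_mixing_2025} wholesale; there the dependence enters through $n_{\mathrm{eff}}$ and produces the $\log\mathcal N_\delta/\sqrt{n_{\mathrm{eff}}}$ term. It then substitutes $n_{\mathrm{eff}}\gtrsim n/\log n$ from Lemma~\ref{lemma:effective_sample_size_exp_decay_clean}. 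Your main route is genuinely different: you condition on $X_{1:n}$, normalise by $a_j$, and apply a sub-Gaussian maximal inequality. Under the hypothesis you add, it is correct and sharper. It gives $\sqrt{\log\mathcal N_\delta}$ in place of $\log\mathcal N_\delta$ and no dependence penalty, so the design dependence plays no role in $\mathfrak C_n$ at all. In that case the $C_3$ term is not an obstacle; the stated bound holds a fortiori.

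The gap is that the added hypothesis is not available. You need $\xi_1,\dots,\xi_n$ to be conditionally independent and sub-Gaussian given the whole design $X_{1:n}$. Assumption~\ref{ass:general_dep_reg_clean}(ii) only gives $X_i\perp\xi_i$ index by index. The hypothesis also fails in the intended application: for consecutive transitions, $\xi_i$ depends on $S_i'=S_{i+1}$, a coordinate of $X_{i+1}$, so $\mathbb E[\xi_i\mid X_{1:n}]\neq 0$ in general.

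For the hypotheses as stated you fall back on a sketch that does not go through as written:
\begin{itemize}
\item Lemma~\ref{lemma:tau_mixing_pairs_clean} requires the noise to be a $\tau$-mixing process independent of the entire design, which again is not assumed.
\item Truncating $\xi_i$ at $\kappa\sqrt{\log n}$ multiplies the range term of the Bernstein bound by $\sqrt{\log n}$, which is not a constant.
\item A Bernstein bound for the unnormalised sums $\sum_i\xi_ig_j(X_i)$ is governed by $\mathrm{Var}(\xi g_j(X))\le\kappa^2\|g_j\|_\mu^2$ and by $n_{\mathrm{eff}}$. It therefore does not by itself return the $\sqrt{\mathbb E\|\hat f-f_0\|_n^2}/\sqrt n$ term. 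Your $a_j$ depends on the whole sample, so the normalised summands are not functions of $(X_i,\xi_i)$ alone.
\item The Lipschitz constants of the cover elements grow with $N$ and $L$. The paper glosses over the same point by folding $L_f$ into $C$.
\end{itemize}

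Some cross-index condition on the noise is in fact unavoidable. Take $f_0\equiv0$ and $\xi_1=\dots=\xi_n=\xi$, where $\xi$ is a random sign times $\epsilon_0$, independent of $X$. Then (ii)--(iv) hold with $\kappa=\epsilon_0$, even under your conditional reading of (iv). Every ERM equals the constant $\xi$ on the sample, since constants are ReLU networks, so $\mathfrak C_n(\mathcal F,\delta)=2\epsilon_0^2$ for all $n$. For fixed $\delta<2\epsilon_0/C_1$, however, the right-hand side tends to $C_1\epsilon_0\delta<2\epsilon_0^2$.

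To close the proof, you must state the noise structure explicitly; the paper inherits it only implicitly through Liu et al. Then either run your conditional argument, or carry out the mixing-Bernstein argument in full for the joint process $(X_i,\xi_i)$.
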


\begin{proof}
Since \(\xi_i\) are centered random variables and \(\mbb{E}[\xi_i \mid X_i]= 0\) it follows that 
\begin{align*}
    \mbb{E}[\xi_i \cdot g(X_i) \mid X_i] = g(X_i)\cdot \mbb{E}[\xi_i \mid X_i] = 0    
\end{align*}
for any bounded, measurable, and deterministic function \(g\). For a fixed \(f \in \mc F\) this then implies that
\begin{align}
\label{eqn:mb_deterministic_expectations}
    \mbb{E}\left[\frac2n\sum_{i=1}^n\xi_i \cdot f(X_i)\right] = \mbb{E}\left[\frac2n\sum_{i=1}^n\xi_i \cdot f_0(X_i)\right] = 0,
\end{align}
since both, \(f\) and \(f_0\), are measurable, bounded, and deterministic.
Consequently,
\begin{align}
    \label{eqn:f_tq_intermediate_clean}
    \mbb{E}\left\{\frac2n\sum_{i=1}^n \xi_i \cdot \left[\hat{f}(X_i) - f(X_i) \right] \right\} = \mbb{E}\left \{\frac2n\sum_{i=1}^n \xi_i \cdot \left[\hat{f}(X_i) - f_0(X_i) \right] \right\}.
\end{align}
Note that, since \(\xi_i\) is sub-Gaussian, it is also sub-exponential and its moments are bounded by 
\begin{align}
\label{eqn:sub_gauss_moments_bound_clean}
    \mbb{E}\left[\abs{\xi_i}^m\right] \le \left(c \sqrt{m}\right)^m,
\end{align}
for some constant \(c > 0\) (see Proposition~2.6.1.(ii) of \citet{Vershynin_2026}).
Furthermore, \(\norm{f}_\infty , \norm{f_0}_\infty \le B\;\forall f \in \mc{F}\), \(\norm{X_i}_\infty \le 1\), and since \(X_i\) forms an exponentially \(\tau\)-mixing sequence, we can adapt Proposition~2 of \citet{liu_generalisation_tau_mixing_2025} to bound the RHS of~\eqref{eqn:f_tq_intermediate_clean}. Let
\(C_\delta=\{f_1,\dots,f_{\mathcal N_\delta}\}\) be a fixed
\(\delta\)-cover of \(\mathcal F\) in \(\|\cdot\|_\infty\). Since
\(\hat f\in\mathcal F\), there exists a random index
\(k^*=k^*(\hat f)\in[\mc{N}_\delta]\) such that
\[
\|\hat f-f_{k^*}\|_\infty\le \delta .
\]
The only adaptation of Proposition~2 of \citet{liu_generalisation_tau_mixing_2025} needed is to account for the sub-Gaussian parameter \(\kappa\) of the noise \(\xi\). Notice that, in their notation, we have an output dimension \(\mu_{q+1}=1\), and so we do not have the double summation as in their proof; moreover, they denote the noise term by \(\epsilon\), correpsonding to our \(\xi\). 

First, we apply the triangle inequality to the RHS of~\eqref{eqn:f_tq_intermediate_clean}, which gives
\begin{align}
\label{eqn:covariance_triangle_inequality_clean}
    \left| \mbb{E}\left \{\frac2n \sum_{i=1}^n \xi_i \cdot \left[\hat{f}(X_i) - f_0(X_i) \right]\right\} \right|
    &\le \left| \mbb{E} \left\{\frac2n\sum_{i=1}^n \xi_i \cdot \left[\hat{f}(X_i) - f_{k^*}(X_i) \right] \right\}\right| \notag\\
    &\quad+ \left|\mbb{E} \left\{\frac2n\sum_{i=1}^n \xi_i \cdot \Big[f_{k^*}(X_i) - f_0(X_i) \Big] \right\}\right|,
\end{align}
where \(f_{k^*}\) satisfies \(\norm{\hat{f} - f_{k^*}}_\infty \le \delta\). 
For the first term on the RHS of~\eqref{eqn:covariance_triangle_inequality_clean} we use the triangle inequality together with the bound \(\norm{\hat{f} - f_{k^*}}_\infty \le \delta\) and Cauchy-Schwarz inequality to obtain
\begin{align*}
    \left|\mbb{E} \left\{\frac2n\sum_{i=1}^n \xi_i \cdot \left[\hat{f}(X_i) - f_{k^*}(X_i) \right] \right\} \right|
    &\le \frac2n \sum_{i=1}^n \mbb{E}\left[\abs{\xi_i}\cdot \abs{\hat{f}(X_i) - f_{k^*}(X_i)}\right] \\
    &\le \frac2n \sum_{i=1}^n \delta\, \mbb{E}\left[\abs{\xi_i}\right] \\
    &\le \frac2n \cdot n \cdot \delta \sqrt{\mbb{E}[\xi_i^2]}.
\end{align*}
Since \(\xi_i \sim \mc{S}\mc{G}(\kappa)\), Lemma~\ref{lemma:sub_gauss_bounded_variance_clean} implies \(\mathrm{Var}(\xi_i) = \mbb{E}\left[\xi_i^2\right]\le \kappa^2\). It then follows that
\begin{align}
    \label{eqn:cauchy_schwartz_twice_clean}
    \left|\mbb{E} \left\{\frac2n\sum_{i=1}^n \xi_i \cdot \left[\hat{f}(X_i) - f_{k^*}(X_i) \right] \right\} \right| \le 2
    \,\kappa\cdot \delta.
\end{align}
This is the sub-Gaussian analogue of (23) in the proof of Proposition~2 of \citet{liu_generalisation_tau_mixing_2025}. 

(24) in \citet{liu_generalisation_tau_mixing_2025} (in our notation and setting) shows that:
\begin{align*}
    \abs{\mbb{E}\left\{\frac2n\sum_{i=1}^n\left[ \xi_i \hat{f}(X_i)\right]\right\}} 
    &=  \left| \mbb{E}\left \{\frac2n \sum_{i=1}^n \xi_i \cdot \left[\hat{f}(X_i) - f_0(X_i) \right]\right\} \right|
\end{align*}
Proposition~2 of \citet{liu_generalisation_tau_mixing_2025} is used to bound the LHS of the equality above, and hence it also holds for the RHS.
Therefore, we can now directly apply Proposition~2 of \citet{liu_generalisation_tau_mixing_2025} to bound the RHS of~\eqref{eqn:f_tq_intermediate_clean}, after accounting for the sub-Gaussian parameter \(\kappa\) in the second term of~\eqref{eqn:f_tq_intermediate_clean}. This then yields:
\begin{align}
\label{eqn:liu_proposition2_clean}
    \left|
\mathbb E\left\{
\frac2n \sum_{i=1}^n
\xi_i\bigl[\hat{f}(X_i)-f_0(X_i)\bigr]
\right\}
\right|
&\le
C\,\frac{\log \mathcal N_\delta}{\sqrt n}
\left\{
\frac{\log \mathcal N_\delta}{\sqrt{n_{\rm eff}}}
+\sqrt{\delta}
+\sqrt{2\,\mathbb E\|\hat{f}-f_0\|_n^2}
\right\} + 2\kappa \cdot \delta.
\end{align}

where \(n_\mathrm{eff}\) is the effective sample size induced by \(\tau\)-mixing and \(C > 0\) is a constant depending on \(B\), the Lipschitz constant \(L_f\), and the constant \(c > 0 \) in \eqref{eqn:sub_gauss_moments_bound_clean} (see Appendix~\ref{app:effective_sample_size} for a general discussion on effective sample sizes for \(\tau\)-mixing sequences).

Under exponential \(\tau\)-mixing, Lemma~\ref{lemma:effective_sample_size_exp_decay_clean} implies \(\exists\, C', C'' > 0\) such that, for sufficiently large \(n\),
\begin{align*}
    n_\mathrm{eff} \ge C' \cdot \frac{n}{\log n}
    \implies
    \frac{1}{\sqrt{n_\mathrm{eff}}} \le \sqrt{C''} \cdot \sqrt{\frac{\log n}{n}}.
\end{align*}
It then follows from~\eqref{eqn:f_tq_intermediate_clean} and~\eqref{eqn:liu_proposition2_clean} that
\begin{align*}
    &\left| \mbb{E}\left\{\frac2n \sum_{i=1}^n \xi_i \left[\hat{f}(X_i) - f(X_i)\right]\right\}\right|\\
    &\le C \,\frac{\log \mc{N}_\delta}{\sqrt{n}} \left(\sqrt{C''} \sqrt{\frac{\log n}{n}} \log{\mc{N}_\delta} + \sqrt{\delta} + \sqrt{2} \sqrt{\mbb{E}\left[\norm{\hat{f} - f_0}\right]}\right) + 2\kappa \cdot 
    \delta.
\end{align*}
Absorbing all absolute constants into \(C_1, C_2, C_3, C_4 > 0\) we conclude that
\begin{align*}
    &\left|\mbb{E} \left\{ \frac2n \sum_{i=1}^n \xi_i \cdot \left[\hat{f}(X_i) - f(X_i) \right]\right\}\right| \\
    &\quad \le C_1\,\kappa\cdot \delta + C_2 \frac{\log \mc{N}_\delta}{\sqrt{n}} \left[ C_3 \sqrt{\frac{\log n}{n}} \log \mc{N}_\delta + \sqrt{\delta} + C_4 \sqrt{\mbb{E}\left[\norm{\hat{f} - f_0}_n^2\right]} \right],
\end{align*}
where we take
\begin{align*}
    C_1 = 2, \quad C_2 = 2C, \quad C_3 = \sqrt{C''}, \quad C_4 = \sqrt{2}.
\end{align*}
Finally, we take a supremum over all \(f \in \mc{F}\) on both sides of the last inequality to complete the proof.
\end{proof}

\begin{lemma}[Squared-differences bound]
\label{lemma:squared_differences_bound_clean}
Suppose Assumption~\ref{ass:general_dep_reg_clean} holds. Then there exists
\(C>0\) such that, for all sufficiently large \(n\),
\[
\mathfrak D_n(\mathcal F,\delta)
\le
C B\sqrt{\frac{\log n}{n}}\,
\log \mathcal N_\delta
\left(
\delta+
\sqrt{\mathbb E\|\hat f-f_0\|_\mu^2}
\right).
\]
\end{lemma}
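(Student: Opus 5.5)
The plan follows the route announced in the proof sketch of Theorem~\ref{thm:main_result}: establish that each summand sequence \(\{h_j(X_i,\tilde X_i)\}_{i=1}^n\) is a bounded, centered, exponentially \(\tau\)-mixing real sequence, apply the Bernstein-type inequality of \citet[Lemma~1]{liu_generalisation_tau_mixing_2025} to each \(j\), take a union bound over the \(\mathcal N_\delta\) elements of the cover, and integrate the resulting tail.

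First, centering. Since \(\tilde X_i\) and \(X_i\) have the same marginal \(\mu\), \(\mathbb E[h_j(X_i,\tilde X_i)]=0\) for every fixed \(j\). Boundedness follows from \(|f_j|,|f_0|\le B\), which gives \(|h_j|\le 4B^2\).

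Second, mixing. By Lemma~\ref{lemma:tau_mixing_pairs_clean}, the pair process \(\{(X_i,\tilde X_i)\}\) is exponentially \(\tau\)-mixing, since the two components are independent and each is exponentially \(\tau\)-mixing; the constants at most add. By Lemma~\ref{lemma:lipschitz_holder_combo_lipschitz_clean}, \(h_j\) is Lipschitz on \(\mathcal X\times\mathcal X\). Its Lipschitz constant is \(4B\) times a common bound on the Lipschitz constants of \(f_j\) and \(f_0\): networks with bounded weights are Lipschitz, and \(f_0\) is Lipschitz as a composition of Hölder functions with \(s_{ij}\ge1\). Lemma~\ref{lemma:tau_mixing_lipschitz_clean} then transfers exponential \(\tau\)-mixing to \(\{h_j(X_i,\tilde X_i)\}_i\), at the cost of multiplying the coefficient by this Lipschitz constant.

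Third, the variance proxy. The plan is to bound
\[
\operatorname{Var}\bigl(h_j(X,\tilde X)\bigr)\le 2\,\mathbb E(f_j-f_0)^4(X)\le 8B^2\|f_j-f_0\|_\mu^2 .
\]
Feeding this into the Bernstein-type inequality of \citet{liu_generalisation_tau_mixing_2025} should give a tail of the form
\[
\mathbb P\Bigl(\Bigl|\tfrac1n\textstyle\sum_i h_j\Bigr|>t\Bigr)\le C\exp\Bigl(-\frac{c\,n_{\rm eff}\,t^2}{B^2\|f_j-f_0\|_\mu^2+Bt}\Bigr),
\]
with \(n_{\rm eff}\gtrsim n/\log n\) by Lemma~\ref{lemma:effective_sample_size_exp_decay_clean}.

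Fourth, the union bound and integration. A union bound over \(j\in[\mathcal N_\delta]\), followed by integrating the tail, gives
\[
\mathfrak D_n\lesssim B\sqrt{\tfrac{\log n}{n}\log\mathcal N_\delta}\,\max_j\|f_j-f_0\|_\mu+B^2\tfrac{\log n}{n}\log\mathcal N_\delta .
\]
Because \(\log\mathcal N_\delta\ge1\), both \(\sqrt{\log\mathcal N_\delta}\) and the linear term can be bounded crudely by \(\sqrt{\log n/n}\,\log\mathcal N_\delta\), which accounts for the stated factor.

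The main obstacle is that the supremum over \(j\) produces \(\max_j\|f_j-f_0\|_\mu\) rather than the data-dependent quantity \(\|\hat f-f_0\|_\mu\). The remedy I would try first is a self-normalized (weighted) union bound. One considers \(h_j/(\|f_j-f_0\|_\mu+\delta)\) in place of \(h_j\), so that the supremum over \(j\) is taken of normalized deviations, and then evaluates at the random index \(k^*\). This yields a bound proportional to \(\|f_{k^*}-f_0\|_\mu+\delta\le\|\hat f-f_0\|_\mu+2\delta\). Taking expectations and applying Jensen's inequality (\(\mathbb E\|\hat f-f_0\|_\mu\le\sqrt{\mathbb E\|\hat f-f_0\|_\mu^2}\)), together with Cauchy--Schwarz on the product of the random normalization with the supremum, produces the factor \(\delta+\sqrt{\mathbb E\|\hat f-f_0\|_\mu^2}\). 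If that route fails, a fallback is peeling the cover into shells according to \(\|f_j-f_0\|_\mu\), which costs only an extra \(\log\) factor that can be absorbed for large \(n\).
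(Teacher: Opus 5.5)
Your diagnosis of the obstacle is correct. It also pinpoints the weak step in the paper's own proof: the paper normalizes by a single deterministic $\Upsilon$ built from $\max_j\|f_j-f_0\|_\mu$, and then invokes $\max_j\|f_j-f_0\|_\mu\le\delta+\|\hat f-f_0\|_\mu$, which holds only for $j=k^*$. Your self-normalized remedy (the device behind the oracle inequality of \citet{schmidt_hieber_bounds}) is the right mechanism, but it does not prove the inequality as stated. The left-hand side $\mathfrak D_n(\mathcal F,\delta)=\mathbb E\sup_j|\frac1n\sum_i h_j(X_i,\tilde X_i)|$ depends only on the cover, $f_0$ and the law of $(X_i,\tilde X_i)$. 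It does not involve $\hat f$, so it cannot be controlled through the closeness of $\hat f$ to $f_0$. A weighted supremum evaluated at $k^*$ bounds only $\mathbb E|\frac1n\sum_i h_{k^*}(X_i,\tilde X_i)|$. That smaller quantity is what Proposition~\ref{prop:abstract_excess_risk_decomp_clean} actually needs, provided one stops before ``bounding the random index by the supremum over the cover''. So your route requires changing the object $\mathfrak D_n$, not just its proof; the peeling fallback localizes at $k^*$ in the same way.

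Two quantitative steps also fail.

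\emph{The weight floor.} With weights $w_j=\|f_j-f_0\|_\mu+\delta$, the normalized summands are only bounded by $4B^2/\delta$. The linear part of the Bernstein tail, multiplied back by $w_{k^*}$, then leaves a term of order $\frac{B^2}{\delta}\frac{\log n}{n}\log\mathcal N_\delta\sqrt{\mathbb E\|\hat f-f_0\|_\mu^2}$. This exceeds the target by a factor $B\delta^{-1}\sqrt{\log n/n}$, which is large when $\delta\ll\sqrt{\log n/n}$ (e.g.\ for the later choice $\delta=(NL)^{-4s^*/d^*}$ with $s^*>d^*$). The floor should be $\sqrt{\log\mathcal N_\delta/n_{\rm eff}}$ rather than $\delta$, i.e.\ $w_j=\max\{\sqrt{\log\mathcal N_\delta/n_{\rm eff}},\|f_j-f_0\|_\mu\}$, the analogue of Schmidt-Hieber's $r_j$.

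\emph{The additive term.} Both in your step four and with the corrected floor, an additive term of order $B^2\frac{\log n}{n}\log\mathcal N_\delta$ remains, and it carries no factor $\delta+\sqrt{\mathbb E\|\hat f-f_0\|_\mu^2}$. Your claim that it ``accounts for the stated factor'' is false: absorbing it would require $\delta+\sqrt{\mathbb E\|\hat f-f_0\|_\mu^2}\gtrsim B\sqrt{\log n/n}$, which is not assumed. The paper meets the same term (its branch $\Upsilon=4B^2\sqrt{\log\mathcal N_\delta/n_{\rm eff}}$). It discards it on the grounds that it is dominated by the $\frac{\log n}{n}\log^2\mathcal N_\delta$ term already present in Proposition~\ref{prop:general_risk_bound_clean}.

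What your corrected argument actually yields, and what suffices downstream, is
\[
\mathbb E\left|\frac1n\sum_{i=1}^n h_{k^*}(X_i,\tilde X_i)\right|\le C\sqrt{\frac{\log n}{n}}\log\mathcal N_\delta\left(\delta+\sqrt{\mathbb E\|\hat f-f_0\|_\mu^2}\right)+C\frac{\log n}{n}\log\mathcal N_\delta ,
\]
with $C$ depending on $B$. You should state and prove this version rather than the lemma as written.
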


\begin{proof}
Recall the definition of \(h_j\) as in~\eqref{hj}
\begin{align*}
    h_j(x, y) = \left[f_j(y) - f_0(y)\right]^2 - \left[f_j(x) - f_0(x)\right]^2,
\end{align*}
where 
\(f_j \in C_{\delta}\) is in the minimal $\delta$-cover of $\mathcal{F}$ with all functions being \(L_f\)-Lipschitz and \(f_0 \in \mc G\) is \(H_f\)-H\"{o}lder with order \(\beta_f\) by Assumption~\ref{ass:general_dep_reg_clean}.
Lemma~\ref{lemma:lipschitz_holder_combo_lipschitz_clean} then implies that \(h_j\) is \(L_h\)-Lipschitz for any \(j \in [\mc{N}_\delta]\), where we define
\begin{align}
    L_h &:= B\cdot \left(L_{f} + H_f \cdot \max\left\{1, D_{\mc{X}}^{\beta_f - 1}\right\}\right).
\end{align}
Here \(\mc{X}\) is compact and hence its diameter satisfies \(D_\mc{X} < \infty\).
By construction of the ghost sample, we have that \(\{X_i\}_{i \in [n]}\) and \(\{\tilde{X}_i\}_{i \in [n]}\) are independent \(\tau\)-mixing sequences, with 
\begin{align*}
    \tau_X(k) = \tau_{\tilde{X}}(k) \le c_0 \exp(-c_1 k) \quad \forall k \ge 1.
\end{align*}
Lemma~\ref{lemma:tau_mixing_pairs_clean} implies that the paired process \(\{Z_i := (X_i, \tilde{X}_i)\}_{i \in [n]}\) forms a \(\tau\)-mixing sequence with 
\begin{align*}
    \tau_Z(k) &\le \tau_{X}(k) + \tau_{\tilde{X}}(k) \\
    &\le 2 c_0 \exp(-c_1 k) \quad \forall k \ge 1.
\end{align*}
Lemma~\ref{lemma:tau_mixing_lipschitz_clean} then implies that the process 
\begin{align*}
   \{h_j(Z_i)\} _{i \in [n]} = \{h_j(X_i, \tilde{X}_i)\}_{i \in [n]}
\end{align*}
is \(\tau\)-mixing for any \(j \in [\mc{N}_\delta]\) with coefficients given by
\begin{align}
    \label{eqn:tau_coeffs_h_clean}
    \tau_{h}(k) &\le L_h \tau_Z(k) \notag\\
    &= L_h \cdot 2 c_0 \exp(-c_1k) \notag\\
    &= \tilde{c}_0 \exp(-c_1 k),
\end{align}
where we write \(\tilde{c}_0 = 2 L_h c_0\).

We will now use the concentration inequality in \citet[Lemma~1]{liu_generalisation_tau_mixing_2025}.
First, we note that \(h_j\) is centered and absolutely bounded (satisfying the first two conditions in Lemma~1 of \citep{liu_generalisation_tau_mixing_2025}). Since \(\norm{f_j}_\infty\le B\) and \(\norm{f_0}_\infty\le B\), we have
\begin{align*}
    |h_j(x,y)|
    &= \left|\left(f_j(y)-f_0(y)\right)^2-\left(f_j(x)-f_0(x)\right)^2\right| \\
    &\le \left|f_j(y)-f_0(y)\right|^2 + \left|f_j(x)-f_0(x)\right|^2 \\
    &\le 4B^2 + 4B^2 = 8B^2.
\end{align*}
Hoeffding's Lemma then implies that \(h_j\) is sub-Gaussian (see e.g., \citet{Massart2007}[Lemma~2.6]),
\begin{align*}
    h_j \sim \mc{S}\mc{G}(8B^2).
\end{align*}
Lemma~\ref{lemma:sub_gauss_bounded_variance_clean} then ensures that the variance of \(h_j\) is bounded (satisfying the third condition of Lemma~1 of \citep{liu_generalisation_tau_mixing_2025}) by
\begin{align*}
    \mathrm{Var}(h_j) = \mbb{E}_{\sigma \times \sigma}\left[h_j^2 \right] \le 64 B^4.
\end{align*}
This verifies that our sequence \(\{h_j(X_i, \tilde{X}_i)\}_{i \in [n]}\) satisfies the conditions in Lemma~1 of \citet{liu_generalisation_tau_mixing_2025}.
We now refine the variance bound following~(C.50) of \citet{fan_dqn}. Since \(X_i\) and \(\tilde{X}_i\) are independent with the same marginal law \(\mu\), we have that
\begin{align}
    \mathrm{Var}[h_j(X_i, \tilde{X}_i)] &= 2 \mathrm{Var}\left[\left(f_j(X_i) - f_0(X_i)\right)^2\right] \notag \\
    \label{eqn:h_j_moment_variance_bound_clean}
    &\quad\le 2 \,\mbb{E}\left[\left(f_j(X_i) - f_0(X_i)\right)^4\right] \\
    \label{eqn:h_j_variance_bound_upsilon}
    &\quad \le \Upsilon^2, 
\end{align}
where
\begin{align}
\label{eqn:upsilon_variance_proxy_clean}
\Upsilon^2:=\max\left(16B^4 \cdot \frac{\log \mc N_\delta}{n_{\mathrm{eff}}^{(h)}},\;8B^2 \cdot \max_{j\in[\mc N_\delta]}\norm{f_j- f_0}_\mu^2
\right),
\end{align}
and
\begin{align}
    n_\mathrm{eff}^{(h)} = \max\left\{1 \le m \le n: \tau_h\left(\left\lfloor \frac{n}{m} \right\rfloor\right) \le \frac{4B^2}{\sqrt{m}}\right\}
\end{align}
is the effective sample size of \(\{h_j(X_i, \tilde{X}_i)\}_{i \in [n]}\).
Recall that 
\begin{align*}
\tau_X(k) \le c_0 \exp(-c_1k), \quad \tau_h(k)\le \tilde{c}_0 \exp(-c_1 k).
\end{align*}
Therefore, we have by Corollary~\ref{cor:effective_sample_size_comparison} that \(n_\mathrm{eff} \asymp n_\mathrm{eff}^{(h)}\) and  hence we do not distinguish between the two effective sample sizes \(n_\mathrm{eff}\) and \(n_\mathrm{eff}^{(h)}\) and simply write \(n_\mathrm{eff}\) throughout the remainder of this proof.
By the definition of \(\Upsilon\),
\begin{align*}
    \Upsilon^2 \ge 16B^4 \frac{\log \mc N_\delta}{n_{\mathrm{eff}}},
\end{align*}
and hence
\begin{align*}
    \Upsilon \ge 4B^2 \sqrt{\frac{\log \mc N_\delta}{n_{\mathrm{eff}}}}.
\end{align*}
Therefore, we have that
\begin{align}
\label{eqn:normalized_uniform_bound}
    \left|\frac{h_j(X_i,\tilde X_i)}{\Upsilon}\right|
    \le
    \frac{8B^2}{\Upsilon}
    \le
    2\sqrt{\frac{n_{\mathrm{eff}}}{\log \mc N_\delta}},
    \qquad \forall i\in[n],\; \forall j\in[\mc N_\delta].
\end{align}
Using again \(|f_j(X_i)-f_0(X_i)|\le 2B\) we obtain
\begin{align*}
    \left(f_j(X_i)-f_0(X_i)\right)^4
    \le 4B^2 \left(f_j(X_i)-f_0(X_i)\right)^2,
\end{align*}
and therefore by the variance bound in~\eqref{eqn:h_j_moment_variance_bound_clean},
\begin{align*}
    \mathrm{Var}\!\left(h_j(X_i,\tilde X_i)\right)
    &\le 2 \cdot 4B^2 \, \mbb E\!\left[\left(f_j(X_i)-f_0(X_i)\right)^2\right] \\
    &= 8B^2 \norm{f_j-f_0}_\mu^2.
\end{align*}
Since, by definition,
\begin{align*}
    \Upsilon^2 \ge 8B^2 \max_{j\in[\mc N_\delta]} \norm{f_j-f_0}_\mu^2
    \ge 8B^2 \norm{f_j-f_0}_\mu^2,
\end{align*}
it follows that
\begin{align}
\label{eqn:normalized_variance_bound}
    \mathrm{Var}\!\left(\frac{h_j(X_i,\tilde X_i)}{\Upsilon}\right)
    =
    \frac{\mathrm{Var}\!\left(h_j(X_i,\tilde X_i)\right)}{\Upsilon^2}
    \le 1,
    \qquad \forall j\in[\mc N_\delta].
\end{align}

Furthermore, we note that the normalized random variables \(h_j(X_i, \tilde{X}_i) / \Upsilon\) are still sub-Gaussian. Therefore all conditions of Lemma~1 of \citet{liu_generalisation_tau_mixing_2025} are satisfied by the normalized process.
Next, we define
\begin{align}
\label{eqn:T_concentration_parameter}
    T := \underset{j \in [\mc{N}_\delta]}{\sup} \abs{\sum_{i=1}^n \frac{h_j(X_i, \tilde{X}_i)}{\Upsilon}}.
\end{align}
We now apply Lemma~1 of \citet{liu_generalisation_tau_mixing_2025} together with a union bound to obtain, for all \(t \ge 0\)
\begin{align*}
    \mbb{P}(T \ge t) \le 2 \mc{N}_\delta \exp\left(-\frac{t}{\phi(n)}\right),
\end{align*}
where we define
\begin{align}
    \label{eqn:phi_n_gamma_n_clean}
    \phi(n) := \frac{n \cdot \left[13 \sqrt{n_\mathrm{eff}} + 21 \,\Gamma_n\right]}{n_\mathrm{eff}}, \qquad \Gamma_n := C_1 \sqrt{\frac{n_\mathrm{eff}}{\log \mc{N}_\delta}},
\end{align}
where \(C_1 := 2\).
Since \(T\) is non-negative, we can integrate over this bound in probability to obtain a bound in expectation
\begin{align}
\label{eqn:expectation_integral_bound}
    \mbb{E}(T) &= \int_0^{\infty} \mbb{P}(T \ge t)dt \le \theta + \int_{\theta}^\infty \mbb{P}(T \ge t)dt \notag\\
    &= \theta + 2\, \mc{N}_\delta \cdot \phi(n) \cdot \exp\left(-\,\frac{\theta}{\phi(n)}\right) =: g(\theta)
\end{align}
for any \(\theta \ge 0\).
Noticing that \(g(\theta)\) is convex in \(\theta\), we can minimize this upper bound by solving for \(\theta\) in 
\begin{align*}
    g'(\theta) = 0,
\end{align*}
which yields
\begin{align*}
    \theta = \phi(n) \cdot \log(2\,\mc{N}_\delta).
\end{align*}
Therefore, we have that
\begin{align}
\label{eqn:intermediate_expectation_bound_clean}
    \mbb{E}[T]
    &\le \phi(n) \log(2\mc{N}_\delta) + 2\, \mc{N}_\delta \phi(n)\exp \left(- \log(2 \mc{N}_\delta) \right) \notag \\
    &= \phi(n) \log(2\mc{N}_\delta) + \phi(n).
\end{align}
Since \(\mc{N}_\delta \ge 3 \ge 2\), we have \(\log \mc{N}_\delta \ge \log 2\), and hence
\begin{align*}
\phi(n)
\le \frac{1}{\log 2}\,\phi(n)\log \mc{N}_\delta.
\end{align*}
Also, since \(\mc{N}_\delta \ge 2\),
\begin{align*}
\log(2\mc{N}_\delta)=\log 2+\log \mc{N}_\delta \le 2\log \mc{N}_\delta.
\end{align*}
Substituting these two bounds into~\eqref{eqn:intermediate_expectation_bound_clean}, we obtain
\begin{align}
\label{eqn:intermediate_expectation_bound_no_plus_one}
    \mbb{E}[T]
    \le \left(2+\frac{1}{\log 2}\right)\phi(n)\log \mc{N}_\delta.
\end{align}
Therefore,
\begin{align}
\label{eqn:expectation_bound_intermediate_no_plus_one_clean}
     \mbb{E} \left[\underset{j \in [\mc{N}_\delta]}{\sup} \left|\frac{\Upsilon}{n} \sum_{i=1}^n \frac{h_j(X_i, \tilde{X}_i)}{\Upsilon} \right|\right]
     &\le \Upsilon\left(2+\frac{1}{\log 2}\right)\frac{\phi(n)}{n}\log \mc{N}_\delta.
\end{align}
Since \(\{h_j(X_i, \tilde{X}_i)\}_{i \in [n]}\) is exponentially \(\tau\)-mixing with coefficients given in~\eqref{eqn:tau_coeffs_h_clean}, and since \(\Upsilon\) is a constant, the normalized process \(\{h_j(X_i, \tilde{X}_i) / \Upsilon\}_{i \in [n]}\) is also exponentially \(\tau\)-mixing. Therefore, Lemma~\ref{lemma:effective_sample_size_exp_decay_clean} implies that there exists a constant \(C_2>0\) such that, for all sufficiently large \(n\),
\begin{align*}
    n_\mathrm{eff} \ge C_2 \frac{n}{\log n}.
\end{align*}
Equivalently, there exists a constant \(C_3>0\) such that, for all sufficiently large \(n\),
\begin{align*}
    \frac{1}{n_\mathrm{eff}} \le C_3 \frac{\log n}{n}.
\end{align*}
Substituting this bound into the definition of \(\phi(n)\) in~\eqref{eqn:phi_n_gamma_n_clean}, we obtain
\begin{align*}
    \phi(n) &= n\cdot \left[13\sqrt{C_3}\sqrt{\frac{\log n}{n}} + 21\,C_1 \sqrt{C_3}\sqrt{\frac{\log n}{n}} \frac{1}{\sqrt{\log \mc{N}_\delta}}\right]\\
    &= n \cdot \left[C_4 \sqrt{\frac{\log n}{n}} + C_5 \sqrt{\frac{\log n}{n}}\frac{1}{\sqrt{\log \mc{N}_\delta}}\right],
\end{align*}
where we set \(C_4 = 13 \sqrt{C_3}, C_5 = 21 \,C_1\sqrt{C_3}\). Substituting this into~\eqref{eqn:expectation_bound_intermediate_no_plus_one_clean} we obtain
\begin{align*}
    \mbb{E} \left[\underset{j \in [\mc{N}_\delta]}{\sup} \left|\frac{\Upsilon}{n} \sum_{i=1}^n \frac{h_j(X_i, \tilde{X}_i)}{\Upsilon} \right|\right]
     &\le \Upsilon C_6 \left[C_4 \sqrt{\frac{\log n}{n}} + C_5 \sqrt{\frac{\log n}{n}}\frac{1}{\sqrt{\log \mc{N}_\delta}}\right] \log \mc{N}_\delta\\
     &= \Upsilon C_6 \left[C_4 \sqrt{\frac{\log n}{n}}\log \mc{N}_\delta +C_5\sqrt{\frac{\log n}{n}}\sqrt{\log \mc{N}_\delta} \right].
\end{align*}
where \(C_6 = 2 + \frac{1}{\log 2}\). 
Now, since \(\mc{N}_\delta \ge 3\) we have \(\log \mc{N}_\delta \ge 1\) and therefore there exists a constant \(C_7 > 0 \) such that
\begin{align}
\label{eqn:expectation_bound_upsilon_clean}
    \mbb{E} \left[\underset{j \in [\mc{N}_\delta]}{\sup} \left|\frac{\Upsilon}{n} \sum_{i=1}^n \frac{h_j(X_i, \tilde{X}_i)}{\Upsilon} \right|\right] &\le \Upsilon C_6 C_4 C_7 \sqrt{\frac{\log n}{n}} \log \mc{N}_\delta\notag\\
    &=C_8 \cdot \Upsilon \sqrt{\frac{\log n}{n}}\log \mc{N}_\delta,
\end{align}
where we take \(C_8 := C_4 C_6 C_7\). 

Following \citet{fan_dqn} we note that, by the definition of \(\Upsilon\) in~\eqref{eqn:upsilon_variance_proxy_clean} we have that 
\begin{align*}
    \Upsilon &\le \max \left(4B^2 \sqrt{\frac{\log \mc{N}_\delta}{n_\mathrm{eff}}}, \sqrt{8}B \underset{j \in [\mc{N}_\delta]}{\max} \norm{f_j - f_0}_\mu\right) \\
    &\quad\le \max \left(4B^2 \sqrt{\frac{\log \mc{N}_\delta}{n_\mathrm{eff}}}, \sqrt{8}B \left(\delta + \norm{\hat{f} - f_0}_\mu \right)\right).
\end{align*}
We need only consider the case where \(\Upsilon \le \sqrt{8}B\left(\delta + \norm{\hat{f} - f_0}_\mu\right)\). This is because, when using the case \(\Upsilon \le 4\,B^2 \sqrt{\nicefrac{\log \mc{N}_\delta}{n_\mathrm{eff}}}\) we obtain a smaller upper bound in the subsequent theorem and hence the bound obtained from \(\Upsilon \le \sqrt{8}B\left(\delta + \norm{\hat{f} - f_0}_\mu\right)\) holds in both cases. Substituting this into~\eqref{eqn:expectation_bound_upsilon_clean} we obtain
\begin{align*}
    \mbb{E} \left[\underset{j \in [\mc{N}_\delta]}{\sup} \left|\frac{\Upsilon}{n} \sum_{i=1}^n \frac{h_j(X_i, \tilde{X}_i)}{\Upsilon} \right|\right] \le C_9 B \sqrt{\frac{\log n}{n}}\log \mc{N}_\delta \cdot \left(\delta + \norm{\hat{f} - f_0}_\mu\right),
\end{align*}
where \(C_9 := \sqrt{8}C_8\).
Setting \(C := C_9\) completes the proof.
\end{proof}

Combining Proposition~\ref{prop:abstract_excess_risk_decomp_clean} with Lemmas~\ref{lemma:covariance_bound_clean} and~\ref{lemma:squared_differences_bound_clean} yields the following bound.
\begin{proposition}[Generic risk bound under dependent covariates]
\label{prop:general_risk_bound_clean}
Suppose Assumption~\ref{ass:general_dep_reg_clean} holds and
\(\mathcal N_\delta\ge 3\). Then there exist constants
\(C_1,C_2,C_3,C_4>0\), independent of \(n\), \(\delta\), and
\(\mathcal N_\delta\), such that, for every \(\delta>0\) and all sufficiently
large \(n\),
\[
\begin{aligned}
\mathbb E\|\hat f-f_0\|_\mu^2
&\le
C_1\,\omega(\mathcal F,\{f_0\})
+
C_2 B^2\frac{\log n}{n}\log^2\mathcal N_\delta  \\
&\quad+
C_3
\max\left\{
B\delta\sqrt{\frac{\log n}{n}},
\frac{\sqrt\delta}{\sqrt n}
\right\}
\log\mathcal N_\delta
+
C_4(\kappa+B)\delta .
\end{aligned}
\]
Consequently, if \(f_0\in\mathcal G\), then
\[
\begin{aligned}
\sup_{f_0\in\mathcal G}
\mathbb E\|\hat f-f_0\|_\mu^2
&\le
C_1\,\omega(\mathcal F,\mathcal G)
+
C_2 B^2\frac{\log n}{n}\log^2\mathcal N_\delta  \\
&\quad+
C_3
\max\left\{
B\delta\sqrt{\frac{\log n}{n}},
\frac{\sqrt\delta}{\sqrt n}
\right\}
\log\mathcal N_\delta
+
C_4(\kappa+B)\delta .
\end{aligned}
\]
\end{proposition}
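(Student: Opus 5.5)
The proof combines the three preceding results and then solves a quadratic inequality in the unknown risk. Write
\[
R := \mathbb E\|\hat f-f_0\|_\mu^2,
\qquad
R_n := \mathbb E\|\hat f-f_0\|_n^2,
\qquad
\varepsilon_n := \sqrt{\tfrac{\log n}{n}}.
\]
I start from Proposition~\ref{prop:abstract_excess_risk_decomp_clean}:
\[
R \le \omega(\mathcal F,\{f_0\}) + \mathfrak C_n(\mathcal F,\delta) + \mathfrak D_n(\mathcal F,\delta) + 8B\delta .
\]
I then insert Lemma~\ref{lemma:covariance_bound_clean} for $\mathfrak C_n$ and Lemma~\ref{lemma:squared_differences_bound_clean} for $\mathfrak D_n$. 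In $\mathfrak D_n$, I move the square root outside the expectation by Jensen, $\mathbb E\|\hat f-f_0\|_\mu\le\sqrt R$, which matches the statement of that lemma.

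The first difficulty is that $\mathfrak C_n$ involves the empirical risk $R_n$, not $R$. I will use \eqref{erm_form} (the ERM inequality before the infimum is taken), together with \eqref{eq:relation_n_mu} and Lemma~\ref{lemma:covariance_bound_clean} itself. This gives a self-bounding inequality of the form
\[
R_n \le \omega + a + b\sqrt{R_n},
\qquad
b \asymp \frac{\log\mathcal N_\delta}{\sqrt n}.
\]
Applying $x\le A+b\sqrt x \Rightarrow x\le 2A+b^2$ then yields
\[
R_n \lesssim \omega + C_1\kappa\delta + \varepsilon_n\frac{\log^2\mathcal N_\delta}{\sqrt n} + \frac{\sqrt\delta\log\mathcal N_\delta}{\sqrt n} + \frac{\log^2\mathcal N_\delta}{n}.
\]
Each term here is already of the form allowed in the target bound. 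If this route is awkward, a fallback is to note that the $\sqrt{R_n}$ term arises only as a bound on $\sqrt{\mathbb E\|\hat f-f_0\|_n^2}$, and to replace it by $\sqrt{R}$ plus a ghost-sample fluctuation of the same order as $\mathfrak D_n$.

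Next I collect everything into an inequality in $R$ alone:
\[
R \le A + \beta\sqrt R,
\]
with
\[
\beta \asymp B\varepsilon_n\log\mathcal N_\delta + \frac{\log\mathcal N_\delta}{\sqrt n},
\qquad
A \asymp \omega + (\kappa+B)\delta + \varepsilon_n\frac{\log^2\mathcal N_\delta}{\sqrt n} + \frac{\sqrt\delta\log\mathcal N_\delta}{\sqrt n} + B\delta\varepsilon_n\log\mathcal N_\delta .
\]
Applying the same elementary fact gives $R\le 2A+\beta^2$. The square $\beta^2$ is bounded by a constant times $B^2\frac{\log n}{n}\log^2\mathcal N_\delta$, using $\log n\ge 1$ for large $n$ to absorb $\frac{\log^2\mathcal N_\delta}{n}$. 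The cross term satisfies $\varepsilon_n\frac{\log^2\mathcal N_\delta}{\sqrt n}=\frac{\sqrt{\log n}}{n}\log^2\mathcal N_\delta\le \frac{\log n}{n}\log^2\mathcal N_\delta$. The remaining $\delta$-dependent terms combine into $\max\{B\delta\varepsilon_n,\sqrt\delta/\sqrt n\}\log\mathcal N_\delta$ and $(\kappa+B)\delta$.

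The assumption $\mathcal N_\delta\ge3$ ensures $\log\mathcal N_\delta\ge1$, so lower powers of $\log\mathcal N_\delta$ can be dominated by higher ones. The uniform version follows by taking the supremum over $f_0\in\mathcal G$: the constants depend only on $B$, $\kappa$, the mixing constants, and the Lipschitz/Hölder constants, which are uniform over the class.

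I expect the main obstacle to be the empirical-versus-population norm coupling. Lemma~\ref{lemma:covariance_bound_clean} is stated with $R_n$ while the target involves only $R$, and the self-bounding step must be done carefully to avoid circularity, since $\mathfrak C_n$ was itself used to bound $R_n$. The constants $C_1,\dots,C_4$ should be tracked only up to absolute factors.
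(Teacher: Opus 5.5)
Your proposal is correct and follows essentially the paper's route. Both arguments first solve a quadratic inequality in $\sqrt{\mathbb E\|\hat f-f_0\|_n^2}$, obtained from the ERM inequality and Lemma~\ref{lemma:covariance_bound_clean}, and then feed the result into the population--empirical comparison with Lemma~\ref{lemma:squared_differences_bound_clean} and solve a second quadratic inequality in $\sqrt{\mathbb E\|\hat f-f_0\|_\mu^2}$. The only difference is bookkeeping: you keep $\mathfrak C_n$ inside the combined decomposition and absorb its $\sqrt{R_n}$ term using the $R_n$ bound, whereas the paper substitutes the $R_n$ bound directly into $R\le R_n+\mathfrak D_n+8B\delta$. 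There is no circularity, since $R_n\le 4B^2<\infty$.
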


\begin{proof}
Fix \(\delta>0\). Since
\(\mathcal N_\delta\ge 3\), we have \(\log \mc{N}_\delta\ge 1\). We first derive an
auxiliary bound for the expected empirical error.

By the ERM inequality obtained in the proof of
Proposition~\ref{prop:abstract_excess_risk_decomp_clean},
\[
\mathbb E\|\hat f-f_0\|_n^2
\le
\inf_{f\in\mathcal F}\mathbb E\|f-f_0\|_n^2
+
\mathfrak C_n(\mathcal F,\delta).
\]
Since \(X_i\sim\mu\) for every \(i\),
\[
\inf_{f\in\mathcal F}\mathbb E\|f-f_0\|_n^2
=
\inf_{f\in\mathcal F}\|f-f_0\|_\mu^2
=
\omega(\mathcal F,\{f_0\}).
\]
Applying Lemma~\ref{lemma:covariance_bound_clean} gives
\[
\begin{aligned}
\mathbb E\|\hat f-f_0\|_n^2
&\le
\omega(\mathcal F,\{f_0\})
+
C_1\kappa\delta  \\
&\quad+
C_2\frac{\log \mc{N}_\delta}{\sqrt n}
\left[
C_3\sqrt{\frac{\log n}{n}}\log \mc{N}_\delta
+
\sqrt\delta
+
C_4\sqrt{\mathbb E\|\hat f-f_0\|_n^2}
\right].
\end{aligned}
\]
Set
\[
a:=\sqrt{\mathbb E\|\hat f-f_0\|_n^2},
\qquad
b:=\frac{C_2C_4}{2}\frac{\log \mc{N}_\delta}{\sqrt n},
\]
and let
\[
c:=
\omega(\mathcal F,\{f_0\})
+
C_1\kappa\delta
+
C_2C_3\frac{\sqrt{\log n}}{n}\log^2 \mc{N}_\delta
+
C_2\frac{\log \mc{N}_\delta}{\sqrt n}\sqrt\delta .
\]
Then the preceding inequality is of the form \(a^2\le 2ab+c\). Using the elementary inequality
\begin{align}
\label{eqn:elementary_inequality}
    a^2\le \frac{(1+\epsilon)^2}{\epsilon}b^2+(1+\epsilon)c,
    \qquad \epsilon\in(0,1],
\end{align}
and setting \(\epsilon=1\), we obtain
\begin{align}
\label{eqn:empirical_error_bound_clean}
\mathbb E\|\hat f-f_0\|_n^2
&\le
2\omega(\mathcal F,\{f_0\})
+
2C_1\kappa\delta
+
C_2^2C_4^2\frac{\log^2 \mc{N}_\delta}{n}
\notag\\
&\quad+
2C_2C_3\frac{\sqrt{\log n}}{n}\log^2 \mc{N}_\delta
+
2C_2\frac{\log \mc{N}_\delta}{\sqrt n}\sqrt\delta .
\end{align}

Next, we use the population--empirical comparison from the proof of
Proposition~\ref{prop:abstract_excess_risk_decomp_clean}, namely
\[
\mathbb E\|\hat f-f_0\|_\mu^2
\le
\mathbb E\|\hat f-f_0\|_n^2
+
\mathfrak D_n(\mathcal F,\delta)
+
8B\delta .
\]
By Lemma~\ref{lemma:squared_differences_bound_clean}, there exists \(C_5>0\)
such that
\[
\mathfrak D_n(\mathcal F,\delta)
\le
C_5B\sqrt{\frac{\log n}{n}}\log \mc{N}_\delta
\left(
\delta+
\sqrt{\mathbb E\|\hat f-f_0\|_\mu^2}
\right).
\]
Combining this with \eqref{eqn:empirical_error_bound_clean} yields
\[
\begin{aligned}
\mathbb E\|\hat f-f_0\|_\mu^2
&\le
2\omega(\mathcal F,\{f_0\})
+
2C_1\kappa\delta
+
C_2^2C_4^2\frac{\log^2 \mc{N}_\delta}{n}
+
2C_2C_3\frac{\sqrt{\log n}}{n}\log^2 \mc{N}_\delta + 8B\delta
  \\
&\quad +
2C_2\frac{\log \mc{N}_\delta}{\sqrt n}\sqrt\delta+
C_5B\sqrt{\frac{\log n}{n}}\log \mc{N}_\delta\,\delta
+
C_5B\sqrt{\frac{\log n}{n}}\log \mc{N}_\delta
\sqrt{\mathbb E\|\hat f-f_0\|_\mu^2}.
\end{aligned}
\]
Now set
\[
a:=\sqrt{\mathbb E\|\hat f-f_0\|_\mu^2},
\qquad
b:=\frac{C_5}{2}B\sqrt{\frac{\log n}{n}}\log \mc{N}_\delta,
\]
and let
\[
\begin{aligned}
c
&:=
2\omega(\mathcal F,\{f_0\})
+
2C_1\kappa\delta
+
C_2^2C_4^2\frac{\log^2 \mc{N}_\delta}{n}
+
2C_2C_3\frac{\sqrt{\log n}}{n}\log^2 \mc{N}_\delta  \\
&\quad+
2C_2\frac{\log \mc{N}_\delta}{\sqrt n}\sqrt\delta
+
C_5B\delta\sqrt{\frac{\log n}{n}}\log \mc{N}_\delta
+
8B\delta .
\end{aligned}
\]
Then the preceding inequality is again of the form \(a^2\le 2ab+c\). Applying
\eqref{eqn:elementary_inequality} with \(\epsilon=1\), we get
\[
\begin{aligned}
\mathbb E\|\hat f-f_0\|_\mu^2
&\le
4\omega(\mathcal F,\{f_0\})
+
C_5^2B^2\frac{\log n}{n}\log^2 \mc{N}_\delta
+
2C_2^2C_4^2\frac{\log^2 \mc{N}_\delta}{n}
+
4C_2C_3\frac{\sqrt{\log n}}{n}\log^2 \mc{N}_\delta \\
&\quad+
4C_2\frac{\log \mc{N}_\delta}{\sqrt n}\sqrt\delta
+
2C_5B\delta\sqrt{\frac{\log n}{n}}\log \mc{N}_\delta
+
4C_1\kappa\delta
+
16B\delta .
\end{aligned}
\]
For all sufficiently large \(n\),
\[
2C_2^2C_4^2\frac{\log^2 \mc{N}_\delta}{n}
+
4C_2C_3\frac{\sqrt{\log n}}{n}\log^2 \mc{N}_\delta
\le
\left(2C_2^2C_4^2+4C_2C_3\right)
B^2\frac{\log n}{n}\log^2 \mc{N}_\delta .
\]
Furthermore,
\[
4C_2\frac{\log \mc{N}_\delta}{\sqrt n}\sqrt\delta
+
2C_5B\delta\sqrt{\frac{\log n}{n}}\log \mc{N}_\delta
\le
(4C_2+2C_5)
\max\left\{
B\delta\sqrt{\frac{\log n}{n}},
\frac{\sqrt\delta}{\sqrt n}
\right\}
\log \mc{N}_\delta .
\]
Finally,
\[
4C_1\kappa\delta+16B\delta
\le
\max\{4C_1,16\}(\kappa+B)\delta .
\]
Substituting these three bounds gives
\[
\begin{aligned}
\mathbb E\|\hat f-f_0\|_\mu^2
&\le
4\,\omega(\mathcal F,\{f_0\})
+
\left(C_5^2+2C_2^2C_4^2+4C_2C_3\right)
B^2\frac{\log n}{n}\log^2\mathcal N_\delta  \\
&\quad+
(4C_2+2C_5)
\max\left\{
B\delta\sqrt{\frac{\log n}{n}},
\frac{\sqrt\delta}{\sqrt n}
\right\}
\log\mathcal N_\delta
+
\max\{4C_1,16\}(\kappa+B)\delta .
\end{aligned}
\]
Finally, we relabel constants as:
\begin{align*}
    C_1' = 4, \quad C_2' = C_5^2+2C_2^2C_4^2+4C_2C_3, \quad C_3' = 4C_2+2C_5, \quad C_4' = \max\{4C_1,16\}.
\end{align*}
This then yields
\[
\begin{aligned}
\mathbb E\|\hat f-f_0\|_\mu^2
&\le
C_1'\omega(\mathcal F,\{f_0\})
+
C_2'B^2\frac{\log n}{n}\log^2\mathcal N_\delta  \\
&\quad+
C_3'
\max\left\{
B\delta\sqrt{\frac{\log n}{n}},
\frac{\sqrt\delta}{\sqrt n}
\right\}
\log\mathcal N_\delta
+
C_4'(\kappa+B)\delta ,
\end{aligned}
\]
which proves the first claim.

The uniform statement follows by taking the supremum over
\(f_0\in\mathcal G\). The constants in the stochastic bounds are uniform over
\(\mathcal G\) under Assumption~\ref{ass:general_dep_reg_clean}, and
\[
\sup_{f_0\in\mathcal G}\omega(\mathcal F,\{f_0\})
=
\omega(\mathcal F,\mathcal G).
\]
This completes the proof.
\end{proof}

\begin{corollary}[Dominant-term simplification]
\label{cor:dominant_term_simplification_clean}
Under the assumptions of Proposition~\ref{prop:general_risk_bound_clean},
suppose in addition that \(0<\delta\le 1\). Then there exist constants
\(C_1,C_2>0\) such that, for all sufficiently large \(n\),
\begin{align}
\label{eqn:dominant_term_bound}
    \mathbb E\|\hat f-f_0\|_\mu^2
    \le
    C_1
    \max\left\{
        \frac{\log n}{n}\log^2\mathcal N_\delta,\,
        \delta
    \right\}
    +
    C_2\,\omega(\mathcal F,\mathcal G).
\end{align}
\end{corollary}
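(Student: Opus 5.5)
We start from the uniform bound in Proposition~\ref{prop:general_risk_bound_clean} and show that each of its four terms is dominated, up to constants depending only on \(B\) and \(\kappa\), by either \(\frac{\log n}{n}\log^2\mathcal N_\delta\), \(\delta\), or \(\omega(\mathcal F,\mathcal G)\). The approximation term is already of the required form, with \(C_2:=C_1\) from the proposition. The second term \(C_2B^2\frac{\log n}{n}\log^2\mathcal N_\delta\) is directly of the target form, with constant \(C_2B^2\). The last term \(C_4(\kappa+B)\delta\) is also already proportional to \(\delta\).

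The remaining step is the cross term
\[
C_3\max\Bigl\{B\delta\sqrt{\tfrac{\log n}{n}},\tfrac{\sqrt\delta}{\sqrt n}\Bigr\}\log\mathcal N_\delta .
\]
Each piece is handled with the AM--GM inequality \(xy\le \tfrac12(x^2+y^2)\).

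For the first piece, write it as
\[
B\delta\sqrt{\tfrac{\log n}{n}}\log\mathcal N_\delta=\Bigl(B\sqrt{\delta}\Bigr)\cdot\Bigl(\sqrt{\delta}\sqrt{\tfrac{\log n}{n}}\log\mathcal N_\delta\Bigr).
\]
AM--GM bounds this by \(\tfrac12 B^2\delta+\tfrac12\delta\frac{\log n}{n}\log^2\mathcal N_\delta\). Since \(\delta\le1\), this is at most \(\tfrac12B^2\delta+\tfrac12\frac{\log n}{n}\log^2\mathcal N_\delta\).

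For the second piece, write it as
\[
\tfrac{\sqrt\delta}{\sqrt n}\log\mathcal N_\delta=\sqrt\delta\cdot\tfrac{\log\mathcal N_\delta}{\sqrt n}.
\]
AM--GM bounds this by \(\tfrac12\delta+\tfrac12\frac{\log^2\mathcal N_\delta}{n}\). Because \(\log n\ge1\) for \(n\ge3\), this is at most \(\tfrac12\delta+\tfrac12\frac{\log n}{n}\log^2\mathcal N_\delta\).

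In both cases the maximum is bounded by a constant times \(\delta+\frac{\log n}{n}\log^2\mathcal N_\delta\).

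Summing all the contributions, we get \(\mathbb E\|\hat f-f_0\|_\mu^2\le A\bigl(\frac{\log n}{n}\log^2\mathcal N_\delta+\delta\bigr)+C_1\omega(\mathcal F,\mathcal G)\), where \(A\) depends only on \(B,\kappa\) and the constants of Proposition~\ref{prop:general_risk_bound_clean}. Using \(x+y\le2\max\{x,y\}\) gives \eqref{eqn:dominant_term_bound} with \(C_1:=2A\). The phrase ``for all sufficiently large \(n\)'' is inherited from the proposition, together with \(n\ge3\).

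There is no real obstacle here. The only point needing care is the \(\sqrt\delta/\sqrt n\) term: it is not monotonically comparable to \(\delta\) when \(\delta\) is small, so it cannot simply be dropped. The AM--GM split is what lets it be absorbed without imposing any relation between \(\delta\) and \(n\).
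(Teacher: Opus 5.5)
Your proof is correct and is essentially the paper's argument. The paper writes $T_1=\frac{\log n}{n}\log^2\mathcal N_\delta$ and $T_3=\delta$, splits into cases according to which branch of the maximum is active, and bounds the cross term by the geometric mean $\sqrt{T_1T_3}\le\max\{T_1,T_3\}$, using $\log n\ge 1$ in one case and $\delta\le 1$ in the other; you instead bound each branch directly by the arithmetic mean via AM--GM, which uses the same two facts, avoids the case split, and keeps track of the factor $B$ explicitly.
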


\begin{proof}
Let
\begin{align*}
    T_1:=\frac{\log n}{n}\log^2\mathcal N_\delta,\qquad
    T_2:=\max\left\{\delta\sqrt{\frac{\log n}{n}},\frac{\sqrt\delta}{\sqrt n}\right\}\log\mathcal N_\delta,\qquad
    T_3:=\delta.
\end{align*}
By Proposition~\ref{prop:abstract_excess_risk_decomp_clean}, there exist constants \(C_1,\dots,C_4>0\) such that for all sufficiently large \(n\)
\begin{align*}
    \|\hat f-f_0\|_\mu^2
    \le
    C_1T_1+C_2T_2+C_3T_3+C_4\,\omega(\mathcal F,\mathcal G).
\end{align*}

We claim that \(T_2\le \max\{T_1,T_3\}\) for all \(n\ge 3\) whenever \(0<\delta\le 1\).

If \(0<\delta\le 1/\log n\), then
\begin{align*}
    T_2=\frac{\sqrt\delta}{\sqrt n}\log\mathcal N_\delta,
\end{align*}
so
\begin{align*}
    T_2^2 = \frac{\delta}{n}\log^2\mathcal N_\delta = \frac{1}{\log n}\,T_1T_3 \le \frac{1}{\log n}\max\{T_1,T_3\}^2.
\end{align*}
Hence
\begin{align*}
    T_2\le \frac{1}{\sqrt{\log n}}\max\{T_1,T_3\}\le \max\{T_1,T_3\},
\end{align*}
since \(\log n\ge 1\) for \(n\ge 3\).

If \(1/\log n\le \delta\le 1\), then
\begin{align*}
    T_2=\delta\sqrt{\frac{\log n}{n}}\log\mathcal N_\delta,
\end{align*}
so
\begin{align*}
    T_2^2 = \delta^2\frac{\log n}{n}\log^2\mathcal N_\delta = \delta\,T_1T_3 \le T_1T_3 \le \max\{T_1,T_3\}^2.
\end{align*}
Thus again \(T_2\le \max\{T_1,T_3\}\).

Therefore, for all sufficiently large \(n\),
\begin{align*}
    C_1T_1+C_2T_2+C_3T_3
    \le
    (C_1+C_2+C_3)\max\{T_1,T_3\}.
\end{align*}
It follows that
\begin{align*}
    \|\hat f-f_0\|_\mu^2 \le (C_1+C_2+C_3)\max\{T_1,T_3\} + C_4\,\omega(\mathcal F,\mathcal G).
\end{align*}
Hence \eqref{eqn:dominant_term_bound} holds with
\begin{align*}
    C_1':=C_1+C_2+C_3,\qquad C_2':=C_4.
\end{align*}

Finally,
\begin{align*}
    T_1\ge T_3
    \quad\Longleftrightarrow\quad
    \frac{\log n}{n}\log^2\mathcal N_\delta \ge \delta
    \quad\Longleftrightarrow\quad
    \log\mathcal N_\delta \ge \sqrt{\frac{n\delta}{\log n}},
\end{align*}
and similarly,
\[
T_1\le T_3
\quad\Longleftrightarrow\quad
\log\mathcal N_\delta \le \sqrt{\frac{n\delta}{\log n}}.
\]
This proves the result.
\end{proof}

\subsubsection{Approximation error and covering number bound for ReLU networks}
What remains to be shown is a bound on the approximation error 
\(
\omega(\mc F,\mc G):=\sup_{g\in\mc G}\inf_{f\in\mc F}\|f-g\|_\mu^2 
\)
and the covering number $\mathcal{N}_{\delta}$ of the function class $\mathcal{F}$.
For this we recall that  $\mathcal{G}=\mathcal{G}(\ell, \mathcal{P}, \mathbf{D})$ defined as in \eqref{comp_class_main} and $\mathcal{F}=\mathcal{F}(M, d_{\max}, B)$ is the class of fully connected neural networks with $M$ hidden layers, at most $d_{\max}$ neurons per layer and weight bounds $B$ defined as in \eqref{eqn:network_class_setup}. Note that, compared to \cite{fan_dqn}, who analyze sparsely connected ReLU networks, we consider a class of simple fully connected networks without any sparsity constraint with flexible width and depth. We use the following approximation result:
\begin{proposition}
\label{prop_approx_reformulated}
Let $\mathcal{G}:=\mathcal{G}(\ell, \mathcal{P}, \mathbf{D})$ be defined as in \eqref{comp_class_main} with 
$$(d^*, s^*) \in \arg\max_{(d, s) \in \mathcal{P}} d/s.$$
Then for all sufficiently large $N, L \in \mathbb{N}$ and constant $c=c(\ell, \mathbf{D}) >0$, there exists a neural network
\begin{align*}
    \phi \in \mathcal{F}\!\left(c\ell\big(L + L \sqrt{\log L / \log N}\big),\; c\max_{i} D_iN,\;N\right)=: \mathcal{F}
\end{align*}
such that
\begin{align*}
    \omega(\mathcal{F}, \mathcal{G}) \lesssim (NL)^{-4s^*/d^*}.
\end{align*}
\end{proposition}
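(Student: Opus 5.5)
The plan is to build the approximating network layer by layer, following the compositional structure $f = g_\ell\circ\cdots\circ g_1\circ g_0$ in Definition~\ref{def1}. Each component $g_{ij}$ is handled by a subnetwork, and the subnetworks are then stacked.

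\textbf{Approximating each component.} Each $g_{ij}$ depends only on the coordinates $S_{ij}$, belongs to $\mc C_{|S_{ij}|}([-C,C]^{|S_{ij}|},s_{ij},C)$, and only has to be approximated on $\mathcal M_{ij}$, a set of Minkowski dimension at most $d_{ij}$. For each such $g_{ij}$ I will invoke the low-dimensional approximation result underlying Definition~\ref{def1}, namely the manifold/Minkowski-dimension version of the deep ReLU approximation theorem of \citet{nagler2026optimalneuralnetworkapproximation}. It supplies a fully connected ReLU network of width $\asymp N$ and depth $\asymp L+L\sqrt{\log L/\log N}$, with weights bounded by $N$, achieving
\[
\sup_{x\in\mathcal M_{ij}}|\phi_{ij}(x)-g_{ij}(x)|\lesssim (NL)^{-2s_{ij}/d_{ij}}.
\]
The mechanism behind this bound is standard:
\begin{itemize}
\item cover $\mathcal M_{ij}$ by $\asymp\epsilon^{-d_{ij}}$ cubes of side $\epsilon$;
\item use local Taylor polynomials on these cubes;
\item implement the cube indicators and the monomials with the bit-extraction constructions, which yield the squared rate $(NL)^{-2/d}$ rather than $(NL)^{-1/d}$.
\end{itemize}
The output is then clipped to $[-C,C]$ with two ReLUs, so that the next layer's inputs stay in its domain.

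\textbf{Composing and propagating errors.} I place the $D_{i+1}$ subnetworks for level $i$ in parallel, which gives width $\lesssim \max_i D_i N$, and concatenate the $\ell$ levels, which gives depth $\lesssim \ell(L+L\sqrt{\log L/\log N})$. Identity maps, realised as $x=\sigma(x)-\sigma(-x)$, pad the parallel blocks to a common depth; all of this is absorbed into $c(\ell,\mathbf D)$. Errors propagate through the Hölder continuity of the later layers. If the level-$i$ output carries error $e_i$, then $g_{i+1}$ amplifies it to $C e_i^{\min(1,s)}$, and since $s_{ij}\ge 1$ each $g_{ij}$ is Lipschitz, so no exponent is lost. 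Inductively, the total sup-error is $\lesssim \sum_{i,j}(NL)^{-2s_{ij}/d_{ij}} \lesssim (NL)^{-2s^*/d^*}$, because $s^*/d^*$ is the minimal ratio. Squaring and integrating against $\mu$ gives $\omega(\mathcal F,\mathcal G)\le \sup\|\phi-g\|_\infty^2\lesssim (NL)^{-4s^*/d^*}$.

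\textbf{Main obstacle.} One subtlety is that the perturbed intermediate representation $\phi_{i}(x)$ need not lie in $\mathcal M_{i+1,j}$ itself, so the low-Minkowski-dimension approximation guarantee cannot be applied directly at the perturbed point. I would first handle this by requiring the component approximation to hold on an $\eta$-neighbourhood of $\mathcal M_{ij}$, with $\eta$ equal to the upstream error. Such a neighbourhood has covering numbers of the same order at scale $\epsilon\gtrsim\eta$. I would then write
\[
|\phi_{i+1}(\phi_i(x))-g_{i+1}(g_i(x))|\le |\phi_{i+1}-g_{i+1}|(\phi_i(x)) + |g_{i+1}(\phi_i(x))-g_{i+1}(g_i(x))|,
\]
bounding the first term by the enlarged-set approximation and the second by Lipschitz continuity.

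A second obstacle is checking that the weight bound $B=N$ is compatible with the bit-extraction construction. If it is not, I would trade extra depth, realising large weights by composing bounded-weight layers, which costs only constant factors absorbed into $c$.
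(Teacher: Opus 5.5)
The paper does not argue this itself: it cites Theorem~4.4 of \citet{nagler2026optimalneuralnetworkapproximation}, which is already the compositional statement. You instead rebuild that result from a per-component low-dimensional approximation theorem: parallel blocks within a level, stacking across levels, clipping, and error propagation through the Lipschitz maps $g_{ij}$. Your width/depth bookkeeping and the role of $\max d/s$ are right.

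The step you flag as the main obstacle, however, fails as written. You calibrate each component to its own optimal accuracy $(NL)^{-2s_{ij}/d_{ij}}$. Its construction therefore works at scale $\epsilon_{ij}\asymp(NL)^{-2/d_{ij}}$, which is what lets $\epsilon_{ij}^{-d_{ij}}\asymp(NL)^2$ pieces fit the budget. Your enlargement argument then needs the upstream error to satisfy $\eta\lesssim\epsilon_{i+1,j}$. But your bound on $\eta$ is only of order $(NL)^{-2s_{ik}/d_{ik}}$ for the components $g_{ik}$ feeding $g_{i+1,j}$. That is $\lesssim\epsilon_{i+1,j}$ only when $d_{ik}\le s_{ik}d_{i+1,j}$.

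For instance, take $d_{ik}=5$, $s_{ik}=1$ and $d_{i+1,j}=1<|S_{i+1,j}|$. Then $\eta$ can be of order $(NL)^{-2/5}$, while $\epsilon_{i+1,j}\asymp(NL)^{-2}$. At scales between $\epsilon_{i+1,j}$ and $\eta$, the $\eta$-neighbourhood of $\mathcal M_{i+1,j}$ is $|S_{i+1,j}|$-dimensional. Its covering number at scale $\epsilon_{i+1,j}$ is of order $\eta^{-d_{i+1,j}}(\eta/\epsilon_{i+1,j})^{|S_{i+1,j}|}\gg(NL)^2$, so the network carries no guarantee at the perturbed points $\phi_i(x)$. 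Splitting the error the other way, through $\phi_{i+1}(\phi_i(x))-\phi_{i+1}(g_i(x))$, does not help, because bit-extraction networks have no usable Lipschitz bound.

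The repair is to calibrate every component to the common target $\delta:=(NL)^{-2s^*/d^*}$, i.e.\ to work at scale $\epsilon_{ij}:=\delta^{1/s_{ij}}$. This works for three reasons. First, the piece count is $\epsilon_{ij}^{-d_{ij}}=(NL)^{2(s^*/d^*)(d_{ij}/s_{ij})}\le(NL)^2$, because $d_{ij}/s_{ij}\le d^*/s^*$. Second, by induction over levels the upstream error is $\eta\lesssim\delta$. Third, since $s_{ij}\ge 1$ and $\delta<1$, we get $\epsilon_{ij}\ge\delta\gtrsim\eta$, so enlarging to the $\eta$-neighbourhood multiplies the covering number only by $(2+\eta/\epsilon_{ij})^{|S_{ij}|}=O(1)$. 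So $s_{ij}\ge1$ is needed a second time, for scale compatibility, and not only for Lipschitz propagation.

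Two further points need care. The neighbourhood guarantee must be a true sup-norm bound, because you do not control where $\phi_i(x)$ lands; step-function constructions are usually inaccurate on thin boundary regions, which must be removed. Finally, your last step $\omega(\mathcal F,\mathcal G)\le\sup_g\|\phi-g\|_\infty^2$ needs constants and thresholds on $N,L$ that are uniform over $g\in\mathcal G$. The sets $\mathcal M_{ij}$ depend on $g$ and (M) is only a $\limsup$ condition, so this tacitly requires a uniform bound such as $\mathcal N(\epsilon,\mathcal M_{ij},\|\cdot\|_\infty)\le C'\epsilon^{-d_{ij}}$. That caveat applies equally to the statement as the paper cites it.
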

\begin{proof}
    See Theorem 4.4 in \cite{nagler2026optimalneuralnetworkapproximation}
\end{proof}
The corresponding complexity of this network class in terms of its covering number can be bounded as follows: 
\begin{proposition}
\label{prop:optimality-bounded-reformulated}
Let $\delta=(NL)^{-4s^*/d^*}$. For the network in Proposition \ref{prop_approx_reformulated} it holds
\begin{align*}
    \log \mathcal{N}_{\delta} 
    \lesssim (NL)^2 \log(NL).
\end{align*}
\end{proposition}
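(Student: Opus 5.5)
The plan is to use the standard parameter-counting bound for the sup-norm covering number of fully connected ReLU networks with bounded weights. A network in $\mathcal F(M,d_{\max},B)$ has at most $P\lesssim M d_{\max}^2$ parameters, each in $[-B,B]$. I would first establish (or recall, e.g.\ from Schmidt-Hieber's Lemma~5 adapted to the dense case) a Lipschitz bound for the map $\theta\mapsto f_\theta$ on the input domain $[0,1]^{d_0}$:
\begin{align*}
\|f_\theta-f_{\theta'}\|_\infty\le M\,\bigl(B(d_{\max}+1)\bigr)^{M}\|\theta-\theta'\|_\infty .
\end{align*}
This follows by telescoping over layers. Perturbing layer $\ell$ changes its output by at most $(d_{\max}+1)\|\theta-\theta'\|_\infty$ times the sup of the incoming activations. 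Those activations are bounded by $(B(d_{\max}+1))^{\ell-1}$, and the perturbation is then amplified by the remaining layers by at most $(Bd_{\max})^{M+1-\ell}$, using that ReLU is $1$-Lipschitz. A grid on the parameter cube of mesh $\delta/(M(B(d_{\max}+1))^M)$ then gives
\begin{align*}
\log\mathcal N_\delta\le P\log\Bigl(\frac{2BM(B(d_{\max}+1))^{M}}{\delta}\Bigr)\lesssim M d_{\max}^2\Bigl(M\log(Bd_{\max})+\log M+\log(1/\delta)\Bigr).
\end{align*}

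Next I plug in the architecture from Proposition~\ref{prop_approx_reformulated}. The depth is $M\asymp L(1+\sqrt{\log L/\log N})$, the width is $d_{\max}\asymp N$, and the weight bound is $B=N$, with $\delta=(NL)^{-4s^*/d^*}$, so $\log(1/\delta)\asymp\log(NL)$. The parameter count is then $P\asymp N^2L(1+\sqrt{\log L/\log N})$. A naive bound produces a term $M\log(Bd_{\max})\asymp L\log N$ inside the logarithm, so overall $\log\mathcal N_\delta\lesssim N^2L^2\log N\cdot(1+\log L/\log N)\lesssim N^2L^2\log(NL)=(NL)^2\log(NL)$. This matches the claim, and the extra depth-growth factor is absorbed: $(1+\sqrt{\log L/\log N})^2\log N\lesssim \log N+\log L$.

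The main obstacle is this bookkeeping step: the weight bound $B=N$ grows, so the Lipschitz constant in $\theta$ is $N^{O(M)}$ rather than polynomial. Its logarithm is of order $M\log N$, which multiplies $P$ and yields the $L^2$ (rather than $L$) dependence. I would verify carefully that the factor $(1+\sqrt{\log L/\log N})^2$ times $\log N$ is indeed $\lesssim\log(NL)$, which holds since $(1+x)^2\le 2+2x^2$. I would also check that $\log(1/\delta)$ is dominated by $M\log N$. Constants depending on $\ell,\mathbf D,c,s^*,d^*$ are absorbed into $\lesssim$. Finally, for the product class $\mathcal F$ over $a\in\mathcal A$, the covering number multiplies by at most $|\mathcal A|$ in the exponent, which is constant since $\mathcal A$ is finite.
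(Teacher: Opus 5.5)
Your argument is correct, but it does more than the paper does. The paper gives no derivation here: it simply invokes Proposition~3.3 of \citet{nagler2026optimalneuralnetworkapproximation} for the architecture of Proposition~\ref{prop_approx_reformulated}. You instead rederive the sup-norm covering bound from first principles, and only afterwards specialize to $M\asymp L(1+\sqrt{\log L/\log N})$, $d_{\max}\asymp N$, $B=N$. Your derivation has two steps: Lipschitz dependence of $f_\theta$ on $\theta$, obtained by telescoping over layers, and then a grid on $[-B,B]^P$ with $P\lesssim Md_{\max}^2$.

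This makes the step self-contained and shows where the exponent comes from. The extra factor of $L$ beyond the parameter count $N^2L$ is the logarithm, of order $M\log N$, of the depth-exponential Lipschitz constant. You also check explicitly two absorptions:
\begin{itemize}
\item the depth inflation is absorbed via $(1+x)^2\le 2+2x^2$;
\item $\log(1/\delta)\asymp\log(NL)$ is dominated by $M\log N$, which uses $N\ge 2$, guaranteed by the ``sufficiently large'' hypothesis.
\end{itemize}
The citation is shorter and inherits whatever sharper constants the reference proves.

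Two small corrections to your bookkeeping.

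First, attributing the $N^{O(M)}$ Lipschitz constant to the growing weight bound is slightly off. Even with $B$ constant, the factor $(B(d_{\max}+1))^{M}$ with $d_{\max}\asymp N$ is already $N^{O(M)}$, so the $L^2$ would appear anyway; the constant is never polynomial in this argument. Taking $B=N$ only changes the constant in the exponent.

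Second, summing your per-layer bounds over all $M+1$ affine maps gives a constant of order $(M+1)(d_{\max}+1)\bigl(B(d_{\max}+1)\bigr)^{M}$, not $M\bigl(B(d_{\max}+1)\bigr)^{M}$. This adds only $O(\log(Md_{\max}))$ inside the logarithm and does not change the conclusion.
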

\begin{proof}
    See Proposition 3.3 in \cite{nagler2026optimalneuralnetworkapproximation}
\end{proof}

Our final covergence rate of our ERM based on a class of fully connected ReLU networks under exponentially $\tau$ mixing training data is the following.

\begin{theorem}[Convergence rate for ReLU network ERMs]
\label{thm:relu_erm_final_rate}
Suppose Assumption~\ref{ass:general_dep_reg_clean} holds. Let
$f_0 \in \mathcal{G}(\ell, \mathcal{P}, \mathbf{D})$ and set
\[
    (d^*,s^*)\in \arg\max_{(d,s)\in\mathcal P} d/s .
\]
Let $\hat{f}$ be an empirical risk minimizer over the class
\[
\mathcal{F}\!\left(c \ell\big(L + L\sqrt{\log L/\log N}\big),\;
    c\max_i D_iN,\;
    N
\right),
\]
with sufficiently large constant $c=c(\ell, \mathbf{D})>0$
defined as in \eqref{eqn:general_erm}. Choose the network size such that
\[
    NL
    \asymp
    \left(\frac{n}{\log^3 n}\right)^{
        \frac{d^*}{4(d^*+s^*)}
    }.
\]
Then, for all sufficiently large \(n\),
\[
    \mathbb E\|\hat f-f_0\|_\mu^2
    \lesssim
    \left(\frac{\log^3 n}{n}\right)^{
        \frac{s^*}{d^*+s^*}
    }.
\]
\end{theorem}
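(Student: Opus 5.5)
The argument combines Corollary~\ref{cor:dominant_term_simplification_clean} with the approximation bound of Proposition~\ref{prop_approx_reformulated} and the entropy bound of Proposition~\ref{prop:optimality-bounded-reformulated}, and then balances the resulting terms through the choice of $NL$.

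\textbf{Step 1: choice of $\delta$ and the generic bound.} Set $\delta := (NL)^{-4s^*/d^*}$, which lies in $(0,1]$ once $NL\ge 1$. We need $\mathcal N_\delta\ge 3$; this holds for all nontrivial network sizes, since the class contains at least three functions that are $2\delta$-separated in sup-norm. Corollary~\ref{cor:dominant_term_simplification_clean} then gives, for $n$ large,
\[
\mathbb E\|\hat f-f_0\|_\mu^2 \lesssim \max\Big\{\tfrac{\log n}{n}\log^2\mathcal N_\delta,\ \delta\Big\} + \omega(\mathcal F,\mathcal G).
\]

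\textbf{Step 2: insert the approximation and entropy bounds.} Proposition~\ref{prop_approx_reformulated} gives $\omega(\mathcal F,\mathcal G)\lesssim (NL)^{-4s^*/d^*}=\delta$. Proposition~\ref{prop:optimality-bounded-reformulated} gives $\log\mathcal N_\delta\lesssim (NL)^2\log(NL)$. Hence
\[
\mathbb E\|\hat f-f_0\|_\mu^2 \lesssim \frac{\log n}{n}(NL)^4\log^2(NL) + (NL)^{-4s^*/d^*}.
\]
The network class here, with depth $c\ell(L+L\sqrt{\log L/\log N})$, width $c\max_i D_i N$ and weight bound $N$, is exactly the one in the two propositions. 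One point needs care: Assumption~\ref{ass:general_dep_reg_clean}(v) requires $\sup_{f\in\mathcal F}\|f\|_\infty\le B$. I would handle this by truncating the network outputs at level $B$. Truncation does not increase the approximation error, since $\|f_0\|_\infty\le B$, and it does not increase the covering number.

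\textbf{Step 3: balancing.} Since $NL$ is polynomial in $n$, we have $\log(NL)\lesssim \log n$. The stochastic term is therefore at most $\frac{\log^3 n}{n}(NL)^4$. Equating this with $(NL)^{-4s^*/d^*}$ gives
\[
(NL)^{4(d^*+s^*)/d^*}\asymp \frac{n}{\log^3 n},
\]
which is exactly the prescribed scaling $NL\asymp (n/\log^3 n)^{d^*/(4(d^*+s^*))}$. Substituting back, both terms equal $(\log^3 n/n)^{s^*/(d^*+s^*)}$, which is the claimed rate.

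\textbf{Main obstacle.} The arithmetic above is routine. The substantive issue is that the constants in Corollary~\ref{cor:dominant_term_simplification_clean} must not depend on $N$ or $L$. In particular, the Lipschitz constant $L_f$ of the network class, used in Lemma~\ref{lemma:squared_differences_bound_clean} via $L_h$, grows with the weight bound $N$ and with the depth. That growth enters $\tilde c_0$, and hence $n_{\mathrm{eff}}$.

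I would deal with this through the effective-sample-size comparison (Corollary~\ref{cor:effective_sample_size_comparison}). Under exponential mixing, a multiplicative prefactor $L_h$ in the mixing coefficient shifts the relevant lag only by $O(\log L_h)/c_1$. Because $L_h$ is at most polynomial in $n$, this is $O(\log n)$, the same order as the lag already needed. So $n_{\mathrm{eff}}\gtrsim n/\log n$ still holds, with a constant independent of the architecture, and the rate is unchanged up to absolute constants.
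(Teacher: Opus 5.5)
Your Steps 1--3 are the paper's proof: apply Corollary~\ref{cor:dominant_term_simplification_clean} with $\delta=(NL)^{-4s^*/d^*}$, insert Propositions~\ref{prop_approx_reformulated} and~\ref{prop:optimality-bounded-reformulated}, then balance. Your added checks ($\delta\le 1$, $\mathcal N_\delta\ge 3$, truncation at $B$ to secure Assumption~\ref{ass:general_dep_reg_clean}(v)) are harmless. The paper never addresses your ``main obstacle''; it simply treats the constants of Corollary~\ref{cor:dominant_term_simplification_clean} as independent of $N$ and $L$.

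The gap is in your resolution of that obstacle. Your claim that $L_h$ is at most polynomial in $n$ does not hold for the class in the theorem. With weight bound $N$, a one-neuron-per-layer ReLU chain with all active weights equal to $N$ and zero biases computes $x\mapsto N^{\mathrm{depth}+1}x_1$ on $[0,1]^d$. The depth is at least $c\ell L$. Truncation at $B$ does not reduce this slope near the origin, and any sup-norm $\delta$-cover element of this network must have essentially the same slope. So $\log L_h$ can be of order $L\log N$.

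The theorem constrains only the product $NL$. You may therefore take $N$ a fixed large constant and $L\asymp (n/\log^3 n)^{d^*/(4(d^*+s^*))}$. Then making $2L_hc_0e^{-c_1k}\le 4B^2/\sqrt m$ forces $k\gtrsim L\log N$, not $k\gtrsim\log n$. The best lower bound on $n_{\mathrm{eff}}$ this route can certify is of order $n/(L\log N)$, so the factor $\frac{\log n}{n}$ in the stochastic term becomes $\frac{L\log N}{n}$, and the balancing no longer gives the claimed exponent. Corollary~\ref{cor:effective_sample_size_comparison} cannot be used as a black box here either, since its constants depend on the mixing prefactors.

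To close the gap, impose an architectural restriction that keeps $\log L_f=O(\log n)$. For example, fix the depth parameter $L=L_0$ large enough (Proposition~\ref{prop_approx_reformulated} is stated for all sufficiently large $N,L$) and let $N\asymp (n/\log^3 n)^{d^*/(4(d^*+s^*))}$. More generally, $L\log N\lesssim \log n$ suffices. Then the depth is $O(1)$ and $L_f\le n^{O(1)}$. Your $O(\log L_h)/c_1$ lag-shift computation then gives $n_{\mathrm{eff}}\gtrsim n/\log n$ with architecture-free constants, and the balancing goes through.

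You would still need to check that $L_f$ enters the constant of Lemma~\ref{lemma:covariance_bound_clean} only through the $\tau$-coefficients of the transformed process, so only logarithmically via $n_{\mathrm{eff}}$. The paper says that constant depends on $L_f$ but never makes the dependence explicit.
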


\begin{proof}
By Corollary~\ref{cor:dominant_term_simplification_clean}, for every
\(0<\delta\le 1\),
\[
    \mathbb E\|\hat f-f_0\|_\mu^2
    \le
    C_1
    \max\left\{
        \frac{\log n}{n}\log^2\mathcal N_\delta,\,
        \delta
    \right\}
    +
    C_2\omega(\mathcal F,\mathcal G).
\]
Choose
\[
    \delta=(NL)^{-4s^*/d^*}.
\]
By Proposition~\ref{prop:optimality-bounded-reformulated},
\[
    \log \mathcal N_\delta
    \lesssim
    (NL)^2\log(NL).
\]
Therefore,
\[
    \frac{\log n}{n}\log^2\mathcal N_\delta
    \lesssim
    \frac{\log n}{n}(NL)^4\log^2(NL).
\]
Moreover, by Proposition~\ref{prop_approx_reformulated},
\[
    \omega(\mathcal F,\mathcal G)
    \lesssim
    (NL)^{-4s^*/d^*}.
\]
Substituting these bounds gives
\[
\begin{aligned}
    \mathbb E\|\hat f-f_0\|_\mu^2
    &\lesssim
    \max\left\{
        \frac{\log n}{n}(NL)^4\log^2(NL),\,
        (NL)^{-4s^*/d^*}
    \right\}
    +
    (NL)^{-4s^*/d^*} \\
    &\lesssim
    \max\left\{
        \frac{\log n}{n}(NL)^4\log^2(NL),\,
        (NL)^{-4s^*/d^*}
    \right\}.
\end{aligned}
\]
Now choose
\[
    NL
    \asymp
    \left(\frac{n}{\log^3 n}\right)^{
        \frac{d^*}{4(d^*+s^*)}
    }.
\]
For this choice, \(\log(NL)\asymp \log n\), and the two leading terms are balanced:
\[
    \frac{\log n}{n}(NL)^4\log^2(NL)
    \asymp
    (NL)^{-4s^*/d^*}
    \asymp
    \left(\frac{\log^3 n}{n}\right)^{
        \frac{s^*}{d^*+s^*}
    }.
\]
Consequently,
\[
    \mathbb E\|\hat f-f_0\|_\mu^2
    \lesssim
    \left(\frac{\log^3 n}{n}\right)^{
        \frac{s^*}{d^*+s^*}
    },
\]
which proves the claim.
\end{proof}

\subsection{Application to the Q-learning framework and proof of the main result}

In this subsection, we translate Theorem \ref{thm:relu_erm_final_rate} into the DQN one-step result by identifying the regression target with the Bellman update. For a fixed previous network $Q$, recall that the Bellman response is
\begin{align*}
    Y_i=R_i+\gamma \max_{a \in \mathcal{A}} Q(S_i', a)
\end{align*}
with covariate $X_i=(S_i, A_i)$. Then the regression function is
\begin{align*}
    f_0(s,a)=\mathbb{E}[Y_i|S_i=s, A_i=a] = (TQ)(s,a).
\end{align*}
Therefore we apply Theorem \ref{thm:relu_erm_final_rate} with $f_0=TQ$ and $X_i=(S_i, A_i)$.

\begin{theorem}[One-step Bellman regression under dependent replay data]
\label{Bel_reg}
Suppose Assumption~\ref{ass:general_dep_reg_clean} holds for the state-action process 
\[
    X_t=(S_t,A_t),
\]
and let \(Q\in\mathcal F\) be fixed. Define
\[
    Y_t
    =
    R_t+\gamma\max_{a\in\mathcal A}Q(S_{t+1},a),
\]
and assume that the Bellman target satisfies
\[
    \mc{T}Q \in \mathcal{G}(\ell,\mathcal P,\mathbf D).
\]
Let \(\widehat Q\) be an empirical risk minimizer over
\[
\mathcal{F}\!\left(
c\ell\big(L+L\sqrt{\log L/\log N}\big),
c\max_i D_iN,
N
\right)
\]
defined by
\[
    \widehat Q
    \in
    \arg\min_{f\in\mathcal F}
    \frac1n\sum_{t=1}^n
    \left(
        f(S_t,A_t)-Y_t
    \right)^2 .
\]
Choose
\[
    NL
    \asymp
    \left(\frac{n}{\log^3 n}\right)^{
        \frac{d^*}{4(d^*+s^*)}
    },
    \qquad
    (d^*,s^*)\in\arg\max_{(d,s)\in\mathcal P} d/s .
\]
Then, for all sufficiently large \(n\),
\[
    \mathbb E\|\widehat Q-\mc{T}Q\|_{\sigma}^2
    \lesssim
    |\mathcal A|
    \left(\frac{\log^3 n}{n}\right)^{
        \frac{s^*}{d^*+s^*}
    }.
\]
Equivalently,
\[
    \mathbb E\|\widehat Q-\mc{T}Q\|_{\sigma}
    \lesssim
    |\mathcal A|^{1/2}
    \left(\frac{\log^3 n}{n}\right)^{
        \frac{s^*}{2(d^*+s^*)}
    }.
\]
\end{theorem}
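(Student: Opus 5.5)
The plan is to reduce the statement to Theorem~\ref{thm:relu_erm_final_rate}, applied once per action, and then sum over the finite action set. First I check that the Bellman regression fits the model \eqref{eqn:general_regression_model}. Set $X_t=(S_t,A_t)$ and $f_0:=\mc{T}Q$. Since $\mathbb E[Y_t\mid X_t=x]=(\mc{T}Q)(x)$, the noise $\xi_t:=Y_t-f_0(X_t)$ is conditionally centred. It is also bounded by $2Q_{\max}$, because $|R_t|\le R_{\max}$ and $\|Q\|_\infty\le Q_{\max}$, so it is sub-Gaussian by Hoeffding's lemma. Boundedness (v) holds with $B=Q_{\max}$, after clipping the network outputs at $Q_{\max}$ if necessary. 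The mixing condition (i) and noise independence (ii) are assumed in the hypothesis through Assumption~\ref{ass:general_dep_reg_clean}. The structural condition (vi) is exactly the assumption $\mc{T}Q\in\mathcal G(\ell,\mathcal P,\mathbf D)$, applied to each section $(\mc{T}Q)(\cdot,a)$ as in \eqref{eqn:G_class}.

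Next I handle the finite action space. I write $\|g\|_\sigma^2=\sum_{a\in\mathcal A}\sigma(\mathcal S\times\{a\})\int g(s,a)^2\,d\sigma(s\mid a)$. Because $\mathcal F$ in \eqref{eqn:F_class} imposes no coupling across actions, the empirical risk splits into a sum over actions. Each $\widehat Q(\cdot,a)$ is therefore an ERM over $\mathcal F_0$ computed from the subsample with $A_t=a$, with regression function $(\mc{T}Q)(\cdot,a)\in\mathcal G_0$. I then apply Theorem~\ref{thm:relu_erm_final_rate} to each action separately, with the stated architecture and the choice $NL\asymp(n/\log^3 n)^{d^*/(4(d^*+s^*))}$. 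This gives $\mathbb E\|\widehat Q(\cdot,a)-(\mc{T}Q)(\cdot,a)\|^2\lesssim(\log^3 n/n)^{s^*/(d^*+s^*)}$ for each $a$. Bounding the weights $\sigma(\mathcal S\times\{a\})$ by $1$ and summing over the $|\mathcal A|$ actions yields the first displayed bound with the factor $|\mathcal A|$. The second display then follows from Jensen's inequality, $\mathbb E\|\cdot\|_\sigma\le(\mathbb E\|\cdot\|_\sigma^2)^{1/2}$.

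The main obstacle is the per-action reduction. The subsample with $A_t=a$ has a random size, and as an index-thinned sequence it is not obviously stationary and exponentially $\tau$-mixing on $\mathcal S$. I would try to avoid thinning altogether by treating the whole sample at once. The idea is to view $\mathcal S\times\mathcal A$ as embedded in $[0,1]^{d+1}$, with actions encoded by finitely many separated coordinates. The product class of $|\mathcal A|$ networks can then be covered with $\log\mathcal N_\delta$ inflated by a factor $|\mathcal A|$. Since $\omega$ and $\log^2\mathcal N_\delta$ enter Corollary~\ref{cor:dominant_term_simplification_clean} additively, the action count should only affect constants, with the approximation error summing to at most $|\mathcal A|$ times the per-action error. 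If the covering inflation turns out to give $|\mathcal A|^2$ rather than $|\mathcal A|$, I would absorb the extra factor into the implied constant by treating $|\mathcal A|$ as fixed. The only other point to check is that the constants in Theorem~\ref{thm:relu_erm_final_rate} are uniform over $Q\in\mathcal F$, which holds because they depend only on $B$, $\kappa$, $c_0$, $c_1$ and $\mathcal G$.
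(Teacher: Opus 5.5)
Your final route is the paper's proof: a single application of Theorem~\ref{thm:relu_erm_final_rate} to the whole sample $X_t=(S_t,A_t)$, using the product entropy bound $\log\mathcal N(\delta,\mathcal F,\|\cdot\|_\infty)\le|\mathcal A|\log\mathcal N(\delta,\mathcal F_0,\|\cdot\|_\infty)$ and a per-action approximation error, then Jensen for the second display. You should therefore drop the per-action thinning argument in your second paragraph, since, as you yourself note, the subsample with $A_t=a$ has random size and is not known to be stationary or $\tau$-mixing. Your $|\mathcal A|^2$ caveat is also correct, and more careful than the paper's one-line argument: with $NL$ fixed, the term $\frac{\log n}{n}\log^2\mathcal N_\delta$ in Corollary~\ref{cor:dominant_term_simplification_clean} picks up $|\mathcal A|^2$, which is harmless only because $|\mathcal A|$ is a fixed constant absorbed into $\lesssim$.
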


\begin{proof}
    By assumption, \(TQ\in\mathcal G(\ell,\mathcal P,\mathbf D)\). Moreover, since
the state-action process is strictly stationary and satisfies the mixing conditions
in Assumption~\ref{ass:general_dep_reg_clean}, the induced regression sample
\[
    \bigl((S_t,A_t),Y_t\bigr)_{t=1}^n
\]
satisfies the conditions of Theorem~\ref{thm:relu_erm_final_rate}. It remains only to track the action dependence. Use
the action-wise representation
\[
    Q(s,a)=f_a(s),\qquad f_a\in\mathcal F_0 .
\]
By assumption,
\[
    \mathcal F
    =
    \left\{
        Q:\mathcal S\times\mathcal \mathcal{A}\to\mathbb R
        :
        Q(s,a)=f_a(s),\ f_a\in\mathcal F_0
    \right\}.
\]
Hence
\[
    \log \mathcal N
    \left(
        \varepsilon,\mathcal F,\|\cdot\|_\infty
    \right)
    \le
    |\mathcal{A}|\log \mathcal N
    \left(
        \varepsilon,\mathcal F_0,\|\cdot\|_\infty
    \right).
\]
Applying Theorem~\ref{thm:relu_erm_final_rate} with this product entropy bound
therefore yields
\[
    \mathbb E\|\widehat Q-\mc{T}Q\|^2_{\sigma}
    \lesssim
    |\mathcal A|
    \left(\frac{\log^3 n}{n}\right)^{
        \frac{s^*}{d^*+s^*}
    }.
\]
Taking square roots gives
\[
    \mathbb E\|\widehat Q-\mathcal{T}Q\|_{\sigma}
    \lesssim
    |\mathcal A|^{1/2}
    \left(\frac{\log^3 n}{n}\right)^{
        \frac{s^*}{2(d^*+s^*)}
    }.
\]
\end{proof}

Finally, we apply the error-propagation bound from Lemma~\ref{lemma:error_propagation} together with the bound from Theorem~\ref{Bel_reg} to obtain
\[
\begin{aligned}
    \mathbb{E}\|Q^*-Q^{\pi_K}\|_{1,\mu}
    &\le
    C\frac{2\phi_{\sigma}}{(1-\gamma)^2} |\mc{A}|
    \max_{k\in[K]} \mathbb{E}
    \|\widehat Q_k-\mathcal T\widehat Q_{k-1}\|_{\sigma}
    +
    \frac{4\gamma^{K+1}}{(1-\gamma)^2}R_{\max} \\
    &\le
    C'
    \frac{\phi_{\mu,\sigma}}{(1-\gamma)^2} |\mc{A}|^{1 / 2}
    n^{-\alpha^*/2} (\log^3 n)^{\alpha^*/2}
    +
    \frac{4\gamma^{K+1}}{(1-\gamma)^2}R_{\max}.
\end{aligned}
\]
This completes the proof of Theorem~\ref{thm:main_result}.
\subsection{Proof of Corollary~\ref{cor:dqn_sample_complexity}}
\begin{proof}[Proof of Corollary~\ref{cor:dqn_sample_complexity}]
Treating \(\phi_{\mu, \sigma}\) and \(\gamma\) as constants and ignoring the polylogarithmic term, Theorem~\ref{thm:main_result} implies
    \begin{align*}
        \norm{Q^* - Q^{\pi_K}}_{1, \mu} \le C' n^{-\nicefrac{\alpha^*}{2}},
    \end{align*}
    where the lower-order algorithmic error term is absorbed into the constant \(C'\) along with the constants \(C, \phi_{\mu, \sigma}\) and \((1 - \gamma)^{-2}\) and the polylogarithmic term.
    Setting \(\norm{Q^* - Q^{\pi_K}}_{1, \mu} \le \varepsilon\) and solving for \(n\) completes the proof.
\end{proof}

\subsection{Auxilary results}

\begin{lemma}[Sub-Gaussian random variables have bounded variance]
\label{lemma:sub_gauss_bounded_variance_clean}
Let \(X \sim \mc{S}\mc{G}(\kappa)\) be a centered sub-Gaussian random variable with parameter $\kappa > 0$. Then 
\begin{align*}
    \mathrm{Var}(X) = \mbb{E}[X^2] \le \kappa^2.
\end{align*}
\end{lemma}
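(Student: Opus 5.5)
Since \(X\) is centered, \(\mathrm{Var}(X)=\mathbb{E}[X^2]-(\mathbb{E}X)^2=\mathbb{E}[X^2]\), so it suffices to show \(\mathbb{E}[X^2]\le\kappa^2\). We read \(X\sim\mathcal{S}\mathcal{G}(\kappa)\) as the moment generating function condition in Assumption~\ref{ass:general_dep_reg_clean}(iv): \(\mathbb{E}[e^{\lambda X}]\le e^{\kappa^2\lambda^2/2}\) for all \(\lambda\in\mathbb{R}\). The plan is to compare second-order Taylor expansions of both sides of this inequality at \(\lambda=0\).

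First, sub-Gaussianity gives \(\mathbb{E}[e^{|\lambda X|}]\le \mathbb{E}[e^{\lambda X}]+\mathbb{E}[e^{-\lambda X}]\le 2e^{\kappa^2\lambda^2/2}<\infty\) for every \(\lambda\). Hence the moment generating function \(M(\lambda):=\mathbb{E}[e^{\lambda X}]\) is finite on all of \(\mathbb{R}\), all moments of \(X\) are finite, and by dominated convergence
\[
M(\lambda)=1+\lambda\,\mathbb{E}[X]+\frac{\lambda^2}{2}\mathbb{E}[X^2]+o(\lambda^2)=1+\frac{\lambda^2}{2}\mathbb{E}[X^2]+o(\lambda^2),
\]
where we used \(\mathbb{E}[X]=0\). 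On the other side, \(e^{\kappa^2\lambda^2/2}=1+\frac{\kappa^2\lambda^2}{2}+o(\lambda^2)\).

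Subtracting 1 from both sides of \(M(\lambda)\le e^{\kappa^2\lambda^2/2}\), dividing by \(\lambda^2/2>0\), and letting \(\lambda\to 0\) gives \(\mathbb{E}[X^2]\le\kappa^2\), which is the claim.

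The only step needing care is justifying the \(o(\lambda^2)\) remainder for \(M\). A clean way is to note that for \(|\lambda|\le 1\),
\[
\Bigl|e^{\lambda X}-1-\lambda X-\tfrac{\lambda^2X^2}{2}\Bigr|\le \frac{|\lambda|^3|X|^3}{6}e^{|X|},
\]
and the right-hand side is integrable by the exponential moment bound above. This makes the remainder \(O(|\lambda|^3)\) and completes the argument.
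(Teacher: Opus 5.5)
Your proof is correct and uses the same idea as the paper, which compares the second-order behaviour of both sides of the moment generating function bound at \(\lambda=0\) (phrased there as ``differentiating twice and setting \(\lambda=0\)''). Your version is the rigorous form of that step: an inequality cannot be differentiated in general. What justifies concluding \(\mathbb{E}[X^2]\le\kappa^2\) is your observation that both sides equal \(1\) and have zero first derivative at \(\lambda=0\), together with your integrable \(O(|\lambda|^3)\) bound on the remainder.
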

\begin{proof}
    By the moment generating function (mgf) characterization of sub-Gaussian random variables (see e.g., \citet{li_subgaussian_properties}), we have that for any \(\lambda \in \mbb{R}\)
\begin{align*}
    \mbb{E}[\exp(\lambda X)] \le \exp \left(
    \frac12 \kappa^2 \lambda^2
    \right).
\end{align*}
Differentiating the RHS of the mgf above twice with respect to \(\lambda\) and setting \(\lambda = 0\) yields
\begin{align*}
    \mbb{E}[X^2] \le \kappa^2.
\end{align*}
\end{proof}

\begin{lemma}[Effective sample size for exponential \(\tau\)-mixing sequences]
\label{lemma:effective_sample_size_exp_decay_clean}
Let \(\{X_i\}_{i=1}^n\) be an absolutely bounded sequence \(\abs{X_i} \le M\) with \(\tau\)-mixing coefficients satisfying
\begin{align*}
\tau(k) \le C_0 \exp(-c k), \quad k \ge 1,
\end{align*}
for some constants \(C_0, c > 0\). Define the effective sample size
\begin{align*}
n_\mathrm{eff} := \max\left\{1 \le m \le n : 
\tau\Big(\Big\lfloor \frac{n}{m} \Big\rfloor\Big) 
\le \frac{C_1}{m} \lor \frac{C_2}{\sqrt{m}} 
\right\},
\end{align*}
for fixed constants \(C_1, C_2>0\), which corresponds to the number of approximately independent blocks \(B = \floor{\frac{n}{m}}\). Then there exist constants 
\(0 < c_- \le c_+ < \infty\), depending only on \(C_0, C_1, C_2, c\), 
and \(n_0 \in \mathbb{N}\), such that for all \(n \ge n_0\),
\begin{align*}
c_- \frac{n}{\log n} 
\le n_\mathrm{eff} 
\le c_+ \frac{n}{\log n}.
\end{align*}
\end{lemma}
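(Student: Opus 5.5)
The plan is a two-sided sandwich around the threshold \(m\) at which \(\tau(\lfloor n/m\rfloor)\) crosses the right-hand side. Write \(g(m):=\frac{C_1}{m}\lor\frac{C_2}{\sqrt m}\). For \(m\ge m_0:=(C_1/C_2)^2\) the square-root term dominates, so \(g(m)=C_2 m^{-1/2}\). In any case \(g(m)\ge C_2 m^{-1/2}\) and \(g(m)\le (C_1\lor C_2)m^{-1/2}\) for \(m\ge 1\). The function \(g\) is nonincreasing in \(m\), and \(\lfloor n/m\rfloor\) is nonincreasing too, so \(\tau(\lfloor n/m\rfloor)\) is (heuristically) nondecreasing. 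Monotonicity of \(\tau\) is not needed, however, since we only use the exponential upper bound for the lower estimate and a separate argument for the upper estimate.

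\emph{Lower bound.} Take \(m=\lfloor c_-n/\log n\rfloor\) with \(c_-\) small, say \(c_-=c/2\). Then \(\lfloor n/m\rfloor\ge n/m-1\ge \frac{2}{c}\log n-1\), so
\[
\tau(\lfloor n/m\rfloor)\le C_0e^{c}\,n^{-2}.
\]
On the other hand, \(g(m)\ge C_2 m^{-1/2}\ge C_2 n^{-1/2}\). For \(n\ge n_0\) we have \(C_0e^{c}n^{-2}\le C_2n^{-1/2}\), so this \(m\) is admissible (it is \(\ge1\) for large \(n\)). Hence \(n_{\rm eff}\ge m\ge \frac{c_-}{2}\frac{n}{\log n}\) after adjusting \(c_-\).

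\emph{Upper bound.} This is the main obstacle. An upper bound \(\tau(k)\le C_0e^{-ck}\) alone cannot force \(n_{\rm eff}\) to be small: if \(\tau\equiv0\) (an i.i.d. sequence), every \(m\le n\) is admissible and \(n_{\rm eff}=n\). So the claimed upper estimate \(n_{\rm eff}\le c_+n/\log n\) genuinely needs extra input. The approach I would take is to read the lemma as using the exponential profile as the dependence coefficient in the definition. That is, \(n_{\rm eff}\) is computed with \(\bar\tau(k):=C_0e^{-ck}\), which is what is actually used downstream in Lemma~\ref{lemma:covariance_bound_clean} and Lemma~\ref{lemma:squared_differences_bound_clean}. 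I would state this convention explicitly. Alternatively, one could add a matching lower bound \(\tau(k)\ge C_0'e^{-c'k}\). Only the lower bound on \(n_{\rm eff}\) is used later anyway, since the proofs need \(1/n_{\rm eff}\lesssim \log n/n\).

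With \(\bar\tau\) in the definition, suppose \(m\) is admissible and \(m\ge c_+n/\log n\), with \(c_+\) large, e.g. \(c_+=4c\). Then \(\lfloor n/m\rfloor\le \log n/c_+\), so
\[
\bar\tau(\lfloor n/m\rfloor)\ge C_0 n^{-c/c_+}=C_0n^{-1/4}.
\]
But
\[
g(m)\le (C_1\lor C_2)m^{-1/2}\le (C_1\lor C_2)\bigl(c_+ n/\log n\bigr)^{-1/2}.
\]
For large \(n\) this is smaller than \(C_0n^{-1/4}\), which is a contradiction. Hence \(n_{\rm eff}< c_+n/\log n\) for \(n\ge n_0\). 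Finally, I would collect the finitely many "large \(n\)" requirements into a single \(n_0\) depending on \(C_0,C_1,C_2,c\). The boundedness \(|X_i|\le M\) enters only to guarantee that the \(\tau\) coefficients are finite.
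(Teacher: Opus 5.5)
Your proposal is correct, and your observation about the upper bound is the key point. With $\tau\equiv 0$, every $m\le n$ is admissible, so $n_{\mathrm{eff}}=n$ and the estimate $n_{\mathrm{eff}}\le c_+n/\log n$ is false as stated.

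The paper's proof misses this. It takes an admissible $m$ and notes $\tau(\lfloor n/m\rfloor)\le C_0e^{c}e^{-cn/m}$. It then \emph{supposes} that this envelope is $\le C_1/m$ (Case~I) or $\le C_2/\sqrt m$ (Case~II), and solves with $m=\alpha n/\log n$ to get $\alpha\le 2c$ resp.\ $\alpha\le 4c$. Admissibility implies neither case, because the envelope bounds $\tau$ only from above. What the paper actually computes is the effective sample size with $\tau$ replaced by its exponential envelope, which is exactly the convention you make explicit.

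Your contradiction argument is a cleaner version of the same threshold computation: $m\ge 4cn/\log n$ forces $\bar\tau(\lfloor n/m\rfloor)\ge C_0n^{-1/4}$, which exceeds $(C_1\lor C_2)m^{-1/2}$ for large $n$. Your alternative, a matching lower bound $\tau(k)\ge C_0'e^{-c'k}$, would repair the statement equally well.

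For the lower bound, the two routes genuinely differ. The paper imports $n_{\mathrm{eff}}\ge\min\{\tfrac n2 c[1\lor\log(C_0cn/M)]^{-1},n\}$ from Lemma~1 of \citet{liu_generalisation_tau_mixing_2025}. That result is formulated for their specific thresholds (the sup bound and the standard deviation), not for arbitrary $C_1,C_2$, and it makes $n_0$ depend on $M$. You instead check directly that $m=\lfloor (c/2)n/\log n\rfloor$ is admissible, using only the exponential bound and the $C_2/\sqrt m$ branch. This is self-contained and gives $c_-$ and $n_0$ depending only on $C_0,C_2,c$, so it matches the lemma's claimed dependence better than the citation does.

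One correction to your aside: the two-sided bound is used downstream. Corollary~\ref{cor:effective_sample_size_comparison} relies on it to identify $n_{\mathrm{eff}}$ with $n_{\mathrm{eff}}^{(h)}$ in Lemma~\ref{lemma:squared_differences_bound_clean}. Under your envelope convention both quantities are still $\asymp n/\log n$, so that comparison survives.
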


\begin{proof}

We first establish an upper bound.

Let \(1 \le m \le n\) and set 
\begin{align*}
    B = \left\lfloor \frac{n}{m} \right\rfloor,
\end{align*}
so that
\begin{align*}
\tau(B)
\le \frac{C_1}{m} 
\quad \text{or} \quad 
\tau(B)
\le \frac{C_2}{\sqrt{m}}.
\end{align*}

Since \(\lfloor x \rfloor \ge x - 1\) and by the exponential decay, we have
\begin{align*}
    \tau(B) \le C_0 \exp(-cB) \le C_0 \exp \left(-c  \cdot \frac{n}{m-1} \right) = C_0 \exp(c) \exp \left(\frac{-cn}{m} \right)
\end{align*}

\medskip
\noindent
\textit{Case I:}
Suppose
\begin{align*}
C_0 \exp(c) \exp(-c n/m) \le \frac{C_1}{m}.
\end{align*}
Taking logarithms yields
\begin{align*}
&- c \frac{n}{m} + \log(C_0 \exp(c))
\le \log C_1 - \log m \\
& \iff c n \ge m(\log m + K),
\end{align*}
where \(K := \log(C_0 \exp(c) / C_1)\).
Note that 
\begin{align*}
    \forall \alpha > 0 \;\exists\, n_0 \in \mbb{N}_+ \; s.t. \;\alpha \frac{n}{\log n} \le n \;\forall n \ge n_0.
\end{align*}
We now set
\begin{align*}
m = \alpha \frac{n}{\log n}, \quad \alpha > 0,
\end{align*}
and note that \(m \lesssim n\).
Then
\begin{align*}
\log m
= \log n - \log \log n + \log \alpha.
\end{align*}
Substituting into the inequality gives
\begin{align*}
c n 
\ge 
\alpha \frac{n}{\log n}
\left(
\log n - \log \log n + \log \alpha + K
\right).
\end{align*}
Dividing by \(n\),
\begin{align*}
c 
\ge 
\alpha
\left(
1 - \frac{\log \log n}{\log n}
+ \frac{\log \alpha + K}{\log n}
\right).
\end{align*}
Now assume that \(\alpha \le \sqrt{n} \exp(-K)\), then for sufficiently large \(n\),
\begin{align*}
1 - \frac{\log \log n}{\log n}
+ \frac{\log \alpha + K}{\log n}
\ge \frac12.
\end{align*}
Hence, for large \(n\),
\begin{align*}
c \ge \frac{\alpha}{2},
\end{align*}
which implies \(\alpha \le 2c\). Therefore,
\begin{align*}
m \le 2c \frac{n}{\log n}
\end{align*}
for all sufficiently large \(n\).

\medskip
\noindent
\textit{Case II:}
Suppose instead
\begin{align*}
C_0 \exp(c) \exp(-cn/m) \le \frac{C_2}{\sqrt{m}}.
\end{align*}
Taking logarithms gives
\begin{align*}
c n \ge \frac{m}{2} \log m + K' m,
\end{align*}
where \(K' := \log(C_0 \exp(c) / C_2)\).

Substituting \(m = \alpha n / \log n\) and proceeding exactly as in Case I above yields, for sufficiently large \(n\),
\begin{align*}
c \ge \frac{\alpha}{4},
\end{align*}
and hence \(\alpha \le 4c\). Therefore,
\begin{align*}
m \le 4c \frac{n}{\log n}
\end{align*}
for all sufficiently large \(n\).

\medskip

Combining the two branches yields
\begin{align*}
n_\mathrm{eff} \le c_+ \frac{n}{\log n}
\end{align*}
for some constant \(c_+ > 0\).

We now establish a lower bound.
From Lemma~1 of \citet{liu_generalisation_tau_mixing_2025}, we have that
\begin{align*}
    n_\mathrm{eff} \ge \min \left\{\frac{n}{2} \cdot c \cdot \left[1 \vee \log \left(\frac{C_0 \cdot c \cdot n}{M}\right)\right]^{-1}, n\right\}.
\end{align*}
Note that, \(\forall \alpha > 0 \; \exists\, n_0 \in \mbb{N}_+ \; s.t. \; \forall \,n \ge n_0\)
\begin{align*}
    \log(\alpha n) \ge 1.
\end{align*}
Therefore, we assume \(n\) is sufficiently large such that
\begin{align*}
    1 \vee \log \left(\frac{C_0 \cdot c \cdot n}{M}\right) = \log \left(\frac{C_0 \cdot c \cdot n}{M}\right).
\end{align*}
Now consider,
\begin{align*}
    n_\mathrm{eff} &\ge \frac{cn}{2\log \left( \frac{C_0 cn}{M}\right)} = \frac{cn}{2\left[\log n + K\right]},
\end{align*}
where we define \(K := \log \left(\frac{C_0 c}{M}\right)\). Now, notice that \(\forall K > 0 \; \exists \,C' > 0 \; s.t.\; C' \log n \ge \log n + K\). We therefore have that
\begin{align*}
    n_\mathrm{eff} &\ge \frac{cn}{2\left[\log n + K\right]} \ge  \frac{cn}{2C' \log n}.
\end{align*}
Taking \(c_{-} = \frac{c}{2C'}\) we conclude that, for all sufficiently large \(n\),
\begin{align*}
n_\mathrm{eff} \ge c_- \frac{n}{\log n},  
\end{align*}
thereby establishing the lower bound.

\end{proof}

\begin{corollary}[Comparability of effective sample sizes for two sequences]
\label{cor:effective_sample_size_comparison}
Let \(\{X_i\}\) and \(\{Y_i\}\) be absolutely bounded sequences with exponential \(\tau\)-mixing coefficients
\begin{align*}
\tau_X(k) \le C_1 e^{-c k}, \quad \tau_Y(k) \le C_2 e^{-c k}, \quad k \ge 1.
\end{align*}
Let \(n_\mathrm{eff}^{(X)}\) and \(n_\mathrm{eff}^{(Y)}\) denote their respective effective sample sizes. Then there exist constants \(0 < c_-, c_+ < \infty\) depending only on \(C_1, C_2, c\) such that
\begin{align*}
n_\mathrm{eff}^{(X)} \asymp n_\mathrm{eff}^{(Y)}.
\end{align*}
\end{corollary}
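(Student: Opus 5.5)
The plan is to apply Lemma~\ref{lemma:effective_sample_size_exp_decay_clean} separately to each sequence and then compare the resulting two-sided bounds. Both $\{X_i\}$ and $\{Y_i\}$ are absolutely bounded and their $\tau$-coefficients decay exponentially with the same rate $c$. Only the prefactors differ: $C_1$ versus $C_2$.

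First, I will fix the constants in the definition of $n_\mathrm{eff}$. These are the constants appearing as $C_1/m \lor C_2/\sqrt m$ in Lemma~\ref{lemma:effective_sample_size_exp_decay_clean}; to avoid a clash with the mixing prefactors I will rename them $A_1,A_2$. I will use the same $A_1,A_2$ for both sequences, as in the application within Lemma~\ref{lemma:squared_differences_bound_clean}.

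Applying Lemma~\ref{lemma:effective_sample_size_exp_decay_clean} with $(C_0,c)=(C_1,c)$ gives constants $0<c_-^{X}\le c_+^{X}$ and a threshold $n_0^X$ with
\[
c_-^X\frac{n}{\log n}\le n_\mathrm{eff}^{(X)}\le c_+^X\frac{n}{\log n},\qquad n\ge n_0^X .
\]
Applying it again with $(C_0,c)=(C_2,c)$ gives the analogous bounds for $Y$, with constants $c_\pm^{Y}$ and a threshold $n_0^Y$. For $n\ge n_0^X\vee n_0^Y$ I then divide the two chains of inequalities:
\[
\frac{c_-^X}{c_+^Y}\,n_\mathrm{eff}^{(Y)}\le n_\mathrm{eff}^{(X)}\le \frac{c_+^X}{c_-^Y}\,n_\mathrm{eff}^{(Y)} .
\]
This is exactly $n_\mathrm{eff}^{(X)}\asymp n_\mathrm{eff}^{(Y)}$, with $c_-:=c_-^X/c_+^Y$ and $c_+:=c_+^X/c_-^Y$. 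These depend only on $C_1,C_2,c$, on the fixed $A_1,A_2$, and on the bounds $M$ of the two sequences.

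The only real obstacle is bookkeeping of dependencies. The lower bound in Lemma~\ref{lemma:effective_sample_size_exp_decay_clean}, taken from Lemma~1 of \citet{liu_generalisation_tau_mixing_2025}, involves the absolute bound $M$ of the sequence. So I must either allow $c_\pm$ to depend on the two bounds $M_X,M_Y$, or note that these are fixed; in our application they are $1$ and $8B^2$. Apart from that I need a common threshold $n_0$, which is why I take the maximum of the two. Since the $\asymp$ notation only concerns sufficiently large $n$, nothing further is needed.
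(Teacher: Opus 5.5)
Your route is the paper's route: apply Lemma~\ref{lemma:effective_sample_size_exp_decay_clean} to each sequence and divide the two sandwich bounds. Your remark that $c_\pm$ must also depend on $M_X,M_Y$ and on the constants in the definition of $n_\mathrm{eff}$ is correct, and sharper than what the paper claims.

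The problem is the step you treat as free, namely the upper half $n_\mathrm{eff}^{(X)}\le c_+^X\,n/\log n$. An upper bound $\tau(k)\le C_0e^{-ck}$ can only make the defining condition $\tau(\lfloor n/m\rfloor)\le A_1/m\lor A_2/\sqrt m$ easier to satisfy, so it can only push $n_\mathrm{eff}$ up. It yields the lower bound (the part the lemma takes from Lemma~1 of \citet{liu_generalisation_tau_mixing_2025}) and nothing in the other direction. The lemma's argument for the upper half only locates the $m$ at which the \emph{envelope} $C_0e^{-ck}$ meets the threshold. That is sufficient, but not necessary, for $m$ to be admissible. So the real obstacle is not bookkeeping, and the paper's proof has exactly the same defect.

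In fact the corollary is false as literally stated, with $n_\mathrm{eff}$ defined through the true coefficient. Here is a counterexample.
\begin{itemize}
\item Let $X_i$ be i.i.d.\ uniform on $[-1,1]$. Then $\tau_X\equiv 0$ satisfies any exponential bound, and $n_\mathrm{eff}^{(X)}=n$ for every $n$.
\item Let $Y$ be the stationary AR(1) process $Y_t=\rho Y_{t-1}+\epsilon_t$, with $\epsilon_t$ i.i.d.\ uniform on $[-1,1]$ and $\rho\in(0,1)$. It is bounded by $1/(1-\rho)$. A coupling gives $\tau_Y(k)\le 2\rho^k/(1-\rho)$, and testing with $h(x)=x$ gives $\tau_Y(k)\ge \rho^k\,\mathbb{E}|Y_0|=:c'\rho^k$.
\item Both sequences therefore meet the hypotheses with $c=\log(1/\rho)$.
\item Since $\lfloor n/m\rfloor\le n/m$, every admissible $m$ for $Y$ satisfies $c'e^{-cn/m}\le (A_1\lor A_2)/\sqrt m$. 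Splitting according to whether $m\ge\sqrt n$, this forces $m\lesssim n/\log n$.
\item Hence $n_\mathrm{eff}^{(X)}/n_\mathrm{eff}^{(Y)}\gtrsim\log n\to\infty$.
\end{itemize}

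Your argument goes through under either of two repairs. You can add matching lower bounds $\tau(k)\ge c'e^{-ck}$ for both sequences; the upper half then follows from exactly the computation above. Or you can define $n_\mathrm{eff}$ through the envelope rather than the true coefficient. Alternatively, weaken the statement to the one-sided bound $n_\mathrm{eff}^{(X)},n_\mathrm{eff}^{(Y)}\gtrsim n/\log n$. That is all the proof of Lemma~\ref{lemma:squared_differences_bound_clean} actually uses, since it only needs $1/n_\mathrm{eff}\lesssim \log n/n$.
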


\begin{proof}
By Lemma~\ref{lemma:effective_sample_size_exp_decay_clean}, \(n_\mathrm{eff}^{(X)} \asymp n/\log n\) and \(n_\mathrm{eff}^{(Y)} \asymp n/\log n\). Hence,
\begin{align*}
   n_\mathrm{eff}^{(X)} \asymp n_\mathrm{eff}^{(Y)}.
\end{align*}
\end{proof}

\begin{lemma}
\label{lemma:lipschitz_holder_combo_lipschitz_clean}
Let \(\mathcal{X} \subseteq \mathbb{R}^d\) be compact. Suppose 
\(f: \mathcal{X} \to \mathbb{R}\) is \(H_f\)-Lipschitz with respect to \(\|\cdot\|_1\) and satisfies \(|f(x)| \le M\) for all \(x \in \mathcal{X}\), and \(g: \mathcal{X} \to \mathbb{R}\) is \(\beta_g\)-H\"older with constant \(H_g\) with respect to \(\|\cdot\|_1\) and \(|g(x)| \le M\) for all \(x \in \mathcal{X}\).  
Then the function \(h(x) = \big(f(x) - g(x)\big)^2\) is Lipschitz on \(\mathcal{X}\).
\end{lemma}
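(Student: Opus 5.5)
The plan is to prove the claim directly with the elementary factorisation $a^2-b^2=(a-b)(a+b)$. This reduces everything to showing that $g$ itself is Lipschitz on $\mathcal X$, which needs some care with the Hölder exponent. Throughout, write $u:=f-g$, so that $h=u^2$.

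\textbf{Step 1 (boundedness of $u$).} Since $|f|\le M$ and $|g|\le M$ on $\mathcal X$, we have $|u(x)|\le 2M$, and hence $|u(x)+u(y)|\le 4M$ for all $x,y\in\mathcal X$.

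\textbf{Step 2 (Lipschitz property of $g$).} This is the main point, and I expect it to be the only real obstacle. A $\beta_g$-Hölder function with $\beta_g<1$ is in general \emph{not} Lipschitz; for example $x\mapsto |x|^{1/2}$ on $[0,1]$. So the argument must use $\beta_g\ge 1$. In our application this holds, because every smoothness parameter in Definition~\ref{def1} satisfies $s_{ij}\ge 1$. I will therefore state this restriction explicitly; it matches the factor $\max\{1,D_{\mathcal X}^{\beta_f-1}\}$ appearing in $L_h$ in the proof of Lemma~\ref{lemma:squared_differences_bound_clean}.

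Under $\beta_g\ge 1$ there are two readings, and both give the same conclusion.
\begin{itemize}
\item \emph{Classical Hölder reading}, $|g(x)-g(y)|\le H_g\|x-y\|_1^{\beta_g}$. Using the finite diameter $D_{\mathcal X}$ of the compact set $\mathcal X$,
\[
\|x-y\|_1^{\beta_g}\le D_{\mathcal X}^{\beta_g-1}\|x-y\|_1 .
\]
Hence $g$ is Lipschitz with constant $L_g:=H_g\max\{1,D_{\mathcal X}^{\beta_g-1}\}$. The maximum with $1$ harmlessly covers the case $D_{\mathcal X}<1$.
\item \emph{Hölder-ball reading} of Definition~\ref{defn:holder_smooth_functions_setup}, with $\beta_g\ge1$. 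Here the first-order partial derivatives of $g$ are bounded by $H_g$. The mean value theorem along segments then gives a Lipschitz bound with respect to $\|\cdot\|_1$ with constant $H_g$. This uses convexity of the domain, which holds for the cubes $[-C,C]^{|S_{ij}|}$ used in the paper.
\end{itemize}
If needed, any change between $\|\cdot\|_\infty$ and $\|\cdot\|_1$ only costs a dimension-dependent constant.

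\textbf{Step 3 (combine).} For $x,y\in\mathcal X$,
\[
|h(x)-h(y)|=|u(x)-u(y)|\,|u(x)+u(y)|\le 4M\big(|f(x)-f(y)|+|g(x)-g(y)|\big)\le 4M\,(H_f+L_g)\,\|x-y\|_1 .
\]
Thus $h$ is Lipschitz with constant $4M\big(H_f+H_g\max\{1,D_{\mathcal X}^{\beta_g-1}\}\big)$. This is of the form of $L_h$ used earlier, with $B=M$ and up to the absolute factor $4$.

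Finally, I will remark that for a function $h_j(x,y)$ of the pair $(x,y)$ the same bound holds for each of the two summands separately. It therefore transfers to the product metric on $\mathcal X\times\mathcal X$, which is exactly what is needed to apply Lemma~\ref{lemma:tau_mixing_lipschitz_clean}.
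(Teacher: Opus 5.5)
Your proof is correct and follows the paper's argument essentially line for line: the factorisation $a^2-b^2=(a-b)(a+b)$, the bound $4M$ on the sum factor, and the conversion $\|x-x'\|_1^{\beta_g}\le\max\{1,D_{\mathcal{X}}^{\beta_g-1}\}\|x-x'\|_1$, which give the same constant $4M\bigl(H_f+H_g\max\{1,D_{\mathcal{X}}^{\beta_g-1}\}\bigr)$. Your explicit restriction to $\beta_g\ge1$ is necessary and improves on the paper, which asserts $a^k\le\max\{1,D_{\mathcal{X}}^{k-1}\}\,a$ for every $k>0$ although this fails for $k<1$ and small $a$ (e.g.\ $f\equiv1$, $g(x)=\sqrt{x}$ on $[0,1]$ gives a non-Lipschitz $h$), and the restriction costs nothing in the application since every $s_{ij}\ge1$.
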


\begin{proof}
    Let \(x, x' \in \mc{X}\). Using the definition of \(h\) and the fact that \((a^2 - b^2) = (a-b)(a+b)\), we have that
    \begin{align*}
        |h(x) - h(x')| &= \left|\left[f(x) - g(x) \right]^2 - \left[f(x') - g(x') \right]^2\right| \\
        &= \left| \{f(x) - g(x) + f(x') - g(x')\}\cdot \{ f(x) - g(x) - [f(x') - g(x')]\} \right|\\
        &\le \{\abs{f(x) - g(x)} + \abs{f(x') - g(x')}\} \cdot \{\abs{\left[f(x) - f(x') \right] - \left[g(x) - g(x') \right]}\}
    \end{align*}
where the inequality follows from the fact that \(\abs{a \cdot b} = \abs{a} \cdot \abs{b}\) and since \(\abs{a + b} \le \abs{a} + \abs{b}\) for any two real numbers \(a,b \in \mbb{R}\).

The first term in the expression above can be bounded as follows
    \begin{align*}
        \abs{f(x) - g(x)} + \abs{f(x') - g(x')} \le \abs{f(x)} + \abs{g(x)} + \abs{f(x')} + \abs{g(x')} \le 4M.
    \end{align*}
    Therefore, we have that
    \begin{align*}
        \abs{h(x) - h(x')} &\le 4M \cdot \abs{\left[f(x) - f(x')\right] - \left[ g(x) - g(x')\right]} \\
        &\le 4M \cdot \left\{\abs{f(x) - f(x')} + \abs{g(x) - g(x')}\right\} \quad (\text{since } \abs{a - b} \le \abs{a} + \abs{b} ) \\
        &\le 4M \cdot \left\{H_f \left\lVert x - x' \right\rVert_1 + H_g \left \lVert x - x' \right \rVert_1^{\beta_g} \right\}
    \end{align*}
Where the last inequality follows since \(f\) is $H_f$-Lipschitz and \(g\) is \(\beta_g\)-H\"{o}lder with constant \(H_g\).

Define the diameter of \(\mc{X}\) as:
    \begin{align*}
        D_\mc{X} = \underset{x, x' \in \mc{X}}{\sup} \bignorm{x - x'}_1.
    \end{align*}
    Note that \(D_\mc{X} < \infty\) since \(\mc{X}\) is compact. Now let \(a \in [0, D_\mc{X}]\) and let \(k > 0\). Clearly, \(a^k \le \max\{1, D_\mc{X}^{k -1}\} \cdot a\). Now we use this fact with \(a = \bignorm{x - x'}_1\) and let \(k = \beta_g\) to obtain
    \begin{align*}
        &\bignorm{x - x'}_1^{\beta_g} \le \max\{1, D_\mc{X}^{\beta_g - 1}\} \bignorm{x - x'}_1 \\
        &\implies \abs{h(x) - h(x')} \le 4M \cdot \left(H_f + H_g \cdot \max\left\{1, D_{\mc{X}}^{\beta_g - 1}\right\}\right) \bignorm{x - x'}_1
    \end{align*}
    Therefore, \(h\) is \(4\,M\cdot\left(H_f + H_g \cdot \max\left\{1, D_{\mc{X}}^{\beta_g - 1}\right\}\right)\)-Lipschitz. 
\end{proof}

\begin{lemma}
\label{lemma:tau_mixing_pairs_clean}
Let \(\{X_i\}_{i\in\mathbb Z}\) and \(\{\tilde X_i\}_{i\in\mathbb Z}\) be two independent \(\tau\)–mixing sequences defined on a common domain \(\mc{X} \subseteq \mbb{R}^d\) with coefficients \(\tau_X(k)\) and \(\tau_{\tilde X}(k)\), respectively.  
Assume that \(X_i\) and \(\tilde X_i\) are integrable.  
Define the bivariate process
\begin{align*}
    Z_i=(X_i,\tilde X_i).
\end{align*}
Then \(\{Z_i\}_{i \in \mbb{Z}}\) is \(\tau\)–mixing and its coefficients satisfy
\begin{align*}
    \tau_Z(k)\le \tau_X(k)+\tau_{\tilde X}(k),\qquad k\ge 1.
\end{align*}
\end{lemma}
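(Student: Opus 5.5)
We prove the bound directly from the definition \eqref{eqn:tau_general_setup}, working with the product metric on \(\mc X\times\mc X\) given by the sum of the coordinate metrics, \(d((x,\tilde x),(y,\tilde y))=\|x-y\|+\|\tilde x-\tilde y\|\). Fix \(t\) and \(k\ge1\). Let \(\mathcal A^Z_t=\sigma(Z_s:s\le t)=\mathcal A^X_t\vee\mathcal A^{\tilde X}_t\). By independence of the two sequences, the conditional law of \(Z_{t+k}=(X_{t+k},\tilde X_{t+k})\) given \(\mathcal A^Z_t\) is the product
\[
\mu_{Z_{t+k}\mid\mathcal A^Z_t}=\mu_{X_{t+k}\mid\mathcal A^X_t}\otimes\mu_{\tilde X_{t+k}\mid\mathcal A^{\tilde X}_t}.
\]
The marginal law factorises in the same way, \(\mu_{Z_{t+k}}=\mu_{X_{t+k}}\otimes\mu_{\tilde X_{t+k}}\).

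The main step is a telescoping bound for a fixed \(1\)-Lipschitz \(h\) on the product space. Write \(P=\mu_{X\mid\cdot}\), \(P_0=\mu_X\), \(Q=\mu_{\tilde X\mid\cdot}\) and \(Q_0=\mu_{\tilde X}\). Then
\[
\Big|\int h\,d(P\otimes Q)-\int h\,d(P_0\otimes Q_0)\Big|\le\Big|\int h\,d(P\otimes Q)-\int h\,d(P_0\otimes Q)\Big|+\Big|\int h\,d(P_0\otimes Q)-\int h\,d(P_0\otimes Q_0)\Big|.
\]
For the first term, integrate in \(\tilde x\) first. The function \(x\mapsto\int h(x,\tilde x)\,Q(d\tilde x)\) is \(1\)-Lipschitz in \(x\), because \(h\) is \(1\)-Lipschitz in each coordinate under the sum metric. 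Hence the first term is at most \(\sup_{g\in\Lambda_1}|\int g\,dP-\int g\,dP_0|\). The second term is handled symmetrically, using \(\tilde x\mapsto\int h(x,\tilde x)P_0(dx)\), and is at most \(\sup_{g\in\Lambda_1}|\int g\,dQ-\int g\,dQ_0|\). The right-hand side of the resulting bound does not depend on \(h\), so we can take the supremum over \(h\) inside.

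Next take the \(L^1\) norm. The first supremum is \(\mathcal A^X_t\)-measurable and the second is \(\mathcal A^{\tilde X}_t\)-measurable. Their expectations are therefore exactly \(\tau(\mathcal A^X_t,X_{t+k})\) and \(\tau(\mathcal A^{\tilde X}_t,\tilde X_{t+k})\). Taking the supremum over \(t\) gives \(\tau_Z(k)\le\tau_X(k)+\tau_{\tilde X}(k)\). Integrability of \(X_i\) and \(\tilde X_i\) guarantees that all the integrals of Lipschitz functions are finite.

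The main obstacle is the measure-theoretic justification of the product formula for the conditional law. Concretely, we must check that conditioning \(X_{t+k}\) on the joint past \(\mathcal A^X_t\vee\mathcal A^{\tilde X}_t\) gives the same result as conditioning on \(\mathcal A^X_t\) alone. I would verify this on product sets \(A\cap\tilde A\) with \(A\in\mathcal A^X_t\) and \(\tilde A\in\mathcal A^{\tilde X}_t\), using the independence of \(\sigma(X)\) and \(\sigma(\tilde X)\), and then extend to the full \(\sigma\)-algebra by a \(\pi\)–\(\lambda\) argument. A second point needs care: the claimed inequality relies on the sum metric. With a Euclidean or max metric on pairs, the bound still holds up to a constant factor, which is harmless for the exponential rates used later in the proof of Lemma~\ref{lemma:squared_differences_bound_clean}.
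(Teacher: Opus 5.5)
Your proof is correct, and it takes a genuinely different route from the paper's.

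The paper argues on the primal side, via the coupling characterization of \citet{dedecker_coupling_2004}. It picks $X^*_{p+k}$ independent of $\mathcal A_X^p$ with $\mathbb E\|X_{p+k}-X^*_{p+k}\|_1\le\tau_X(k)$, and an analogous $\tilde X^*_{p+k}$. It glues the two independently on an enlarged space and bounds $\mathbb E\|Z_{p+k}-Z^*_{p+k}\|_1$ by the sum, using the $\ell^1$ norm on pairs.

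You stay with the dual definition \eqref{eqn:tau_general_setup}. You factorize the conditional law of $Z_{t+k}$ given the joint past, then prove the tensorization $W_1(P\otimes Q,P_0\otimes Q_0)\le W_1(P,P_0)+W_1(Q,Q_0)$ for the sum metric by Fubini and telescoping. This is exactly the dual counterpart of the paper's gluing of couplings.

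What your route buys:
\begin{itemize}
\item It needs no existence of optimal couplings, no auxiliary randomization, and no enlargement of the probability space. Its only delicate input is the product factorization of the conditional law, which your $\pi$--$\lambda$ argument on sets $A\cap\tilde A$ handles correctly.
\item It makes explicit an independence structure the paper passes over. The paper's step ``$(X^*_{p+k},\tilde X^*_{p+k})$ is independent of $\mathcal A_X^p\vee\mathcal A_{\tilde X}^p$'' does not follow from the stated independences alone. A coupling variable independent of $\mathcal A_X^p$ could still depend on the past of $\tilde X$. The step requires each coupling variable to be built from its own process plus independent randomization, as in the construction of \citet{dedecker_coupling_2004}, so that $\mathcal A_X^p\vee\sigma(X^*_{p+k})$ and $\mathcal A_{\tilde X}^p\vee\sigma(\tilde X^*_{p+k})$ are independent.
\end{itemize}
In exchange, the paper's route is shorter once the coupling lemma is available, and it produces explicit couplings of the kind used in blocking arguments.

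Finally, your caveat about other metrics on pairs can be sharpened. The max and Euclidean combinations of the coordinate metrics are dominated by the sum metric. So their $1$-Lipschitz classes are contained in the sum-metric class, and the inequality holds for them with constant $1$, not merely up to a constant factor.
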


\begin{proof}
Let \(p\in\mathbb Z\) and set
\begin{equation*}
        \begin{aligned}
            &\mathcal A_X^p=\sigma(X_t:t\le p),\quad 
            \mathcal A_{\tilde X}^p=\sigma(\tilde X_t:t\le p) \\
            &\mathcal A^p=\sigma(Z_t:t\le p)
            =\mathcal A_X^p\lor\mathcal A_{\tilde X}^p = \sigma(\mc{A}_X^p \cup \mc{A}_{\tilde{X}}^p).          
        \end{aligned}
\end{equation*}

Since the two processes are independent, the \(\sigma\)–algebras \(\mathcal A_X^p\) and \(\mathcal A_{\tilde X}^p\) are independent.

We recall the coupling characterization of the \(\tau\)–coefficient
\cite{dedecker_coupling_2004}: for any integrable random variable \(Y\),
\begin{align*}
    \tau(\mathcal G,Y)
    =\inf\Bigl\{\mathbb E \left[\norm{Y - Y^*}_1 \right] : 
    Y^*\overset{d}=Y,\; Y^*\ \text{independent of }\mathcal G\Bigr\},
\end{align*}
where \(\mc{G}\) is a \(\sigma\)-algebra and the infimum is taken over all random variables \(Y^*\) with the same marginal distribution as \(Y\) and independent of \(\mc{G}\).

Apply this with \(\mathcal G=\mathcal A_X^p\) and \(Y=X_{p+k}\).  
There exists a coupled random variable \(X_{p+k}^*\), independent of \(\mathcal A_X^p\) such that \citep[Lemma~3]{dedecker_coupling_2004}
\begin{align*}
    \mathbb E \left[ \norm{X_{p+k}-X_{p+k}^*}_1\right]\le\tau_X(k).
\end{align*}
Likewise, there exists \(\tilde X_{p+k}^*\), independent of \(\mathcal A_{\tilde X}\), such that
\begin{align*}
    \mathbb E \left[ \norm{\tilde X_{p+k}-\tilde X_{p+k}^*}_1\right]\le\tau_{\tilde X}(k).
\end{align*}

By enlarging the underlying probability space if necessary, \footnote{We do not consider measurability issues as this is besides the scope of this paper. We simply note that these issues can be addressed by enlarging the underlying probability space and refer readers to e.g., \citet{yu_rates} for more information.} we may construct
\(X_{p+k}^*\) and \(\tilde X_{p+k}^*\) jointly so that they are independent of each
other.  
Since \(\mathcal A_X^p\) and \(\mathcal A_{\tilde X}^p\) are independent, this implies
\begin{align*}
    (X_{p+k}^*,\tilde X_{p+k}^*)\ \text{is independent of}\ 
    \mathcal A_X^p\lor\mathcal A_{\tilde X}^p
    =\mathcal A^p.
\end{align*}
Because the two original processes are independent, we have that their joint law factors into the product of their marginal laws, \(\mc{L}((X_t \tilde{X}_t)) = \mc{L}(X_t) \times \mc{L}(\tilde{X}_t) = \mc{L}(X_t^*) \times \mc{L}(\tilde{X}_t^*)\) by construction. Therefore, we also have that
\begin{align*}
    (X_{p+k}^*,\tilde X_{p+k}^*)\overset{d}=(X_{p+k},\tilde X_{p+k}).
\end{align*}

Define
\begin{align*}
    Z_{p+k}^*=(X_{p+k}^*,\tilde X_{p+k}^*).
\end{align*}
Using the \(\ell^1\)–norm on the product space,
\begin{align*}
    \|(x,\tilde x)\|_1=\|x\|_1+\|\tilde x\|_1,
\end{align*}
and linearity of expectation, we obtain
\begin{align*}
    \mathbb E \left[ \norm{Z_{p+k}-Z_{p+k}^*}_1 \right]
    =\mathbb E\left[ \norm{X_{p+k}-X_{p+k}^*}_1\right]
    +\mathbb E\left[\norm{\tilde X_{p+k}-\tilde X_{p+k}^*}_1\right]
    \le \tau_X(k)+\tau_{\tilde X}(k).
\end{align*}
By the coupling characterization of \(\tau\),
\begin{align*}
    \tau(\mathcal A^p,Z_{p+k})
    \le \tau_X(k)+\tau_{\tilde X}(k).
\end{align*}
Taking the supremum over all \(p\) yields
\begin{align*}
    \tau_Z(k)\le \tau_X(k)+\tau_{\tilde X}(k),
\end{align*}
which completes the proof.
\end{proof}

\begin{lemma}[Composition of Lipschitz Functions]
\label{lemma:lipschitz_compositions}
Let \(f: \mbb{R}^{d_0} \rightarrow \mbb{R}^{d_1}\) be \(H_f\)-Lipschitz and let \(g: \mbb{R}^{d_1} \rightarrow \mbb{R}^{d_2}\) be \(H_g\)-Lipschitz. Then \(h = g \circ f\) is \((H_f \cdot H_g)\)-Lipschitz.
\end{lemma}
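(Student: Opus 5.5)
The claim is the elementary composition fact. The plan is to chain the two Lipschitz inequalities directly, first for \(f\) and then for \(g\). Fix arbitrary \(x,x'\in\mbb{R}^{d_0}\) and write \(h(x)-h(x') = g(f(x)) - g(f(x'))\). The points \(f(x)\) and \(f(x')\) lie in \(\mbb{R}^{d_1}\), the domain of \(g\).

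First apply the Lipschitz property of \(g\) at these two points:
\[
\norm{h(x)-h(x')} = \norm{g(f(x))-g(f(x'))} \le H_g\,\norm{f(x)-f(x')}.
\]
Then bound the right-hand side with the Lipschitz property of \(f\):
\[
H_g\,\norm{f(x)-f(x')} \le H_g H_f\,\norm{x-x'}.
\]
Since \(x,x'\) are arbitrary, \(h\) is \((H_fH_g)\)-Lipschitz.

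The only point needing care, and the closest thing to an obstacle, is consistency of norms. The norm on \(\mbb{R}^{d_1}\) used to measure the output of \(f\) must be the same norm used to measure the input of \(g\); otherwise a norm-equivalence constant enters. I will state that each space carries one fixed norm, for instance \(\norm{\cdot}_1\) as in Lemma~\ref{lemma:lipschitz_holder_combo_lipschitz_clean} and Lemma~\ref{lemma:tau_mixing_pairs_clean}, used for both Lipschitz conditions. With that convention the constant is exactly \(H_fH_g\).
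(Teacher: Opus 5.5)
Your proof is correct and matches the paper's: the paper fixes a single \(\ell_p\)-norm, \(p\in[1,\infty]\), on all spaces and chains \(\norm{g(f(x))-g(f(x'))}_p \le H_g\norm{f(x)-f(x')}_p \le H_gH_f\norm{x-x'}_p\), exactly as you do. Your caveat that \(\mbb{R}^{d_1}\) must carry the same norm in both Lipschitz conditions is the right one, and the paper handles it implicitly by using one \(p\) throughout.
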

\begin{proof}
    Let \(p \in [1, \infty]\), and let \(x, x' \in \mbb{R}^{d_0}\). 
    \begin{align*}
        \norm{h(x) - h(x')}_p &= \norm{g[f(x)] - g[f(x')]}_p \le H_g \cdot \norm{f(x) - f(x')}_p \le H_g \cdot H_f \cdot \norm{x - x'}_p,
    \end{align*}
    which completes the proof.
\end{proof}

\begin{lemma}
\label{lemma:tau_mixing_lipschitz_clean}
Let \((\Omega,\mc{X},\mu)\) be a probability space, let \(X \in L_1(\mc{X})\) be an integrable random variable defined on \((\Omega,\mc{X},\mu)\), and let \(\mc{A}\subset\mc{X}\) be a \(\sigma\)-algebra. Let \(f:\mc{X}\to\mbb{R}\) be an \(H\)-Lipschitz function.
Then
\begin{align*}
\tau(\mathcal{A}, f(X)) \le H\,\tau(\mathcal{A}, X).
\end{align*}
\end{lemma}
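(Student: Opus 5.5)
We argue directly from the definition \eqref{eqn:tau_general_setup} of the \(\tau\)-coefficient. The key observation is that post-composition with an \(H\)-Lipschitz map sends \(1\)-Lipschitz test functions on \(\mbb R\) to \(H\)-Lipschitz test functions on \(\mc X\).

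Fix a \(1\)-Lipschitz \(h:\mbb R\to\mbb R\). Then \(h\circ f\) is \(H\)-Lipschitz by Lemma~\ref{lemma:lipschitz_compositions}. If \(H=0\), then \(f\) is constant, \(f(X)\) is independent of \(\mc A\), and both sides vanish. Otherwise \(g:=H^{-1}(h\circ f)\in\Lambda_1(\mc X)\).

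The conditional law of \(f(X)\) given \(\mc A\) is the image measure of \(\mu_{X\mid\mc A}\) under \(f\), and likewise for the marginals. This requires a regular conditional distribution, which exists since \(X\) takes values in a Polish space (\(\mc X\subseteq\mbb R^d\)). By the change-of-variables formula,
\[
\int h\,d\mu_{f(X)\mid\mc A}-\int h\,d\mu_{f(X)}
= H\Big(\int g\,d\mu_{X\mid\mc A}-\int g\,d\mu_X\Big)
\]
almost surely. All integrals are finite: \(|h(f(x))|\le |h(f(x_0))|+H\|x-x_0\|\) and \(X\in L_1\).

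Taking absolute values and the supremum over \(h\in\Lambda_1(\mbb R)\) on the left gives
\[
\sup_{h\in\Lambda_1(\mbb R)}\Big|\int h\,d\mu_{f(X)\mid\mc A}-\int h\,d\mu_{f(X)}\Big|
\le H\sup_{g\in\Lambda_1(\mc X)}\Big|\int g\,d\mu_{X\mid\mc A}-\int g\,d\mu_X\Big|
\]
pointwise almost surely. Taking the \(L_1\) norm of both sides yields \(\tau(\mc A,f(X))\le H\,\tau(\mc A,X)\).

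The main obstacle is measurability. The suprema over uncountable classes must be measurable for the \(L_1\) norms to be defined. We handle this as \citet{dedecker_new_2005} do: restrict to a countable dense subclass of Lipschitz functions (for instance, Lipschitz functions with rational data on a countable dense set), which leaves each supremum unchanged.

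As a cross-check, there is an alternative route via the coupling characterization of \citet{dedecker_coupling_2004} used in Lemma~\ref{lemma:tau_mixing_pairs_clean}. Take \(X^*\overset{d}{=}X\), independent of \(\mc A\), with \(\mbb E\|X-X^*\|_1=\tau(\mc A,X)\). Then \(f(X^*)\overset{d}{=}f(X)\) is independent of \(\mc A\), and
\[
\mbb E|f(X)-f(X^*)|\le H\,\mbb E\|X-X^*\|_1=H\,\tau(\mc A,X).
\]
Since \(\tau(\mc A,f(X))\) is an infimum over such couplings, the same bound follows.
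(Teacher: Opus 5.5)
Your proposal is correct and is essentially the paper's proof. Both identify $\mu_{f(X)\mid\mathcal A}$ and $\mu_{f(X)}$ as push-forwards under $f$, observe that $H^{-1}(h\circ f)\in\Lambda_1$ for each $h\in\Lambda_1$, bound the inner supremum pointwise by $H$ times the one defining $\tau(\mathcal A,X)$, and then take $L^1$ norms.

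Your additional care fills in details the paper leaves implicit: the case $H=0$ (where the paper divides by $H$), existence of regular conditional laws, integrability, and measurability of the suprema. The coupling cross-check is also valid, provided the probability space is rich enough (or is enlarged) so that an optimal independent copy $X^*$ exists.
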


\begin{proof}
From the definition of the \(\tau\)-mixing coefficients in~\eqref{eqn:tau_general_setup}
\begin{align*}
\tau(\mathcal{A}, X)
&=
\left\|
\sup_{h\in\Lambda_1}
\left|
\int h(x)\,\mu_{X\mid\mathcal{A}}(dx)
-
\int h(x)\,\mu_X(dx)
\right|
\right\|_{L^1},
\end{align*}
where \(\Lambda_1\) denotes the collection of all \(1\)-Lipschitz functions
\(h:\mathbb{R}\to\mathbb{R}\).

We first rewrite \(\tau(\mathcal{A}, f(X))\) in terms of the distribution of \(X\).
By definition,
\begin{align*}
\tau(\mathcal{A}, f(X))
&=
\left\|
\sup_{h\in\Lambda_1}
\left|
\int h(y)\,\mu_{f(X)\mid\mathcal{A}}(dy)
-
\int h(y)\,\mu_{f(X)}(dy)
\right|
\right\|_{L^1}.
\end{align*}
Since \(\mu_{f(X)\mid\mathcal{A}}\) and \(\mu_{f(X)}\) are the push-forward
measures of \(\mu_{X\mid\mathcal{A}}\) and \(\mu_X\) under \(f\), respectively,
we have for every measurable $h$,
\begin{align*}
\int h(y)\,\mu_{f(X)\mid\mathcal{A}}(dy)
&=
\int h \circ f(x)\,\mu_{X\mid\mathcal{A}}(dx), \\
\int h(y)\,\mu_{f(X)}(dy)
&=
\int h \circ f(x)\,\mu_X(dx).
\end{align*}
Therefore,
\begin{align*}
\tau(\mathcal{A}, f(X))
&=
\left\|
\sup_{h\in\Lambda_1}
\left|
\int h\circ f(x)\,\mu_{X\mid\mathcal{A}}(dx)
-
\int h\circ f(x)\,\mu_X(dx)
\right|
\right\|_{L^1}.
\end{align*}

Now let \(h\in\Lambda_1\). Since \(f\) is \(H\)-Lipschitz and \(h\) is
\(1\)-Lipschitz, the composition \(h\circ f\) is \(H\)-Lipschitz by Lemma~\ref{lemma:lipschitz_compositions}:
for all \(x,y\in\mathbb{R}\),
\begin{align*}
|h(f(x)) - h(f(y))| \le |f(x)-f(y)| \le H|x-y|.
\end{align*}
Hence the function
\begin{align*}
g := \frac{1}{H} \, h\circ f
\end{align*}
belongs to $\Lambda_1$. Moreover,
\begin{align*}
h\circ f = H g.
\end{align*}
Thus,
\begin{align*}
\left|
\int h\circ f \, d\mu_{X\mid\mathcal{A}}
-
\int h\circ f \, d\mu_X
\right|
&=
H
\left|
\int g \, d\mu_{X\mid\mathcal{A}}
-
\int g \, d\mu_X
\right|.
\end{align*}
Taking the supremum over \(h\in\Lambda_1\) and using that
\(g\in\Lambda_1\), we obtain
\begin{align*}
\sup_{h\in\Lambda_1}
\left|
\int h\circ f \, d\mu_{X\mid\mathcal{A}}
-
\int h\circ f \, d\mu_X
\right|
&\le
H
\sup_{g\in\Lambda_1}
\left|
\int g \, d\mu_{X\mid\mathcal{A}}
-
\int g \, d\mu_X
\right|.
\end{align*}
Finally, taking \(L^1\)-norms on both sides we obtain
\begin{align*}
\tau(\mathcal{A}, f(X))
&\le
H
\left\|
\sup_{g\in\Lambda_1}
\left|
\int g \, d\mu_{X\mid\mathcal{A}}
-
\int g \, d\mu_X
\right|
\right\|_{1} \\
&=
H\,\tau(\mathcal{A}, X),
\end{align*}
which completes the proof.
\end{proof}

\begin{corollary}
\label{cor:tau_mixing_lipschitz_clean}
Let $\{X_t\}_{t\ge1}$ be a $\tau$-mixing sequence with coefficients $\tau_X(k)$ defined on a domain $\mc{X} \subset \mbb{R}$, and let $f:\mc{X}\to\mbb{R}$ be $H$-Lipschitz. Then the transformed sequence $\{f(X_t)\}_{t\ge1}$ is $\tau$-mixing with coefficients satisfying
\begin{align*}
\tau_f(k) \le H\,\tau_X(k)
\qquad \text{for all } k\ge 1.
\end{align*}
\end{corollary}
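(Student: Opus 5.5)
The plan is to read off the corollary directly from Lemma~\ref{lemma:tau_mixing_lipschitz_clean}, applied at each time index and each lag, and then take suprema. Fix \(k\ge 1\) and \(t\ge 1\). Let \(\mathcal A_t^X:=\sigma(X_s:s\le t)\) and \(\mathcal A_t^f:=\sigma(f(X_s):s\le t)\).

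The first step is to handle the change of conditioning \(\sigma\)-algebra. Since \(f\) is measurable (being Lipschitz, hence continuous), each \(f(X_s)\) is \(\sigma(X_s)\)-measurable, and therefore \(\mathcal A_t^f\subseteq\mathcal A_t^X\). I would then invoke the monotonicity of \(\tau(\cdot,V)\) in the conditioning \(\sigma\)-algebra, namely Lemma~\ref{lemma:zeta_monotone_sigma_algebra}, which the paper already uses in the proof of Proposition~\ref{prop:mixing_underlying_process_induces_mixing_minibatches}. This gives
\[
\tau(\mathcal A_t^f,f(X_{t+k}))\le \tau(\mathcal A_t^X,f(X_{t+k})).
\]
If one prefers not to rely on that lemma, the coupling characterization used in the proof of Lemma~\ref{lemma:tau_mixing_pairs_clean} gives the same conclusion. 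Any copy of \(f(X_{t+k})\) that is independent of the larger \(\sigma\)-algebra is also independent of the smaller one, so the infimum defining \(\tau\) over the smaller \(\sigma\)-algebra ranges over a superset.

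The second step applies Lemma~\ref{lemma:tau_mixing_lipschitz_clean} with \(\mathcal A=\mathcal A_t^X\) and \(X=X_{t+k}\). Integrability of \(X_{t+k}\) is implicit in the definition of \(\tau_X\) being finite, and we obtain
\[
\tau(\mathcal A_t^X,f(X_{t+k}))\le H\,\tau(\mathcal A_t^X,X_{t+k})\le H\,\tau_X(k).
\]

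Finally, chaining the two inequalities and taking the supremum over \(t\) yields \(\tau_f(k)\le H\tau_X(k)\) for all \(k\ge1\). The transformed sequence is therefore \(\tau\)-mixing whenever \(\tau_X(k)\to 0\), and with the same exponential rate in the exponential case.

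The only point needing care is the first step: Lemma~\ref{lemma:tau_mixing_lipschitz_clean} is stated with a fixed \(\sigma\)-algebra, whereas \(\tau_f\) conditions on the past of the transformed process. The monotonicity argument resolves this, and everything else is routine.
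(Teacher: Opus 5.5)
Your proposal is correct and follows the paper's proof. Both apply Lemma~\ref{lemma:tau_mixing_lipschitz_clean} at each fixed time and lag and then take the supremum over the time index. The one difference is your explicit monotonicity step from \(\sigma(X_s:s\le t)\) to the natural filtration \(\sigma(f(X_s):s\le t)\) of the transformed sequence. The paper leaves this step implicit by conditioning on the past of \(X\) throughout, so your version is slightly more careful but not different in substance.
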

\begin{proof}
    Fix a time \(p \in \mbb{Z}\) and let \(\mc{A}_p = \sigma(X_t: t \le p)\) be the \(\sigma\)-algebra generated by \(\{X_t\}_{t=1}^p\).
    By Lemma~\ref{lemma:tau_mixing_lipschitz_clean} we have that
    \begin{align*}
        \tau(\mc{A}_p, f(X_{p+k})) \le H\tau(\mc{A}_p, X_{p+k}),
    \end{align*}
    where \(k \in \mbb{Z}\).
    Taking the supremum over \(p\) as in~\eqref{eqn:tau_lag_setup} we find that
    \begin{align*}
        \tau_{f}(k) \le H\tau_X(k),
    \end{align*}
    where we denote by \(\tau_f(k)\) the \(\tau\)-mixing coefficients of \(\{f(X_t)\}_{t\ge1}\) at a lag \(k \in \mbb{Z}\).
\end{proof}

\section{Dependence analysis}
\label{app:dependence_analysis}
In this section, we characterize the induced dependence structure of replay buffers and minibatches. This serves as theoretical justification for our assumption of exponentially \(\tau\)-mixing minibatches.

Since replay buffers and minibatches are finite sequences, we introduce truncated mixing coefficients.
Let
\begin{align*}
    W_{1:N} := (W_1,\dots,W_N)
\end{align*}
be a finite sequence of random variables of length \(N\), and define
\begin{align*}
    \mc F_t^W := \sigma(W_1,\dots,W_t), \qquad 1 \le t \le N.
\end{align*}
For \(1 \le k \le N-1\), define the truncated mixing coefficients
\begin{align}
\label{eqn:truncated_tau_mixing_coefficient}
    \tau_W^N(k)
    :=
    \sup_{1 \le t \le N-k}
    \tau(\mc F_t^W, W_{t+k}),
\\
\label{eqn:truncated_beta_mixing_coefficient}
    \beta_W^N(k)
    :=
    \sup_{1 \le t \le N-k}
    \beta(\mc F_t^W, W_{t+k}).
\end{align}
where \(\beta(\cdot, \cdot)\) and \(\tau(\cdot, \cdot)\) are defined in~\eqref{eqn:beta_mixing_coefficients} and~\eqref{eqn:tau_general_setup}.
To streamline the presentation, let \(\zeta\) denote either \(\tau\) or \(\beta\). Then
\begin{align}
\label{eqn:truncated_zeta_mixing_coefficient}
    \zeta_W^N(k)
    :=
    \sup_{1 \le t \le N-k}
    \zeta(\mc F_t^W, W_{t+k}).
\end{align}
We say that \(W_{1:N}\) has exponentially decaying truncated \(\zeta\)-mixing coefficients if there exist constants \(c_0,c_1 > 0\) such that
\begin{align*}
    \zeta_W^N(k) \le c_0 e^{-c_1 k},
    \qquad 1 \le k \le N-1.
\end{align*}
We also note that, when \(W_{1:N}\) is a contiguous subsequence of an infinite process, these truncated coefficients are the finite-sample analogues of the usual one-sided mixing coefficients.

We first state an obvious result that is crucial in all proofs in this section: that \(\zeta\)-mixing coefficients are monotone with respect to the \(\sigma\)-algebra.
\begin{lemma}[Monotonicity in the \(\sigma\)-algebra]
\label{lemma:zeta_monotone_sigma_algebra}
Let \(X\) be a random variable, and let \(\mathcal A\) and \(\mathcal B\) be \(\sigma\)-algebras such that
\begin{align*}
    \mathcal A \subseteq \mathcal B.
\end{align*}
Then, for \(\zeta \in \{\tau,\beta\}\),
\begin{align*}
    \zeta(\mathcal A, X) \le \zeta(\mathcal B, X).
\end{align*}
\end{lemma}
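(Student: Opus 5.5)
We prove the two cases separately, working directly from the definitions \eqref{eqn:tau_general_setup} and \eqref{eqn:beta_mixing_coefficients}. In each case the idea is that a larger conditioning \(\sigma\)-algebra can only move the conditional law of \(X\) further from its marginal. We will use the tower property of conditional expectation together with Jensen's inequality for the convex functional ``supremum of absolute deviations''.

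\emph{Case \(\zeta=\tau\).} Let \(h\in\Lambda_1\) with \(h\) integrable against the law of \(X\). Since \(\mathcal A\subseteq\mathcal B\), the tower property gives
\[
\mathbb E[h(X)\mid\mathcal A]-\mathbb E[h(X)]=\mathbb E\bigl[\,\mathbb E[h(X)\mid\mathcal B]-\mathbb E[h(X)]\,\big|\,\mathcal A\bigr].
\]
Taking absolute values and applying conditional Jensen, we get the bound
\[
\bigl|\mathbb E[h(X)\mid\mathcal A]-\mathbb E h(X)\bigr|\le \mathbb E\bigl[\,\sup_{g\in\Lambda_1}|\mathbb E[g(X)\mid\mathcal B]-\mathbb E g(X)|\,\big|\,\mathcal A\bigr].
\]
The right-hand side does not depend on \(h\), so we may take the supremum over \(h\) on the left. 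Taking expectations and using the tower property once more yields \(\tau(\mathcal A,X)\le\tau(\mathcal B,X)\).

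\emph{Case \(\zeta=\beta\).} The argument is identical. Apply it to the indicator functions \(\mathbf 1_A\) for \(A\in\sigma(X)\), which gives \(|\mathbb P(A\mid\mathcal A)-\mathbb P(A)|\le\mathbb E[\sup_{A'}|\mathbb P(A'\mid\mathcal B)-\mathbb P(A')|\mid\mathcal A]\). Then take the supremum over \(A\) and integrate.

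The main obstacle is measurability of the supremum over an uncountable class, needed to make sense of the conditional expectation of the supremum. We handle this by using regular conditional distributions, which exist because \(X\) takes values in a Polish space. The supremum is then the Wasserstein-1 distance in the \(\tau\) case (respectively the total variation distance in the \(\beta\) case) between the kernel \(\mu_{X\mid\mathcal B}\) and \(\mu_X\). Each of these is a measurable function of the kernel, and each is attained along a countable dense family of Lipschitz functions (respectively a countable generating algebra of sets), so the supremum is measurable.

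An alternative for the \(\tau\) case is the coupling characterization recalled in the proof of Lemma~\ref{lemma:tau_mixing_pairs_clean}. Any copy \(X^*\) independent of \(\mathcal B\) is also independent of \(\mathcal A\), so the infimum defining \(\tau(\mathcal A,X)\) is taken over a larger set of couplings, which gives the same inequality directly.
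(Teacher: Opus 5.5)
Your proof is correct, and it is more complete than the paper's. The paper's proof is a single sentence asserting that the claim is immediate from the definitions because enlarging the $\sigma$-algebra ``can only increase the amount of dependence.'' That is not literally visible from the definitions. Passing from $\mathcal A$ to $\mathcal B$ changes the conditional law $\mu_{X\mid\mathcal A}$; it does not enlarge the index set of a supremum.

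The real content is exactly your step. By the tower property, $\mu_{X\mid\mathcal A}$ is the $\mathcal A$-conditional average of $\mu_{X\mid\mathcal B}$. Since $W_1(\cdot,\mu_X)$ and the total-variation distance to $\mu_X$ are convex, conditional Jensen applies.

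The paper also never addresses measurability of the supremum, which you handle with regular conditional distributions and a countable determining family. Two remarks on that step:
\begin{itemize}
\item It uses that $X$ is Polish-valued with finite first moment, which holds in the paper's bounded setting.
\item For the $\tau$ case in a non-compact Polish space, the countable family of $1$-Lipschitz functions cannot be sup-norm dense. You should instead approximate pointwise and monotonically, e.g.\ via $h=\inf_k\bigl(h(x_k)+d(\cdot,x_k)\bigr)$ with rational constants, together with dominated convergence. On a compact space such as $[0,1]^d\times\mathcal A$, Arzel\`a--Ascoli gives sup-norm separability directly.
\end{itemize}

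Your alternative coupling argument is a genuinely different route. Via the Dedecker--Prieur characterization, which the paper itself invokes in Lemma~\ref{lemma:tau_mixing_pairs_clean}, monotonicity for $\tau$ becomes an infimum over a larger feasible set. This is the sense in which the claim really is ``immediate.'' It is shorter, but it has two costs:
\begin{itemize}
\item It relies on the nontrivial attainment direction of the coupling lemma, which requires a sufficiently rich probability space.
\item It covers only $\tau$. For $\beta$ you would need Berbee's maximal-coupling lemma.
\end{itemize}
Your Jensen argument handles both coefficients uniformly with elementary tools.
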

\begin{proof}
The claim is immediate from the definitions of \(\tau\) and \(\beta\): enlarging the \(\sigma\)-algebra can only increase the amount of dependence measured between the past and \(X\).
\end{proof}

\subsection{Dependence strucure of the replay buffer}

We first show that a replay buffer inherits exponential mixing from the underlying process.

\begin{lemma}[Exponentially \(\zeta\)-mixing replay buffer]
\label{lemma:exp_tau_mixing_buffer}
Let \(t_0 \ge 1\), and suppose the process \((X_t)_{t\ge 1}\) is exponentially \(\zeta\)-mixing in the sense that
\begin{align*}
    \zeta_X(k)
    :=
    \sup_{t \ge 1} \zeta(\mc A_t, X_{t+k})
    \le c_0 e^{-c_1 k},
    \qquad k \ge 1,
\end{align*}
where \(\mc A_t := \sigma(X_s: s \le t)\). Define the replay buffer of size \(M \in \mathbb{N}\) as 
\begin{align}
\label{eqn:buffer}
    \mc M = (Z_1,\dots,Z_M),
    \qquad
    Z_m := X_{t_0+m-1}, \quad m=1,2,\dots,M.
\end{align}
Then \(\mc M\) is also exponentially \(\zeta\)-mixing, with
\begin{align*}
    \zeta_{\mc M}^M(k)
    \le
    \zeta_X(k)
    \le
    c_0 e^{-c_1 k},
    \qquad 1 \le k \le M-1.
\end{align*}
\end{lemma}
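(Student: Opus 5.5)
The argument is a direct comparison of $\sigma$-algebras, reducing the truncated buffer coefficient to the population coefficient via Lemma~\ref{lemma:zeta_monotone_sigma_algebra}.

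\textbf{Step 1: compare the buffer filtration with the past of the process.} Fix $1 \le k \le M-1$ and $1 \le t \le M-k$. By construction \eqref{eqn:buffer},
\begin{align*}
    \mc F_t^{\mc M} = \sigma(Z_1,\dots,Z_t) = \sigma(X_{t_0},\dots,X_{t_0+t-1}).
\end{align*}
The right-hand side is generated by a subset of $\{X_s : s \le t_0+t-1\}$. Hence $\mc F_t^{\mc M} \subseteq \mc A_{t_0+t-1}$.

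\textbf{Step 2: identify the future variable.} The target variable is $Z_{t+k} = X_{t_0+t-1+k}$, which lies exactly $k$ steps after the last time index generating $\mc A_{t_0+t-1}$.

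\textbf{Step 3: apply monotonicity.} By Lemma~\ref{lemma:zeta_monotone_sigma_algebra},
\begin{align*}
    \zeta(\mc F_t^{\mc M}, Z_{t+k}) \le \zeta(\mc A_{t_0+t-1}, X_{(t_0+t-1)+k}) \le \sup_{s\ge 1}\zeta(\mc A_s, X_{s+k}) = \zeta_X(k).
\end{align*}
The last inequality uses $t_0+t-1 \ge 1$. Taking the supremum over $1\le t\le M-k$ in \eqref{eqn:truncated_zeta_mixing_coefficient} gives $\zeta^M_{\mc M}(k)\le \zeta_X(k)\le c_0e^{-c_1k}$.

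\textbf{Main obstacle.} The only delicate point is Lemma~\ref{lemma:zeta_monotone_sigma_algebra} itself, particularly for $\beta$, since the general coefficient $\beta(\mc A, X)$ has not been written out explicitly. I would justify both cases in the same way.

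For $\tau$, use the coupling characterization, or equivalently the tower property. For $h\in\Lambda_1$, write $E[h(X)\mid\mc A] = E\bigl[E[h(X)\mid\mc B]\mid\mc A\bigr]$. Conditional Jensen then gives
\begin{align*}
    E\Bigl[\sup_h \bigl|E[h(X)\mid\mc A]-Eh(X)\bigr|\Bigr] \le E\Bigl[\sup_h \bigl|E[h(X)\mid\mc B]-Eh(X)\bigr|\Bigr].
\end{align*}
The supremum is taken over a separable class by restricting to a countable dense subset of $\Lambda_1$ on the compact space, which handles measurability.

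For $\beta$, apply the same Jensen argument to the total-variation distance $\sup_A|P(A\mid\cdot)-P(A)|$.

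I would state these two justifications briefly. The rest of the proof is pure index bookkeeping, and no stationarity is needed beyond the definition of $\zeta_X$ as a supremum over $t$.
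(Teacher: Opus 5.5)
Your proposal is correct and follows the paper's proof exactly. It uses the inclusion $\mathcal F_t^{\mathcal M}=\sigma(X_{t_0},\dots,X_{t_0+t-1})\subseteq\mathcal A_{t_0+t-1}$, applies Lemma~\ref{lemma:zeta_monotone_sigma_algebra} to $Z_{t+k}=X_{t_0+t-1+k}$, and takes a supremum over $1\le t\le M-k$. Your justification of the monotonicity lemma, via the tower property with conditional Jensen or more simply the coupling characterization (a copy independent of the larger $\sigma$-algebra is also independent of the smaller one, which sidesteps your compactness/measurability remark), fills in a step the paper only asserts as immediate.
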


\begin{proof}
For \(1 \le t \le M\), let
\begin{align*}
    \mc F_t^{\mc M}
    :=
    \sigma(Z_1,\dots,Z_t).
\end{align*}
Since \((Z_m)_{m=1}^M\) is a contiguous subsequence of \((X_t)\), we have
\begin{align*}
    \mc F_t^{\mc M}
    =
    \sigma(X_{t_0},\dots,X_{t_0+t-1})
    \subseteq
    \sigma(X_s : s \le t_0+t-1)
    =
    \mc A_{t_0+t-1}.
\end{align*}
Therefore, for every \(1 \le t \le M-k\),
\begin{align*}
    \zeta(\mc F_t^{\mc M}, Z_{t+k})
    &=
    \zeta(\mc F_t^{\mc M}, X_{t_0+t+k-1})
    \\
    &\le
    \zeta(\mc A_{t_0+t-1}, X_{t_0+t+k-1})
    \\
    &\le
    \zeta_X(k),
\end{align*}
where the first inequality follows from Lemma~\ref{lemma:zeta_monotone_sigma_algebra}. Taking the supremum over \(1 \le t \le M-k\) gives
\begin{align*}
    \zeta_{\mc M}^M(k) \le \zeta_X(k),
    \qquad 1 \le k \le M-1.
\end{align*}
The exponential bound follows immediately.
\end{proof}

This shows that a replay buffer does not create additional long-range dependence: it inherits the same exponential dependence decay as the underlying process, up to truncation at length \(M\).

\subsection{From \texorpdfstring{\(\beta\)}{beta}-mixing to \texorpdfstring{\(\tau\)}{tau}-mixing}

In this section, we show that if the underlying process is exponentially \(\beta\)-mixing, then it is also exponentially \(\tau\)-mixing.

\begin{lemma}[Bounded-metric \(\beta\)-to-\(\tau\) inequality]
\label{lem:beta_to_tau}
Let \((\mathcal X,d_{\mathcal X})\) be a bounded Polish metric space with
diameter
\[
    D_{\mathcal X}
    :=
    \sup_{x,x'\in\mathcal X} d_{\mathcal X}(x,x')
    <
    \infty .
\]
Let \((X_t)_{t\ge 1}\) be a \(\mathcal X\)-valued process.
Then, for every \(k\ge1\),
\[
    \tau_X(k)
    \le
    C_{\beta\tau}D_{\mathcal X}\,\beta_X(k),
\]
where \(C_{\beta\tau}\) is a universal constant depending only on the
normalization of \(\beta\) and total variation.
\end{lemma}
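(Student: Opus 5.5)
The argument works pointwise in \(\omega\). For a fixed past and a fixed future variable, the \(\tau\)-coefficient integrates a Wasserstein-type discrepancy between the conditional and unconditional laws, and \(\beta\) integrates their total-variation distance. On a bounded metric space the first is dominated by the diameter times the second. Concretely, fix \(t\ge1\) and \(k\ge1\), set \(\mathcal A=\mathcal A_t\) and \(V=X_{t+k}\), and write \(\nu_\omega:=\mu_{V\mid\mathcal A}(\omega,\cdot)\) for a regular conditional law, which exists because \(\mathcal X\) is Polish, and \(\nu:=\mu_V\).

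The first step is to reduce the supremum over \(\Lambda_1(\mathcal X)\) to centred functions. For \(h\in\Lambda_1\), both \(\nu_\omega\) and \(\nu\) are probability measures, so adding a constant to \(h\) does not change \(\int h\,d\nu_\omega-\int h\,d\nu\). Fix \(x_0\in\mathcal X\) and replace \(h\) by \(h-h(x_0)\). Then \(|h(x)|\le d_{\mathcal X}(x,x_0)\le D_{\mathcal X}\), so \(\|h\|_\infty\le D_{\mathcal X}\). A slightly better choice is to shift by the midpoint of the range of \(h\), which gives \(\operatorname{osc}(h)\le D_{\mathcal X}\). This yields
\[
\Big|\int h\,d\nu_\omega-\int h\,d\nu\Big|\le \operatorname{osc}(h)\,\sup_{A}|\nu_\omega(A)-\nu(A)|\le D_{\mathcal X}\,\|\nu_\omega-\nu\|_{TV},
\]
using the standard bound \(|\int g\,d(\nu_\omega-\nu)|\le \operatorname{osc}(g)\sup_A|\nu_\omega(A)-\nu(A)|\) for bounded measurable \(g\). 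Taking the supremum over \(h\) first and then the \(L^1\)-norm over \(\omega\) gives \(\tau(\mathcal A_t,X_{t+k})\le D_{\mathcal X}\,\mathbb E\big[\sup_{A}|\mathbb P(X_{t+k}\in A\mid\mathcal A_t)-\mathbb P(X_{t+k}\in A)|\big]\).

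The second step compares this with \(\beta\). The expectation on the right involves only events \(A\in\sigma(X_{t+k})\subseteq\sigma(X_s:s\ge t+k)\). By Lemma~\ref{lemma:zeta_monotone_sigma_algebra}, applied here in its future-side form, this quantity is at most the coefficient in \eqref{eqn:beta_mixing_coefficients}. Hence \(\tau(\mathcal A_t,X_{t+k})\le D_{\mathcal X}\beta_X(k)\). Taking the supremum over \(t\) gives the claim with \(C_{\beta\tau}=1\) under the \(\sup_A\) normalization of total variation. If \(\beta\) is instead normalized with the full variation norm (a factor \(2\)), then \(C_{\beta\tau}\) is \(1/2\) or \(2\), which explains the phrase ``universal constant depending only on the normalization.''

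The main obstacle is measure-theoretic rather than analytic. The supremum over the uncountable classes \(\Lambda_1\) and \(\{A\}\) has to be taken pointwise in \(\omega\) inside the expectation, which requires regular conditional distributions and measurability of \(\omega\mapsto\|\nu_\omega-\nu\|_{TV}\). I would handle this using the Polish assumption, under which \(\sigma(X_{t+k})\) is countably generated. The total-variation supremum can then be restricted to a countable generating algebra, and the Lipschitz supremum to a countable dense subclass. This makes both suprema measurable and justifies taking the sup inside the \(L^1\)-norm.
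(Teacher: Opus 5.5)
Your proof is correct and follows the same route as the paper. Both arguments bound the Lipschitz supremum (i.e.\ \(W_1\)) pointwise by \(D_{\mathcal X}\) times total variation, integrate over \(\omega\), enlarge the event class from \(\sigma(X_{t+k})\) to \(\sigma(X_s:s\ge t+k)\), and then take \(\sup_t\).

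You differ in three small ways. You derive the \(W_1\)--TV bound from the oscillation inequality, which gives the sharper constant \(C_{\beta\tau}=1\) instead of the paper's \(2\); the paper's factor \(2\) comes from pairing the diameter bound with the doubled TV norm. You also treat measurability, which the paper ignores. Finally, the event-enlargement step is just monotonicity of a supremum over a smaller family of events, not Lemma~\ref{lemma:zeta_monotone_sigma_algebra} as stated, which concerns enlarging the conditioning \(\sigma\)-algebra.
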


\begin{proof}
Fix \(t\) and set
\[
    \mathcal A_t:=\sigma(X_s:s\le t),
    \qquad
    \mathcal B_{t,k}:=\sigma(X_s:s\ge t+k).
\]
Let \(Z\) be an \(\mathcal X\)-valued random variable measurable with respect
to \(\mathcal B_{t,k}\). For every realization of \(\mathcal A_t\), the
quantity
\[
    \sup_{h\in\Lambda_1(\mathbb R^d)}
    \left|
    \int h\,d\mu_{Z\mid\mathcal A_t}
    -
    \int h\,d\mu_Z
    \right|
\]
is the \(1\)-Wasserstein distance between
\(\mu_{Z\mid\mathcal A_t}\) and \(\mu_Z\). Since both measures are supported
on \(\mathcal X\), the bounded-metric inequality gives
\[
    W_1(\mu_{Z\mid\mathcal A_t},\mu_Z)
    \le
    D_\mc{X}
    \|\mu_{Z\mid\mathcal A_t}-\mu_Z\|_{\mathrm{TV}}.
\]
With the total-variation convention
\[
    \|\mu-\nu\|_{\mathrm{TV}}
    =
    2\sup_{B}|\mu(B)-\nu(B)|,
\]
we obtain
\[
\begin{aligned}
    \tau_Z(k)
    &\le
    2\,D_\mc{X}\,
    \mathbb E\left[
    \sup_B
    \left|
        \mu_{Z\mid\mathcal A_t}(B)-\mu_Z(B)
    \right|
    \right]  \\
    &\le
    2\,D_\mc{X}\,
    \mathbb E\left[
    \sup_{A\in\mathcal B_{t,k}}
    \left|
        \mathbb P(A\mid\mathcal A_t)-\mathbb P(A)
    \right|
    \right].
\end{aligned}
\]
Taking the supremum over \(t\) gives
\[
    \tau_X(k)
    \le
    C_{\beta \tau}\,D_\mc{X}\,\beta_X(k).
\]
With our total variation convention we have \(B_{\beta \tau} = 2\).
\end{proof}

\begin{lemma}[Exponential \(\tau\)-mixing from exponential \(\beta\)-mixing]
\label{lemma:tau_mixing_from_beta_mixing}
Let \((\mathcal Z,d_{\mathcal Z})\) be a bounded Polish metric space with
diameter
\[
    D_{\mathcal Z}
    :=
    \sup_{z,z'\in\mathcal Z} d_{\mathcal Z}(z,z')
    <
    \infty .
\]
Let \((Z_t)_{t\ge 1}\) be an exponentially \(\beta\)-mixing \(\mathcal Z\)-valued process satisfying
\[
    \beta_Z(k) =
    \sup_{t\ge 1}
    \beta\bigl(\sigma(Z_s:s\le t),Z_{t+k}\bigr)
    \le
    C_\beta e^{-c_\beta k},
    \qquad k\ge 1 .
\]
Then \((Z_t)_{t\ge 1}\) is exponentially \(\tau\)-mixing. More
precisely, for every \(k\ge 1\),
\[
    \tau_Z(k)
    :=
    \sup_{t\ge 1}
    \tau\bigl(\sigma(Z_s:s\le t),Z_{t+k}\bigr)
    \le
    C_{\beta\tau}D_{\mathcal Z} C_\beta e^{-c_\beta k},
\]
where \(C_{\beta\tau}\) is a universal constant depending only on the
normalization of \(\beta\) and total variation. In particular, the
exponential rate is unchanged, and only the prefactor changes.
\end{lemma}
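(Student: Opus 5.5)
The plan is to reduce the statement to Lemma~\ref{lem:beta_to_tau}, which already gives, lag by lag, $\tau_Z(k)\le C_{\beta\tau}D_{\mathcal Z}\,\beta_Z(k)$ for any process with values in a bounded Polish metric space.

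Fix $k\ge1$ and $t\ge1$, and set $\mathcal A_t:=\sigma(Z_s:s\le t)$.

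\textbf{Step 1: bound the Lipschitz sup by total variation.} For almost every realization, the inner supremum over $h\in\Lambda_1(\mathcal Z)$ in \eqref{eqn:tau_general_setup} is the $1$-Wasserstein distance between $\mu_{Z_{t+k}\mid\mathcal A_t}$ and $\mu_{Z_{t+k}}$. Both measures live on a space of diameter $D_{\mathcal Z}$. By the coupling bound
\[
W_1(\mu,\nu)\le D_{\mathcal Z}\,\inf_{\pi}\pi(x\neq y),
\]
or directly by recentering $h$ so that $|h|\le D_{\mathcal Z}/2$, we get
\[
W_1(\mu,\nu)\le D_{\mathcal Z}\sup_B|\mu(B)-\nu(B)|.
\]

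\textbf{Step 2: compare with the $\beta$-coefficient.} Since $\sigma(Z_{t+k})\subseteq\sigma(Z_s:s\ge t+k)$, the supremum over Borel sets $B$ of
\[
|\mathbb P(Z_{t+k}\in B\mid\mathcal A_t)-\mathbb P(Z_{t+k}\in B)|
\]
is dominated by the supremum over $A\in\sigma(Z_s:s\ge t+k)$ in \eqref{eqn:beta_mixing_coefficients}. This is the monotonicity observation of Lemma~\ref{lemma:zeta_monotone_sigma_algebra}.

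\textbf{Step 3: integrate and take suprema.} Taking $L^1$ norms gives
\[
\tau(\mathcal A_t,Z_{t+k})\le C_{\beta\tau}D_{\mathcal Z}\,\beta_Z(k).
\]
Taking the supremum over $t$ yields $\tau_Z(k)\le C_{\beta\tau}D_{\mathcal Z}C_\beta e^{-c_\beta k}$. The rate constant $c_\beta$ is unchanged; only the prefactor changes.

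\textbf{Main obstacle: measurability.} The delicate point is justifying that the random supremum over the uncountable class $\Lambda_1$ is measurable and coincides pointwise with $W_1$ of a regular conditional distribution. Here I would use that $\mathcal Z$ is Polish, so regular conditional laws exist, and then invoke Kantorovich--Rubinstein duality realization-wise. Separability lets the supremum be taken over a countable dense family of bounded Lipschitz functions. The constant $C_{\beta\tau}$ ($1$ or $2$) depends only on the total-variation normalization, as the statement allows.
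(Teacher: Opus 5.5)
Your proposal is correct and follows essentially the same route as the paper. The paper applies Lemma~\ref{lem:beta_to_tau} (realization-wise $W_1\le D_{\mathcal Z}\cdot$TV, then the $L^1$ norm and the supremum over $t$) with $\mathcal A=\sigma(Z_s:s\le t)$ and $V=Z_{t+k}$. You simply inline that lemma's argument, and your coupling/recentering bound gives the slightly sharper constant $C_{\beta\tau}=1$ under the $\sup_B$ normalization. One small remark: the lemma's hypothesis is on the single-variable coefficient $\beta(\mathcal A_t,Z_{t+k})$, so your Step 2 is unnecessary and in fact passes to a larger quantity. Integrating Step 1 already gives $\tau(\mathcal A_t,Z_{t+k})\le D_{\mathcal Z}\,\beta(\mathcal A_t,Z_{t+k})$ directly.
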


\begin{proof}
By the bounded-metric comparison between \(\beta\)- and \(\tau\)-mixing
coefficients in Lemma~\ref{lem:beta_to_tau}, for any sub-\(\sigma\)-algebra \(\mathcal A\) and any
\(\mathcal Z\)-valued random variable \(V\),
\[
    \tau(\mathcal A,V)
    \le
    C_{\beta\tau}D_{\mathcal Z}\,
    \beta(\mathcal A,V).
\]
Applying this inequality with
\[
    \mathcal A=\sigma(Z_s:s\le t),
    \qquad
    V=Z_{t+k},
\]
gives, for every \(t\ge 1\) and \(k\ge 1\),
\[
    \tau\bigl(\sigma(Z_s:s\le t),Z_{t+k}\bigr)
    \le
    C_{\beta\tau}D_{\mathcal Z}\,
    \beta\bigl(\sigma(Z_s:s\le t),Z_{t+k}\bigr).
\]
Taking the supremum over \(t\ge 1\) yields
\[
    \tau_Z(k)
    \le
    C_{\beta\tau}D_{\mathcal Z}\,\beta_Z(k).
\]
Using the assumed exponential \(\beta\)-mixing bound,
\[
    \beta_Z(k)\le C_\beta e^{-c_\beta k},
\]
we obtain
\[
    \tau_Z(k)
    \le
    C_{\beta\tau}D_{\mathcal Z} C_\beta e^{-c_\beta k},
    \qquad k\ge 1 .
\]
This proves the claim.
\end{proof}

\subsection{Dependence structure of minibatches}
We now analyze the dependence structure of minibatches under different sampling schemes. From Lemma~\ref{lemma:tau_mixing_from_beta_mixing} we know that if the underlying sequence is exponentially \(\beta\)-mixing, then it is also exponentially \(\tau\)-mixing with new constants depending on the bounded metric comparison.

Therefore, to streamline the presentation, we proceed under the assumption that the underlying process is exponentially \(\tau\)-mixing with 
\begin{align*}
    \tau_X(k) \le c_0 \exp(-c_1 k), \quad k\ge 1.
\end{align*}
In the case where the underlying trajectory is exponentially \(\beta\)-mixing as in Lemma~\ref{lemma:tau_mixing_from_beta_mixing}, the above holds with
\begin{align*}
    c_0 = C_{\beta \tau} D_\mc{Z} C_\beta, \qquad c_1 = c_\beta.
\end{align*}

\subsubsection{Uniform sampling}

We first consider the standard sampling scheme used in DQN: uniform sampling with replacement.

At training time, we sample a minibatch from the replay buffer. Let
\begin{align*}
    I_1,\dots,I_n \overset{\mathrm{i.i.d.}}{\sim} \mathrm{Unif}\{1,\dots,M\},
\end{align*}
independently of \((Z_1,\dots,Z_M)\), and let
\begin{align*}
    I_{(1)} \le \cdots \le I_{(n)}
\end{align*}
be the corresponding order statistics. For any realized ordered index set
\begin{align*}
    1 \le i_{(1)} \le \cdots \le i_{(n)} \le M,
\end{align*}
we define the ordered minibatch by
\begin{align*}
    \mc Y = (Y_1,\dots,Y_n),
    \qquad
    Y_r := Z_{i_{(r)}},
    \quad r=1,\dots,n.
\end{align*}
With
\begin{align*}
    \mc Y_r := \sigma(Y_1,\dots,Y_r),
\end{align*}
we define, for \(1 \le k \le n-1\),
\begin{align*}
    \tau_{\mc Y}^n(k)
    :=
    \sup_{1 \le r \le n-k}
    \tau(\mc Y_r, Y_{r+k}).
\end{align*}

The first result explains why sampling with replacement is problematic for \(\tau\)-mixing: repeated indices occur with positive probability, so the lag in minibatch order need not correspond to any positive lag in the replay buffer.

\begin{proposition}[Within-minibatch dependence bound for a realized sampled minibatch]
\label{prop:minibatch_tau_bound} 
With the sampling introduced above
define
\begin{align*}
    \mathcal Y^{(i)} = (Y_1^{(i)},\dots,Y_n^{(i)}),
    \qquad
    Y_r^{(i)} := Z_{i_{(r)}},
    \quad r=1,\dots,n.
\end{align*}
Write
\begin{align*}
    \mathcal Y_r^{(i)} := \sigma(Y_1^{(i)},\dots,Y_r^{(i)}).
\end{align*}
and define the lag-zero truncated coefficient
\begin{align*}
    \bar\tau_{\mathcal M}^M(k)
    :=
    \sup_{1 \le t \le M-k}
    \tau(\sigma(Z_1,\dots,Z_t), Z_{t+k}),
    \qquad 0 \le k \le M-1.
\end{align*}
Then, for every \(1 \le r < s \le n\),
\begin{align*}
    \tau(\mathcal Y_r^{(i)}, Y_s^{(i)})
    \le
    \bar\tau_{\mathcal M}^M\bigl(i_{(s)} - i_{(r)}\bigr).
\end{align*}
Consequently, for \(1 \le \ell \le n-1\),
\begin{align*}
    \tau_{\mathcal Y^{(i)}}^n(\ell)
    \le
    \sup_{1 \le r \le n-\ell}
    \bar\tau_{\mathcal M}^M\bigl(i_{(r+\ell)} - i_{(r)}\bigr).
\end{align*}
If, in addition,
\begin{align*}
    \tau_{\mathcal M}^M(k) \le c_0 e^{-c_1 k},
    \qquad 1 \le k \le M-1,
\end{align*}
then
\begin{align*}
    \tau_{\mathcal Y^{(i)}}^n(\ell)
    \le
    \max\Biggl\{
        \bar\tau_{\mathcal M}^M(0),
        \;
        c_0
        \sup_{1 \le r \le n-\ell}
        \exp\bigl(-c_1(i_{(r+\ell)} - i_{(r)})\bigr)
    \Biggr\}.
\end{align*}
\end{proposition}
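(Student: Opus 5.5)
Fix a realized ordered index vector $1\le i_{(1)}\le\cdots\le i_{(n)}\le M$; since the sampler is independent of $(Z_1,\dots,Z_M)$, we may work conditionally on it and treat the indices as deterministic. The argument mirrors the proof of Proposition~\ref{prop:mixing_underlying_process_induces_mixing_minibatches}, but we now track the buffer gap explicitly and allow it to vanish.

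\textbf{Step 1: reduce to a contiguous prefix of the buffer.} Take $1\le r<s\le n$ and set $k:=i_{(s)}-i_{(r)}\ge 0$. Because $i_{(1)}\le\cdots\le i_{(r)}$, each of $Y^{(i)}_1,\dots,Y^{(i)}_r$ is a coordinate of $(Z_1,\dots,Z_{i_{(r)}})$. Hence
\begin{align*}
\mathcal Y_r^{(i)}\subseteq\sigma(Z_1,\dots,Z_{i_{(r)}}).
\end{align*}
Also $Y_s^{(i)}=Z_{i_{(r)}+k}$.

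\textbf{Step 2: apply monotonicity.} By Lemma~\ref{lemma:zeta_monotone_sigma_algebra},
\begin{align*}
\tau(\mathcal Y_r^{(i)},Y_s^{(i)})\le\tau\bigl(\sigma(Z_1,\dots,Z_{i_{(r)}}),Z_{i_{(r)}+k}\bigr).
\end{align*}
Since $t=i_{(r)}$ satisfies $1\le t\le M-k$, the right-hand side is bounded by $\bar\tau_{\mathcal M}^M(k)$ by definition of the supremum. This proves the first claim.

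One point needs care. If $k=0$, then $Z_t$ is $\sigma(Z_1,\dots,Z_t)$-measurable, and $\bar\tau^M_{\mathcal M}(0)$ is generally not small: it equals the expected Wasserstein distance between a point mass and the marginal law. That is precisely the phenomenon of repeated indices that the proposition is meant to expose. The bound remains valid, because the supremum definition already covers lag $0$.

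\textbf{Step 3: take the supremum over $r$.} Setting $s=r+\ell$ and taking the supremum over $1\le r\le n-\ell$ gives the second display.

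\textbf{Step 4: the exponential case.} For each $r$, split according to the buffer gap $i_{(r+\ell)}-i_{(r)}$.
\begin{itemize}
\item If the gap is $0$, the term is at most $\bar\tau^M_{\mathcal M}(0)$.
\item If the gap is at least $1$, then $\bar\tau^M_{\mathcal M}$ coincides with $\tau^M_{\mathcal M}$ at positive lags, so the term is at most $c_0\exp\bigl(-c_1(i_{(r+\ell)}-i_{(r)})\bigr)$.
\end{itemize}
Each term is therefore bounded by the maximum of $\bar\tau^M_{\mathcal M}(0)$ and the supremum of the exponential terms. Taking the supremum over $r$ yields the final bound.

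The only genuine subtlety is the lag-zero case, together with checking that conditioning on the indices is legitimate. The latter follows from the independence of the sampler, exactly as in the averaging step of the proof of Proposition~\ref{prop:mixing_underlying_process_induces_mixing_minibatches}. Everything else is monotonicity of $\tau$ in the $\sigma$-algebra.
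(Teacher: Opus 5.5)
Your proof is correct and follows the paper's argument essentially line by line: the inclusion $\mathcal Y_r^{(i)}\subseteq\sigma(Z_1,\dots,Z_{i_{(r)}})$, monotonicity of $\tau$ in the $\sigma$-algebra (Lemma~\ref{lemma:zeta_monotone_sigma_algebra}), the supremum over $r$, and a case split on whether the buffer gap is zero for the exponential bound. Your explicit check that $t=i_{(r)}\le M-k$ and your remark on why $\bar\tau^M_{\mathcal M}(0)$ need not be small are clarifications the paper leaves implicit.
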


\begin{proof}
Fix \(1 \le r < s \le n\). By construction,
\begin{align*}
    \mathcal Y_r^{(i)}
    =
    \sigma(Z_{i_{(1)}},\dots,Z_{i_{(r)}}).
\end{align*}
Since
\begin{align*}
    i_{(1)} \le \cdots \le i_{(r)},
\end{align*}
we have
\begin{align*}
    \mathcal Y_r^{(i)}
    \subseteq
    \sigma(Z_1,\dots,Z_{i_{(r)}}).
\end{align*}
Therefore, by Lemma~\ref{lemma:zeta_monotone_sigma_algebra},
\begin{align*}
    \tau(\mathcal Y_r^{(i)}, Y_s^{(i)})
    &=
    \tau(\mathcal Y_r^{(i)}, Z_{i_{(s)}})
    \\
    &\le
    \tau\bigl(\sigma(Z_1,\dots,Z_{i_{(r)}}), Z_{i_{(s)}}\bigr)
    \\
    &\le
    \bar\tau_{\mathcal M}^M\bigl(i_{(s)} - i_{(r)}\bigr).
\end{align*}
This proves the first claim.

Taking the supremum over \(1 \le r \le n-\ell\) yields
\begin{align*}
    \tau_{\mathcal Y^{(i)}}^n(\ell)
    \le
    \sup_{1 \le r \le n-\ell}
    \bar\tau_{\mathcal M}^M\bigl(i_{(r+\ell)} - i_{(r)}\bigr).
\end{align*}
Finally, if
\begin{align*}
    \tau_{\mathcal M}^M(k) \le c_0 e^{-c_1 k},
    \qquad 1 \le k \le M-1,
\end{align*}
then for each \(r\),
\begin{align*}
    \bar\tau_{\mathcal M}^M\bigl(i_{(r+\ell)} - i_{(r)}\bigr)
    \le
    \max\Bigl\{
        \bar\tau_{\mathcal M}^M(0),
        \;
        c_0 e^{-c_1 (i_{(r+\ell)} - i_{(r)})}
    \Bigr\},
\end{align*}
which gives the stated bound.
\end{proof}

This shows that uniform sampling with replacement, which is standard in replay-based RL algorithms, is problematic since it does not guarantee exponentially mixing minibatches. In the next corollary, we show that this problem can be overcome by sampling uniformly \textbf{without} replacement.

\begin{remark}
\label{remark:uniform_sampling_without_replacement}
    If the sampling is \textbf{without} replacement, then the conditions of Proposition~\ref{prop:mixing_underlying_process_induces_mixing_minibatches} are satisfied. Specifically, the indices satisfy \(I_1 < I_2 < \dots < I_n\), and hence Proposition~\ref{prop:mixing_underlying_process_induces_mixing_minibatches} implies that the sampled minibatches are exponentially \(\tau\)-mixing.
\end{remark}

\subsection{Contiguous block sampler}
\label{subsec:contiguous_block_sampling}

We introduce a new sampler inspired by the blocking ideas of \citet{yu_rates} and show in the following proposition and corollary that if the underlying sequence is exponentially \(\tau\)-mixing, this sampler is guaranteed to produce exponentially \(\tau\)-mixing minibatches.

Given the buffer indices \(\{1, \dots, M\}\), this sampler constructs a number of contiguous blocks of length \(b\), each separated by a gap length \(a\), and then concatenates the blocks to form the minibatch.

\begin{proposition}[Contiguous block sampling]
\label{thm:contiguous_block_sampling}
    Let \((X_t)_{t \ge 1}\) be an exponentially \(\tau\)-mixing sequence with
    \begin{align*}
        \tau_X(k) \le c_0 \exp(-c_1 k), \qquad \forall k \ge 1,
    \end{align*}
    where
    \begin{align*}
        \tau_X(k) := \sup_{t \ge 1} \tau(\mc A_t, X_{t+k}),
        \qquad
        \mc A_t := \sigma(X_s : s \le t).
    \end{align*}
    Fix a starting point \(t_0 \ge 1\), sample size \(n \ge 3\), block length \(1 \le b \le n\), and gap length \(0 \le a \le b-2\). Write
    \begin{align*}
        q := \lfloor n/b \rfloor,
        \qquad
        0 \le r < b,
    \end{align*}
    to denote the number of full blocks of length \(b\). Then \(n = qb + r\).
    For \(1 \le i \le q\), define the starting index of the \(i\)-th full block by
    \begin{align*}
        s_i := t_0 + (i-1)(a+b),
    \end{align*}
    and define the \(i\)-th full block by
    \begin{align*}
        B_i := (X_{s_i}, X_{s_i+1}, \dots, X_{s_i+b-1}).
    \end{align*}
    If \(r \ge 1\), define the starting index of the remainder block by
    \begin{align*}
        s_{q+1} := t_0 + q(a+b),
    \end{align*}
    and define the remainder block by
    \begin{align*}
        B_{q+1} := (X_{s_{q+1}}, X_{s_{q+1}+1}, \dots, X_{s_{q+1}+r-1}).
    \end{align*}
    Let \(\mc Y = (Y_1,\dots,Y_n)\) be the concatenation of the blocks
    \begin{align*}
        B_1,B_2,\dots,B_q
    \end{align*}
    and, if \(r \ge 1\), the remainder block \(B_{q+1}\). Then
    \begin{align*}
        \tau_\mc Y^n(k) \le \tau_X(k) \le c_0 \exp(-c_1 k),
        \qquad 1 \le k \le n-1.
    \end{align*}
\end{proposition}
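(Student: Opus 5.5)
The plan is to reduce the statement to the same monotonicity argument used in the proof of Proposition~\ref{prop:mixing_underlying_process_induces_mixing_minibatches}. The sampled indices here are deterministic, so there is no averaging over a random index vector.

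\emph{Step 1: the index map.} Write the minibatch as \(Y_j = X_{\iota(j)}\), where \(\iota:[n]\to\mathbb N\) is the concatenated index map. For \(j=(i-1)b+u\) with \(1\le i\le q\) and \(0\le u\le b-1\), we have \(\iota(j)=s_i+u\). The remainder block is handled the same way, with \(i=q+1\) and \(0\le u\le r-1\). Two facts about \(\iota\) need checking. First, \(\iota\) is strictly increasing. Within a block it increases by \(1\). Across blocks it jumps from \(s_i+b-1\) to \(s_{i+1}=s_i+a+b\), an increment of \(a+1\ge 1\). Second, \(\iota\) is expansive on gaps: \(\iota(j+k)-\iota(j)\ge k\) for all admissible \(j,k\). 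This follows by telescoping, since every one-step increment is at least \(1\). The constraint \(a\le b-2\) plays no role here beyond \(a\ge 0\).

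\emph{Step 2: monotonicity.} Fix \(1\le k\le n-1\) and \(1\le j\le n-k\). Strict monotonicity of \(\iota\) gives
\[
\mathcal F_j^Y=\sigma(X_{\iota(1)},\dots,X_{\iota(j)})\subseteq \mathcal A_{\iota(j)}.
\]
Lemma~\ref{lemma:zeta_monotone_sigma_algebra} then yields
\[
\tau(\mathcal F_j^Y,Y_{j+k})\le \tau(\mathcal A_{\iota(j)},X_{\iota(j+k)})\le \tau_X\bigl(\iota(j+k)-\iota(j)\bigr).
\]

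\emph{Step 3: converting the gap to lag \(k\).} The last step is to pass from the lag \(\iota(j+k)-\iota(j)\ge k\) to lag \(k\). Here I expect the main subtlety: \(\tau_X(\cdot)\) is not assumed monotone in its argument. I would sidestep this by using the exponential bound directly:
\[
\tau_X(m)\le c_0e^{-c_1 m}\le c_0e^{-c_1k} \quad\text{for } m\ge k.
\]
Taking the supremum over \(j\) gives \(\tau_{\mathcal Y}^n(k)\le c_0e^{-c_1k}\).

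For the intermediate inequality \(\tau_{\mathcal Y}^n(k)\le\tau_X(k)\) as literally stated, I would note that \(\tau_X\) is in fact nonincreasing. For \(m\ge k\),
\[
\tau(\mathcal A_t,X_{t+m})=\tau(\mathcal A_t,X_{(t+m-k)+k})\le\tau(\mathcal A_{t+m-k},X_{t+m-k+k}),
\]
again by Lemma~\ref{lemma:zeta_monotone_sigma_algebra}, since \(\mathcal A_t\subseteq\mathcal A_{t+m-k}\). Taking suprema gives \(\tau_X(m)\le\tau_X(k)\), which closes the argument without relying on the exponential form.
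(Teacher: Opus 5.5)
Your proof is correct and follows the paper's route. The paper likewise checks that the concatenated indices are strictly increasing with one-step increments $1$ or $a+1$, so that $u_{j+k}-u_j\ge k$, and then invokes Proposition~\ref{prop:mixing_underlying_process_induces_mixing_minibatches}; that proposition's proof is exactly your inclusion $\mathcal F_j^Y\subseteq\mathcal A_{\iota(j)}$, followed by Lemma~\ref{lemma:zeta_monotone_sigma_algebra} and the exponential bound. Your extra argument that $\tau_X$ is nonincreasing (via $\mathcal A_t\subseteq\mathcal A_{t+m-k}$) is a worthwhile addition: it justifies the intermediate inequality $\tau_{\mathcal Y}^n(k)\le\tau_X(k)$, which the paper's argument only delivers in the weaker form $c_0e^{-c_1k}$.
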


\begin{proof}
For \(1 \le j \le n\), condition on the sampled indices and define
\begin{align*}
    \mc Y_j := \sigma(Y_1,\dots,Y_j).
\end{align*}
Let
\begin{align*}
    u_1 < u_2 < \cdots < u_n
\end{align*}
denote the ordered time indices corresponding to the concatenated sample \(\mc Y\), so that
\begin{align*}
    Y_j = X_{u_j},
    \qquad 1 \le j \le n.
\end{align*}
Since the blocks are contiguous and are concatenated in temporal order, the sequence \(u_1,\dots,u_n\) is strictly increasing. Moreover, consecutive sampled indices satisfy
\begin{align*}
    u_{j+1} - u_j =
    \begin{cases}
        1, & \text{if } Y_j \text{ and } Y_{j+1} \text{ belong to the same block},\\
        a+1, & \text{if } Y_j \text{ and } Y_{j+1} \text{ belong to consecutive blocks}.
    \end{cases}
\end{align*}
In particular,
\begin{align*}
    u_{j+1} - u_j \ge 1,
    \qquad 1 \le j \le n-1.
\end{align*}
Hence, for every \(1 \le j \le n-k\),
\begin{align*}
    u_{j+k} - u_j \ge k.
\end{align*}
This shows that contiguous block sampling preserves the order of the underlying process and guarantees that minibatch lag \(k\) corresponds to at least \(k\) time steps in the original sequence. Proposition~\ref{prop:mixing_underlying_process_induces_mixing_minibatches} then implies that the sampled minibatches are exponentially \(\tau\)-mixing.

\end{proof}

The following corollary shows that this also holds when sampling from a finite contiguous subsequence of the original process, which corresponds to sampling from a replay buffer.
\begin{corollary}[Contiguous block sampling from finite subsequence]
\label{cor:contiguous_block_sampling_buffer}
Consider the same setting as in Proposition~\ref{thm:contiguous_block_sampling}, except suppose now that the minibatches are drawn from the finite replay buffer \(\mc M = (Z_1,\dots,Z_M)\) as in~\eqref{eqn:buffer}. Write
\begin{align*}
    q := \lfloor n/b \rfloor,
    \qquad
    0 \le r < b
\end{align*}
to denote the number of full blocks of length \(b\). Then \(n = qb + r\).
For \(1 \le i \le q\), define the starting index of the \(i\)-th full block in the replay buffer by
\begin{align*}
    s_i := 1 + (i-1)(a+b),
\end{align*}
and define the \(i\)-th full block by
\begin{align*}
    B_i := (Z_{s_i}, Z_{s_i+1}, \dots, Z_{s_i+b-1}).
\end{align*}
Assume that all selected block indices lie inside the replay buffer. Equivalently, assume
\begin{align*}
    (q-1)(a+b) + b \le M
\end{align*}
when \(r=0\), and
\begin{align*}
    q(a+b) + r \le M
\end{align*}
when \(r \ge 1\).

As in Proposition~\ref{thm:contiguous_block_sampling} let \(\mc Y = (Y_1,\dots,Y_n)\) be the concatenation of the blocks.
and, if \(r \ge 1\), the remainder block \(B_{q+1}\). Then
\begin{align*}
    \tau_{\mc Y}^n(k)
    \le
    c_0 \exp(-c_1 k),
    \qquad 1 \le k \le n-1.
\end{align*}
\end{corollary}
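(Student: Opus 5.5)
The proof combines Lemma~\ref{lemma:exp_tau_mixing_buffer} with the index-gap argument from the proof of Proposition~\ref{thm:contiguous_block_sampling}. Since the buffer $\mc M=(Z_1,\dots,Z_M)$ with $Z_m=X_{t_0+m-1}$ is a contiguous subsequence of $(X_t)$, Lemma~\ref{lemma:exp_tau_mixing_buffer} (with $\zeta=\tau$) gives
\[
\tau_{\mc M}^M(k)\le \tau_X(k)\le c_0e^{-c_1k},\qquad 1\le k\le M-1.
\]
It therefore suffices to show that the block construction inside the buffer is an order-preserving, duplicate-free selection whose minibatch lag $k$ corresponds to a buffer lag of at least $k$.

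Concretely, let $u_1<\dots<u_n$ be the buffer indices of the concatenated blocks, so that $Y_j=Z_{u_j}$. The containment hypotheses $(q-1)(a+b)+b\le M$ (if $r=0$) and $q(a+b)+r\le M$ (if $r\ge1$) guarantee $u_n\le M$, so every $Y_j$ is a well-defined buffer entry. Within a block $u_{j+1}-u_j=1$, and across consecutive blocks $u_{j+1}-u_j=a+1\ge1$. Hence the indices are strictly increasing and $u_{j+k}-u_j\ge k$ for all $1\le j\le n-k$.

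Next, for fixed $1\le j\le n-k$, I use $\mc Y_j=\sigma(Z_{u_1},\dots,Z_{u_j})\subseteq\sigma(Z_1,\dots,Z_{u_j})$ together with Lemma~\ref{lemma:zeta_monotone_sigma_algebra}. This gives
\[
\tau(\mc Y_j,Y_{j+k})\le \tau\bigl(\sigma(Z_1,\dots,Z_{u_j}),Z_{u_j+(u_{j+k}-u_j)}\bigr)\le \tau_{\mc M}^M(u_{j+k}-u_j).
\]
This is at most $c_0e^{-c_1(u_{j+k}-u_j)}\le c_0e^{-c_1k}$ by monotonicity of the exponential. Taking the supremum over $j$ gives $\tau_{\mc Y}^n(k)\le c_0e^{-c_1k}$. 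Equivalently, one can check that the block sampler is order-preserving and duplicate-free on $\mc M$ and invoke Proposition~\ref{prop:mixing_underlying_process_induces_mixing_minibatches} directly.

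The only delicate point is the step $\tau_{\mc M}^M(m)\le c_0e^{-c_1 m}$ for the possibly large gap $m=u_{j+k}-u_j$. This step needs $m\le M-1$, which holds because $u_{j+k}\le M$ and $u_j\ge1$. It also relies on the bound being applied at the actual gap $m$ rather than at $k$, which is valid because $\tau_{\mc M}^M(m)$ is defined as a supremum over starting points $t\le M-m$ and $u_j\le M-m$. Since the block positions are deterministic, no averaging over random indices is needed, unlike in Proposition~\ref{prop:mixing_underlying_process_induces_mixing_minibatches}.
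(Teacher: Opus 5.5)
Your proposal is correct and follows the paper's route. The paper's proof only notes that the block sampler is order-preserving and duplicate-free (strictly increasing indices with $u_{j+k}-u_j\ge k$) and then invokes Proposition~\ref{prop:mixing_underlying_process_induces_mixing_minibatches}, whose proof is exactly the chain you write out inline: Lemma~\ref{lemma:exp_tau_mixing_buffer}, the containment $\mc Y_j\subseteq\sigma(Z_1,\dots,Z_{u_j})$, Lemma~\ref{lemma:zeta_monotone_sigma_algebra}, and the bound at the actual gap. You also check explicitly that the containment hypotheses give $u_n\le M$ and that $t=u_j$ is admissible in the supremum defining $\tau_{\mc M}^M(u_{j+k}-u_j)$, which the paper leaves implicit.
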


\begin{proof}
The proof proceeds exactly as in the proof of Proposition~\ref{thm:contiguous_block_sampling} to show that the sampler is \textit{order-preserving} and \textit{duplicate-free}. Therefore, Proposition~\ref{prop:mixing_underlying_process_induces_mixing_minibatches} implies that the minibatches are exponentially \(\tau\)-mixing and the coefficients satisfy
\begin{align*}
    \tau_{\mc Y}^n(k)
    \le
    c_0 e^{-c_1 k},
    \qquad 1 \le k \le n-1.
\end{align*}
\end{proof}

\begin{remark}[Circular wraparound]
The same conclusion remains valid for a circular wraparound version of the sampler, in which blocks that extend past the end of the replay buffer continue from the beginning of the buffer, provided that the sampled buffer indices are all distinct and the resulting sampled observations are reordered in increasing buffer index order before the truncated mixing coefficients are evaluated. Under this ordering, the same proof applies, since the sampled indices \(u_1 < \cdots < u_n\) again satisfy
\begin{align*}
    \mc Y_j \subseteq \sigma(Z_1,\dots,Z_{u_j})
    \qquad\text{and}\qquad
    u_{j+k} - u_j \ge k.
\end{align*}
Therefore, the same exponential bound
\begin{align*}
    \tau_{\mc Y}^n(k) \le c_0 e^{-c_1 k},
    \qquad 1 \le k \le n-1,
\end{align*}
continues to hold.
\end{remark}

\begin{remark}[Decay control] 
    Our contiguous block sampler provides some control over the decay in dependence of the subsampled sequence; larger gap lengths \(a\) and smaller block sizes \(b\) results in faster decay.
\end{remark}

\section{Experimental Details.}
\label{app:experimental_setting}

We study temporal dependence in two objects produced by DQN training: (i) state--action trajectories generated by a fixed trained policy, and (ii) minibatches sampled from the replay buffer during the course of the DQN algorithm. In both cases, our goal is not to estimate the exact population \(\tau\)-mixing coefficient directly, but rather a finite-sample, finite-memory proxy that is computable from observed samples and is closely aligned with the definition of \(\tau\)-dependence through the \(1\)-Wasserstein distance. Here, we describe the estimator used in our experiments, and provide details about reproducibility.

\subsection{Estimator details}
We describe the construction of our estimator used in the experiments.

\subsubsection{Relationship to \(1\)-Wasserstein distance}

By the Kantorovich-Rubinstein theorem, the \(1\)-Wasserstein distance admits the dual representation (see e.g., Theorem~2 of \citet{kuhn_wasserstein_duality})
\begin{align*}
    W_1(\mu, \nu) = \sup \left\{\int_{\mbb{R}^m}\phi(\xi) \mu(d\xi) - \int_{\mbb{R}^m} \phi(\xi')\nu(d\xi')\right\},
\end{align*}
where the supremum is taken over all \(1\)-Lipschitz functions. Comparing this to the definition of the \(\tau\)-mixing coefficient in ~\eqref{eqn:tau_general_setup} we immediately see that
\begin{align*}
    \tau(\mc A, X) = \norm{W_1(\mu_{X | \mc A}, \mu_X)}_1 = \mbb{E}\left[W_1(\mu_{X | \mc A}, \mu_X)\right].
\end{align*}
That is, the \(\tau\)-mixing coefficient \(\tau(\mc A, X)\) is precisely the expected \(1\)-Wasserstein distance between the marginal law \(\mu_X\) and the conditional law \(\mu_{X | \mc A}\). 

In this way, we can also relate the lag-\(k\) \(\tau\)-mixing coefficient to the expected \(1\)-Wasserstein distance. Let \((X_t)_{t \ge 1}\) be a \(\mbb{R}^d\)-valued stochastic process and let 
\begin{align*}
    \mathcal{F}_t = \sigma(X_1,\dots,X_t)
\end{align*}
be the \(\sigma\)-algebra generated by the past up to time \(t\). Then we have that
\begin{align}
\label{eqn:tau_k_wasserstein}
    \tau_X(k) = \underset{t \ge 1}{\sup} \;\mbb{E}\left[W_1\left(\mc{L}(X_{t+k} \mid \mc F_t), \mc{L}(X_{t+k})\right)\right],
\end{align}
where \(\mc{L}(X_{t+k})\) denotes the marginal law of \(X\) and \(\mc{L}(X_{t+k}\mid \mc{F}_t)\) denotes a conditional law given \(\mc F_t\).

\subsubsection{A finite-window \(\tau\)-mixing proxy}
\label{subsec:tau-proxy}

Consider the relationship in ~\eqref{eqn:tau_k_wasserstein}. 
Since conditioning on the full past is computationally infeasible in our setting, our estimation procedure replaces \(\mc F_t\) by a finite history window of length \(m\). This yields the truncated target
\begin{equation}
\label{eqn:tau_truncated_window}
    \tau_{X}^m(k) := \mathbb{E}\!\left[ W_1\!\left(
    \mathcal{L}(X_{t+m-1+k}\mid \sigma(X_t,\dots,X_{t+m-1})),
    \mathcal{L}(X_{t+m-1+k})
    \right)
    \right].
\end{equation}
Notice that, since 
\begin{align*}
    \sigma(X_t, \dots, X_{t+m-1}) \subseteq \sigma(X_1, \dots, X_{t+m-1}),\qquad \forall \,t \ge 1,
\end{align*}
it follows from Lemma~\ref{lemma:zeta_monotone_sigma_algebra} that, for any \(m \ge 0\)
\begin{align}
    \tau_X^m(k) \le \tau_X(k), \quad k \ge 1.
\end{align}
The estimator in our implementation should therefore be interpreted as an estimator for \(\tau_X^m(k)\), rather than as a direct estimator of the full \(\tau_X(k)\). 

\subsubsection{Construction of the estimator}
\label{subsec:estimator}

Suppose we observe a sample
\begin{align*}
    x_{1:T}=(x_1,\dots,x_T), \qquad x_t\in\mathbb{R}^d.
\end{align*}
Fix a lag \(k\geq 1\) and history length \(m\geq 1\). Define
\begin{align*}
    N_k := T-m-k+1.
\end{align*}
If \(N_k\leq 0\), then there are not enough observations to form a length-\(m\) conditioning window and a \(k\)-step-ahead target, so no estimate is produced for that lag; in our implementation, we set the estimate to zero in this case.

For each \(i=1,\dots,N_k\), define the history vector
\begin{equation}
\label{eqn:history_vector}
    Z_i:= \begin{bmatrix}
        x_i & x_{i+1} & \dots & x_{i+m-1}
    \end{bmatrix}
    \in \mathbb{R}^{md},
\end{equation}
and the corresponding future target
\begin{equation}
\label{eqn:future_target}
    Y_i := x_{i+m-1+k} \in \mathbb{R}^{d}.
\end{equation}
Thus, \((Z_i,Y_i)\) is the sample used to approximate the map
\begin{align*}
    z \mapsto \mathcal{L}(X_{t+m-1+k}\mid X_t,\dots,X_{t+m-1}=z).
\end{align*}

\paragraph{Global empirical law.}
The unconditional law of the future variable is approximated by the empirical measure
\begin{equation}
\label{eqn:global_law}
    \widehat{\mc{L}}_k(\cdot) := \frac{1}{N_k}\sum_{i=1}^{N_k} \delta_{Y_i}.
\end{equation}

\paragraph{Local conditional empirical law via \(k\)-nearest neighbors.}
To estimate the conditional law given a history \(Z_i\), we use a \(k\)-nearest-neighbor procedure in the space of history windows. Let \(r\) denote the number of nearest neighbors used in the estimation. Using Euclidean distance on \(\mathbb{R}^{md}\), let
\begin{align*}
    \mathcal{N}_i \subset \{1,\dots,N_k\}
\end{align*}
be the set of indices corresponding to the \(r\) nearest neighbors of \(Z_i\). 

The resulting local empirical measure is
\begin{equation}
    \label{eqn:local_conditional_law}
    \widehat{\mc L}_k(\cdot | Z_i) := \frac{1}{r}\sum_{j \in \mc{N}_i} \delta_{Y_j}
\end{equation}

\paragraph{Optimal transport discrepancy.}
For each \(i\), the estimator compares \(\widehat{\mc{L}}_k(\cdot)\) and \(\widehat{\mc{L}}_k(\cdot \mid Z_i)\) using the \(1\)-Wasserstein distance with Euclidean ground metric on the target space:
\begin{equation}
\label{eqn:local_wasserstein}
    \widehat{W}_{i,k} :=
    W_1 \left(\widehat{\mc{L}}_k(\cdot \mid Z_i),\widehat{\mc{L}}_k(\cdot) \right).
\end{equation}
Because both measures are discrete and uniformly weighted, this can be written as the optimal transport problem
\begin{equation}
\label{eqn:emd}
    \widehat{W}_{i,k}
    =
    \min_{\pi \in \Pi(a_r,a_{N_k})}
    \sum_{\alpha=1}^{r}\sum_{\beta=1}^{N_k}
    \pi_{\alpha\beta}
    \,
    \bigl\|
    Y_{j_\alpha}-Y_\beta
    \bigr\|_2,
\end{equation}
where \(\mathcal{N}_i=\{j_1,\dots,j_r\}\),
\begin{align*}
    a_r=\left(\frac1r,\dots,\frac1r\right)\in\mathbb{R}^r,
    \qquad a_{N_k}=\left(\frac1{N_k},\dots,\frac1{N_k}\right)\in\mathbb{R}^{N_k},
\end{align*}
and \(\Pi(a_r,a_{N_k})\) denotes the set of nonnegative couplings with these marginals. In the default configuration of our experiments, this quantity is computed using Earth Mover's Distance via \texttt{ot.emd2} from the python POT library \citep{pot_paper, pot_library}. 

\paragraph{Observed dependence score.}

Finally, we replace the supremum over \(t\) with a finite-window maximum to obtain our final estimate
\begin{equation}
\label{eqn:tau_obs}
    \widehat{\tau}_{X, \mathrm{obs}}^m(k)
    :=
    \max_{1 \le i \le N_k}
    \widehat{W}_{i,k}.
\end{equation}
This is the raw empirical dependence score at lag \(k\).

\subsubsection{Permutation centering}
\label{subsec:perm-centering}

Even under weak or no dependence, the quantity in \eqref{eqn:tau_obs} need not be exactly zero in finite samples, because it compares a local empirical distribution based on \(r\) points to a global empirical distribution based on \(N_k\) points. To reduce this finite-sample bias, we subtract a permutation baseline.

For each permutation replicate \(b=1,\dots,B\), draw a random permutation
\begin{align*}
    \sigma_b \in \mathfrak{S}_{N_k}
\end{align*}
and define permuted targets
\begin{align*}
    Y^{(b)}_j := Y_{\sigma_b(j)}, \qquad j=1,\dots,N_k.
\end{align*}
The neighbor sets are \emph{not} recomputed after permutation; instead, the same neighborhoods determined by the history vectors \((Z_i)\) are reused. 

Let \(\widetilde{\mathcal{N}}_i\) denote the index set used in the permutation step. Then the permuted local law is
\begin{equation}
\label{eqn:perm_laws}
    \widehat{\mc{L}}_k^{(b)}(\cdot \mid Z_i):=
    \frac{1}{|\widetilde{\mathcal{N}}_i|}
    \sum_{j\in \widetilde{\mathcal{N}}_i}
    \delta_{Y^{(b)}_j}, \qquad \widehat{\mc{L}}^{(b)}_k(\cdot) := \frac{1}{N_k}\sum_{j=1}^{N_k}\delta_{Y^{(b)}_j}.
\end{equation}
The corresponding transport discrepancy is
\begin{equation}
\label{eqn:perm_local}
    \widehat{W}^{(b)}_{i,k} := W_1\!\left(\widehat{\mc{L}}^{(b)}_{k}(\cdot \mid Z_i),\widehat{\mc{L}}^{(b)}_k(\cdot) \right),
\end{equation}
and the permutation baseline is
\begin{equation}
\label{eqn:perm_baseline}
    \widehat{\tau}_{X,\mathrm{perm}}^m(k)
    :=
    \frac{1}{B}
    \sum_{b=1}^{B}
    \left(
    \max_{1 \le i \le N_k}
    \widehat{W}^{(b)}_{i,k}
    \right).
\end{equation}

The final estimator used in our experiments is
\begin{equation}
\label{eqn:final_estimator}
    \widehat{\tau}^m_X(k) := \max\!\left\{ \widehat{\tau}_{X, \mathrm{obs}}^m(k) -\widehat{\tau}_{X, \mathrm{perm}}^m(k), \, 0 \right\}, \quad 1 \le k \le K,
\end{equation}
where \(K\) is the maximum lag to test for, and is specified by the user.
Thus, \(\widehat{\tau}(k)\) is a nonnegative, bias-corrected dependence score: it is large when nearby histories in \(Z\)-space induce future targets whose empirical law differs substantially from the unconditional law, and it is close to zero when this effect disappears after permutation centering.

Finally, we remark that when
\begin{align*}
    N_k - r \le 0,
\end{align*}
then all the \(N_k\) points are included in the \(r\) nearest neighbors, and hence
\begin{align*}
    \widehat{\mc L}_k(\cdot | Z_i) = \widehat{\mc L}_k(\cdot),
\end{align*}
and consequently,
\begin{align}
\label{eqn:tau_obs_cutoff}
    \widehat{\tau}_{X, \mathrm{obs}}^m(k) = 0, \qquad k \ge T - m + 1 - r.
\end{align}
In our experiments, we use the scikit-learn library \citep{sklearn, sklearn_api} to implement the KNN estimation with the parameter \(\texttt{leave\_one\_out=True}\). This means that the point itself is not considered when determining its neighboring set, resulting in one fewer available observations for estimation. In this case, we have that
\begin{align}
\label{eqn:tau_obs_cutoff_loo}
    \widehat{\tau}_{X, \mathrm{obs}}^m(k) = 0, \qquad k \ge T - m - r.
\end{align}

\subsubsection{Experimental protocol and objects estimated}
\label{subsec:objects-estimated}

We apply the estimator in~\eqref{eqn:final_estimator} to two types of sequences produced by DQN training: trajectories generated by a fixed trained policy, and minibatches sampled from the replay buffer during optimization.

\paragraph{Environments and DQN architecture.}
We consider three Gymnasium environments \citep{gymnasium}: \texttt{CartPole-v1}, \texttt{FrozenLake-v1}, and \texttt{LunarLander-v3}. In each case, the DQN agent uses a fully connected ReLU network with two hidden layers of width \(256\). The network maps states to action values and has architecture
\[
    \mathbf d =
    \begin{bmatrix}
    \mathrm{dim}(\mathcal S) & 256 & 256 & |\mathcal A|
    \end{bmatrix}.
\]
Thus, one forward pass produces \(Q\)-values for all actions. This is the standard DQN parameterization and avoids the alternative state--action architecture
\[
    \mathbf d' =
    \begin{bmatrix}
    \mathrm{dim}(\mathcal S)\times |\mathcal A| & 256 & 256 & 1
    \end{bmatrix},
\]
which would require one forward pass per action.

\paragraph{State--action representation.}
Each observation used by the dependence estimator is a state--action vector. Continuous states are represented by their raw observation vectors, while discrete states are represented by one-hot vectors. The selected action is appended as an additional coordinate. Thus, the sequence analyzed by the estimator has the form
\[
    X_t =
    \begin{bmatrix}
        S_t \\ A_t
    \end{bmatrix}.
\]
The same representation is used for trajectory-level and minibatch-level estimates.

\paragraph{Trajectory dependence.}
To estimate dependence in the underlying RL process, we first train a DQN agent in the given environment. After training, we fix the learned policy and generate rollouts using the greedy action-selection rule, with no exploration. For each rollout, we collect the resulting state--action sequence
\[
    (X_t)_{t=1}^T,
    \qquad
    X_t =
    \begin{bmatrix}
        S_t \\ A_t
    \end{bmatrix},
\]
and apply the estimator in~\eqref{eqn:final_estimator} to obtain
\[
    \widehat{\tau}(k), \qquad 1\leq k\leq K.
\]
For trajectory-level estimates, the lag \(k\) is ordinary environment-time lag along the rollout generated by the fixed greedy policy.

\paragraph{Minibatch dependence.}
To estimate dependence in the data used by the DQN regression step, we log every minibatch sampled from the replay buffer during training, before the corresponding optimizer step is performed. For a minibatch of size \(B_{\mathrm{mb}}\), the logged matrix is
\[
    \mathcal B_u =
    \begin{bmatrix}
        X_1 \\
        X_2 \\
        \vdots \\
        X_{B_{\mathrm{mb}}}
    \end{bmatrix}
    \in \mathbb R^{B_{\mathrm{mb}}\times(d+1)},
\]
where \(u\) indexes the gradient update. The ordering of the rows is the order in which samples are returned by the sampler. We then apply the estimator in~\eqref{eqn:final_estimator} separately to each logged minibatch.

For minibatch-level estimates, the lag \(k\) is therefore lag in the sampled minibatch order. It is not environment-time lag, and it is not lag across gradient updates. These estimates measure the residual dependence among the samples entering a single DQN update.

\subsubsection{Contiguous block replay sampler}
\label{subsec:block-sampler}

The minibatches used during training are sampled using a contiguous block sampler inspired by the blocking technique of \citet{yu_rates}. Rather than drawing all samples independently from the replay buffer, the sampler draws several contiguous blocks of replay-buffer indices. Each block has length \(b\), and successive blocks are separated by a gap of length \(a\). The sampled blocks are then concatenated to form the minibatch.

\subsubsection{Aggregation and visualization}
\label{subsec:aggregation}

Both trajectory-level and minibatch-level experiments produce a collection of estimated curves
\begin{align*}
    \widehat{\tau}^{(1)}(k), \dots, \widehat{\tau}^{(M)}(k),
    \qquad k=1,\dots,K,
\end{align*}
where \(M\) is the number of rollouts or the number of logged minibatches. The plotting script averages these estimates across replications and reports bands equal to \(\pm 2\) standard errors:
\[
\overline{\tau}(k)
=
\frac{1}{M}\sum_{m=1}^M \widehat{\tau}^{(m)}(k),
\qquad
\mathrm{SE}(k)
=
\frac{\mathrm{sd}(\widehat{\tau}^{(1)}(k),\dots,\widehat{\tau}^{(M)}(k))}{\sqrt{M}}.
\]
The randomness in the estimates come from several sources: (i) randomness in the underlying environment, (ii) empirical estimation of conditional and unconditional expectations, and (iii) possible randomness induced by initializations.
The figures therefore summarize how rapidly the empirical dependence score decays with lag for each experimental condition.

\subsubsection{Summary}
In summary, our estimator replaces the inaccessible conditional law in the definition of \(\tau\)-mixing by a \(k\)-nearest-neighbor empirical conditional distribution over length-\(m\) histories, measures its discrepancy from the unconditional empirical law using optimal transport, and subtracts a permutation baseline to reduce finite-sample bias. The resulting statistic is a practical, Wasserstein-based dependence proxy that can be applied uniformly to RL trajectories and replay-buffer minibatches.

\subsection{Reproducibility}
\label{subsec:reproducibility}

\subsubsection{Code to experiments}

All the code together with the data used for conducting our experiments, as well as instructions on how to use it can be found at:
\begin{center}
    \url{https://github.com/leonhalgryn/DQN-mixing-times}
\end{center}

\subsubsection{Compute Resources}
\label{subsubsec:compute_resources}
All experiments were run on a Linux server equipped with two \textit{Intel Xeon Gold 5217} processors (\(16\) physical cores, \(32\) threads total; \(3.0\) GHz base frequency) and \(62\) GB of system memory. We use only CPU for computation with optional parallelization over cores.

We note that, each experiment require significant runtime. On our linux server, this amounts to several days for completing the full experiments pipeline for a single environment; i.e., (i) training the DQN agent and logging the minibatches, (ii) estimating the mixing coefficients for these minibatches, and (iii) simulating trajectories in the environment using the trained policy and estimating the mixing coefficients of these trajectories.

We note that, due to the extensive runtime of the experiments, we were not able to conduct ablation studies, or repeat the experiments over multiple runs. I.e., the experiments shown were for a \textbf{single run}. 

\subsubsection{Parameters and hyperparameters}
We now outline the parameter and hyperparameters settings used in our experiments. Table~\ref{tab:dqn_parameter_descriptions} described the DQN parameters in our implementation, and Table~\ref{tab:dqn_params} records the hyperparmeter setting used in the respective environments. The hyperparameters used in the DQN algorithm were based on those found in open-source implementations online.
We use the same default estimator parameters given in Table~\ref{tab:default_experimental_configurations} for all experiments, and DQN hyperparameters detailed in Table~\ref{tab:dqn_params} for training the DQN algorithm in the respective environments. Table~\ref{tab:default_experimental_configurations} records the parameters used in our estimator \(\widehat{\tau}_X^m(k)\) described previously. Finally, we use the contiguous block sampler in all our experiments with a block and gap length (respectively) of \(a = 2\) and \(b = 42\). We remark that we did not conduct any hyperparameter optimization or experiment with different choices of (hyper-)parameters.

\begin{table}[hb]
\centering
\caption{DQN training parameters used in the experiments.}
\label{tab:dqn_parameter_descriptions}
\resizebox{\columnwidth}{!}{%
\begin{tabular}{lll}
\toprule
Parameter & Meaning & Brief description \\
\midrule
\(\gamma\) & Discount factor & Controls the weight placed on future rewards. \\
Buffer size & Replay-buffer capacity & Maximum number of transitions stored in memory. \\
Max episodes & Training horizon & Number of environment episodes used for training. \\
Minibatch size & Batch size & Number of replay samples used in each gradient update. \\
Initial \(\varepsilon\) & Initial exploration rate & Starting probability of taking a random action. \\
\(\varepsilon_{\min}\) & Minimum exploration rate & Final lower bound for \(\varepsilon\)-greedy exploration. \\
Exploration fraction & Annealing duration & Fraction of training over which \(\varepsilon\) is linearly decayed. \\
Learning rate & Optimizer step size & Step size used by Adam for updating the \(Q\)-network. \\
\(\tau_{{target}}\) & Target-update coefficient & Interpolation parameter soft updates to the target network. \\
Target period (\(T_\mathrm{target}\))& Target-update frequency & Number of environment steps between target-network updates. \\
Start time & Replay warm-up size & Minimum number of stored transitions before training begins. \\
Update frequency & Training frequency & Number of environment steps between optimization phases. \\
Number of updates & Updates per phase & Number of gradient updates performed at each optimization phase. \\
\bottomrule
\end{tabular}%
}
\end{table}

\begin{table}[ht]
\centering
\caption{DQN hyperparameter values used in the main experiments. All experiments use
minibatch size \(128\), discount factor \(\gamma=0.99\), learning rate \(10^{-3}\),
minimum exploration probability \(\varepsilon_{\min}=0.05\), and hidden-layer
widths \((256,256)\). The target-update coefficient is
\(\tau_{\mathrm{target}}=1.0\), corresponding to a hard target-network update
whenever the update is triggered.}
\label{tab:dqn_params}
\resizebox{\columnwidth}{!}{%
\begin{tabular}{lcccccc}
\toprule
Environment & Buffer size & Episodes & Initial \(\varepsilon\) & Expl. frac. & Update freq. & Target period \\
\midrule
\texttt{CartPole-v1}    & \(10^5\)        & \(1000\) & \(0.99\) & \(0.2\) & \(256\) & \(10\) \\
\texttt{FrozenLake-v1}  & \(5\cdot 10^4\) & \(5000\) & \(0.999\) & \(0.3\) & \(128\) & \(1000\) \\
\texttt{LunarLander-v3} & \(10^6\)        & \(2000\) & \(1.0\) & \(0.5\) & \(4\) & \(1000\) \\
\bottomrule
\end{tabular}%
}
\end{table}

\begin{table}[ht]
  \caption{Default configurations used in experiments.}
  \label{tab:default_experimental_configurations}
  \centering
  \begin{tabular}{lll}
    \toprule
        Variable & Description & Value \\
        \hline
        m &  \text{Conditioning depth} & 1 \\
        r & \text{Number of nearest neighbors} & 20\\
        B & \text{Number of permutations} & 1\\
        M & \text{Number of rollouts} & 500\\
    \bottomrule
  \end{tabular}
\end{table}

\section{Additional Theoretical Details.}
\label{app:more_theoretical_details}

\subsection{Ghost sample.}
\label{app:ghost_sample}
We construct the ghost sample \(\{\tilde{X}_i\}_{i \in [n]}\) on an enlarged probability space, such that:
\begin{enumerate}
    \item \(\tilde{X}_i \overset{d}{=} X_i\) for all \(i \in [n]\),
    \item the ghost sequence \(\{\tilde{X}_i\}_{i \in [n]}\) is independent of the training sample \(\{X_i\}_{i \in [n]}\),
    \item \(\{\tilde{X}_i\}_{i \in [n]}\) is stationary and exponentially \(\tau\)-mixing with the same coefficients as \((X_i)_{i \in [n]}\), i.e., \(\tau_{\tilde{X}}(k) = \tau_X(k)\) for all \(k \ge 1\).
\end{enumerate}

This construction allows us to write population norms as expectations over the ghost sample while preserving the dependence structure of the original process. This can be formally justified using the coupling arguments for \(\tau\)-mixing sequences introduced by \citet{dedecker_coupling_2004}.

In classical symmetrization arguments, the ghost sample is typically assumed to be i.i.d.\ and independent of the training data. However, in our setting, assuming an i.i.d.\ ghost sample would imply that it arises from a fundamentally different generative mechanism than the exponentially \(\tau\)-mixing training sample. Consequently, computing expectations with respect to such a ghost sample would not correspond to the same population object as the training sample. Instead, we construct the ghost sample to follow the same generative mechanism as the training data, preserving the \(\tau\)-mixing dependence structure. This ensures that expectations over the ghost sample remain meaningful and directly comparable to the training sample.

\subsection{Effective Sample Size.}
\label{app:effective_sample_size}
The effective sample size \(n_\mathrm{eff}\) quantifies the number of nearly independent observations present in a \(\tau\)-mixing sequence of length \(n\). We provide a general definition below, which is taken from Lemma~1 of \citet{liu_generalisation_tau_mixing_2025}. 
\begin{definition}[Effective sample size for 
\(\tau\)-mixing]
Let \((X_t)_{t \in \mbb{Z}}\) be a strictly stationary \(\tau\)-mixing process satisfying \(\norm{X_t}_2 \le B\), \(\mbb{E}[\norm{X_t}_2^2] = 0\), and \(\mbb{E}[\norm{X_t}_2^2] \le \sigma^2\), then the effective sample size is given by
\begin{align}
\label{eqn:effective_sample_size}
    n_\mathrm{eff} := \max\left\{1 \le m \le n\ \mid \tau \left(\left\lfloor \frac{n}{m}\right\rfloor \right)  \le \frac{B}{m} \vee \frac{\sigma}{\sqrt{m}} \right\}.
\end{align}
Moreover, for an exponentially \(\tau\)-mixing process with \(\tau_X(k) \le c_0 \exp(-c_1 k)\), the effective sample size satisfies
\begin{align*}
    n_\mathrm{eff} \ge \min \left\{\frac{n}{2} \cdot c_1 \cdot \left[1 \vee \log \left(\frac{c_0 \cdot c_1 \cdot n}{B}\right)\right]^{-1}, n\right\}.
\end{align*}
    
\end{definition}
Heuristically, one can partition the sequence into contiguous blocks of length \(m\) such that blocks separated by \(m\) steps are approximately independent; \(n_\mathrm{eff}\) is then roughly the number of such blocks.

This heuristic can be formalized using the block coupling construction of \citet{dedecker_coupling_2004} for \(\tau\)-mixing sequences. 
For a block length \(m\), one can construct a coupled block sequence with the same marginal distributions as the original blocks such that each block of length \(m\) is independent of the others, up to a coupling error bounded by \(\tau(m)\). 
Letting \(B = \lfloor n/m \rfloor\) denote the number of blocks, the total coupling error is at most \(B \cdot \tau(m)\). 
By the definition in~\eqref{eqn:effective_sample_size}, \(m\) is chosen so that this cumulative coupling error is negligible relative to the statistical fluctuations of interest: either of order \(1/B\) for mean-type bounds (\(\tau(m) \lesssim 1/m\)) or of order \(1/\sqrt{B}\) for variance-type bounds (\(\tau(m) \lesssim 1/\sqrt{m}\)).
Hence, the effective sample size can be interpreted as
\begin{align*}
    n_\mathrm{eff} \approx B \asymp \frac{n}{m},
\end{align*}
the number of nearly independent blocks.

Lemma~\ref{lemma:effective_sample_size_exp_decay_clean} shows that under exponential \(\tau\)-mixing, \(\tau(k) \lesssim \exp(-ck)\), the corresponding effective sample size satisfies
\begin{align*}
    n_\mathrm{eff} \asymp \frac{n}{\log n},
\end{align*}
justifying the choice of block length \(m \asymp \log n\).

\end{document}